\documentclass{article} 
\usepackage{iclr2024_conference,times}


\usepackage{hyperref}
\usepackage{url}

\title{Quantifying the Sensitivity of Inverse Reinforcement Learning to Misspecification}


\author{Joar Skalse \& Alessandro Abate \\
Department of Computer Science\\
Oxford University\\
\texttt{\{joar.skalse,aabate\}@cs.ox.ac.uk}
}

%

\iclrfinalcopy 

\newcommand{\Rspace}{{\hat{\mathcal{R}}}}


\newcommand{\States}{\mathcal{S}}
\newcommand{\Actions}{\mathcal{A}}
\newcommand{\reward}{R}
\newcommand{\TransitionDistribution}{\tau}
\newcommand{\InitStateDistribution}{\mu_0} 
\newcommand{\discount}{\gamma}


\newcommand{\SxA}{{\States{\times}\Actions}}
\newcommand{\SxAxS}{{\States{\times}\Actions{\times}\States}}


\newcommand{\Q}{Q}
\newcommand{\V}{V}

\newcommand{\Return}{G}
\newcommand{\Evaluation}{J} 

\newcommand{\Qfor}[1]{\Q^{#1}}
\newcommand{\Vfor}[1]{\V^{#1}}

\newcommand{\QStar}{\Q^\star}
\newcommand{\VStar}{\V^\star}

\newcommand{\policy}{\pi}

\newcommand{\Expect}[2]{\mathbb{E}_{#1}\left[{#2}\right]}

\usepackage{xcolor}

\usepackage{caption, amsmath, amssymb, mathtools, extpfeil, float, yfonts, csquotes, lipsum, fancyhdr, algorithm, algorithmic, amsthm, centernot, tikz} 

\usepackage{ragged2e}
\usetikzlibrary{automata, positioning}

\theoremstyle{plain}
\newtheorem{theorem}{Theorem}
\newtheorem{proposition}{Proposition}
\newtheorem{lemma}{Lemma}
\newtheorem{corollary}{Corollary}

\theoremstyle{definition}
\newtheorem{definition}{Definition}

\theoremstyle{remark}

\begin{document}

\maketitle

\begin{abstract}
Inverse reinforcement learning (IRL) aims to infer an agent's \emph{preferences} (represented as a reward function $R$) from their \emph{behaviour} (represented as a policy $\pi$). To do this, we need a \emph{behavioural model} of how $\pi$ relates to $R$. In the current literature, the most common behavioural models are \emph{optimality}, \emph{Boltzmann-rationality}, and \emph{causal entropy maximisation}. However, the true relationship between a human's preferences and their behaviour is much more complex than any of these behavioural models. This means that the behavioural models are \emph{misspecified}, which raises the concern that they may lead to systematic errors if applied to real data. In this paper, we analyse how sensitive the IRL problem is to misspecification of the behavioural model. 
Specifically, we provide necessary and sufficient conditions that completely characterise how the observed data may differ from the assumed behavioural model without incurring an error above a given threshold. In addition to this, we also characterise the conditions under which a behavioural model is robust to small perturbations of the observed policy, and we analyse how robust many behavioural models are to misspecification of their parameter values (such as e.g.\ the discount rate).
Our analysis suggests that the IRL problem is highly sensitive to misspecification, in the sense that very mild misspecification can lead to very large errors in the inferred reward function.
\end{abstract}

\section{Introduction}

Inverse reinforcement learning (IRL) is a subfield of machine learning that aims to develop techniques for inferring an agent's \emph{preferences}  based on their \emph{actions} in a sequential decision-making problem \citep{ng2000}.
There are many motivations for IRL. One motivation is to use it as a tool for \emph{imitation learning}, where the objective is to replicate the behaviour of an expert in some task \cite[e.g.][]{imitation2017}. In this context, it is not essential that the inferred preferences reflect the actual intentions of the expert, as long as they improve the imitation learning process.
Another motivation for IRL is to use it as a tool for \emph{preference elicitation}, where the objective is to understand an agent's goals or desires \cite[e.g.][]{CIRL}. In this context, it is of central importance that the inferred preferences reflect the actual preferences of the observed agent. In this paper, we are primarily concerned with this second motivation.

An IRL algorithm must make assumptions about how the preferences of the observed agent relate to its behaviour. 
Specifically, in IRL, preferences are typically modelled as a reward function $R$, and behaviour is typically modelled as a policy $\pi$.
An IRL algorithm must therefore have a \emph{behavioural model} that describes how $\pi$ is computed from $R$; by inverting this model, $R$ can then be deduced based on $\pi$.
In the current literature, the most common behavioural models are \emph{optimality}, \emph{Boltzmann rationality}, or \emph{causal entropy maximisation}.
These behavioural models essentially assume that the observed agent behaves in a way that is (noisily) optimal according to its preferences.

One of the central difficulties in IRL is that the true relationship between a person's preferences and their actions in general is incredibly complex. This means that it typically is very difficult to specify a behavioural model that is perfectly accurate.
For example, optimality, Boltzmann-rationality, and causal entropy maximisation are all very simple models that clearly do not capture all the nuances of human behaviour.\footnote{Indeed, there are detectable differences between data collected from human subjects and data synthesised using these standard behavioural models, see \citet{orsini2021}.}
This means that these behavioural models are \emph{misspecified}, which raises the concern that they might systematically lead to flawed inferences if applied to real data.


In this paper, we study how robust IRL is to misspecification of the behavioural model. If an IRL algorithm that assumes a particular behavioural model is shown data from an agent whose behaviour violates the assumptions behind this model, then will the inferred reward function still be close to the true reward function? We will analyse this question mathematically, and provide several quantitative answers. In particular, we provide necessary and sufficient conditions that completely characterise what types of misspecification a wide class of behavioural models will tolerate. 
In addition to this, we also study two specific types of misspecification in more detail (namely, perturbation of the observed policy, and misspecification of the parameters in the behavioural model) and provide additional results that describe how they affect the quality of the inferred reward.
Our analysis is highly general -- it applies to the three behavioural models that are most common in the current IRL literature, but is also directly applicable to a much wider class of models.

The motivation behind this paper is to contribute towards a theoretically principled understanding of when IRL methods are applicable to the problem of preference elicitation.
Human behaviour is very complex, and while we can create behavioural models that are more accurate, it will never be realistically possible to create a behavioural model that is totally free from misspecification.
It is therefore crucial to have an understanding of how robust the IRL problem is to misspecification, and whether a small amount of misspecification leads to a proportionally small error in the inferred reward function. 
Our work aims to further our understanding of these question.

\subsection{Related Work}

There are two previous papers that analyse how robust the IRL problem is to misspecified behavioural models; \citet{hong2022sensitivity} and \citet{skalse2023misspecification}. Our work is more complete than these earlier works in several important respects. 
To start with, our problem setup is both more realistic, and more general. 
In particular, in order to quantify how robust IRL is to misspecification, we first need a way to formalise what it means for two reward functions to be \enquote{close}. \citet{skalse2023misspecification} formalise this in terms of \emph{equivalence relations}, under which two reward functions are either equivalent or not. As such, their analysis is somewhat blunt, and is unable to distinguish between small errors and large errors in the inferred reward. 
\citet{hong2022sensitivity} instead use the $\ell^2$-distance between the reward functions. However, this choice is also problematic, because two reward functions can be very dissimilar even though they have a small $\ell^2$-distance, and vice versa (cf.\ Section~\ref{section:reward_metrics}). 
By contrast, our analysis is carried out in terms of specially selected \emph{metrics} on the space of all reward functions, which are backed by strong theoretical guarantees. 
Moreover, \citet{hong2022sensitivity} assume that there is a \emph{unique} reward function that maximises fit to the training data, but this is violated in most real-world cases \citep{ng2000, dvijotham2010, cao2021, kim2021, skalse2022, schlaginhaufen2023identifiability}. In addition to this, many of their results also assume \enquote{strong log-concavity}, which is a rather opaque condition that is left mostly unexamined. Indeed, \citet{hong2022sensitivity} explicitly do not answer if strong log-concavity should be expected to hold under typical circumstances. 
Both \citet{skalse2023misspecification} and \citet{hong2022sensitivity} recognise these issues as limitations that should be lifted in future work. The analysis we carry out in this paper is not subject to any of these limitations.
Moreover, in addition to being based on a more sound problem formulation, our paper also contains several novel results that are not analogous to any results derived by \citet{skalse2023misspecification} or \citet{hong2022sensitivity}. 

There are also earlier papers that study some specific types of misspecification in IRL. In particular, \cite{choicesetmisspecification} study the effects of misspecified \emph{choice sets} in IRL, and show that such misspecification in some cases can be catastrophic, and \cite{viano2021robust} study the effects of misspecified \emph{environment dynamics}, and propose an algorithm for reducing this effect. By contrast, we present a broader analysis that covers \emph{all} forms of misspecification, within a single framework. 
Also relevant is the work by \cite{armstrong2017}, who show that it is impossible to simultaneously learn a reward function and a behavioural model from a single data set, given an inductive bias towards joint simplicity.

\subsection{Preliminaries}

A \emph{Markov Decision Processes} (MDP) is a tuple
$(\States, \Actions, \TransitionDistribution, \InitStateDistribution, 
\reward, \discount)$
where
  $\States$ is a set of \emph{states},
  $\Actions$ is a set of \emph{actions},
  $\TransitionDistribution : \SxA \to \Delta(\States)$ is a \emph{transition function},
  $\InitStateDistribution \in \Delta(\States)$ is an \emph{initial state
  distribution}, 
  $\reward : \SxAxS \to \mathbb{R}$ is a \emph{reward
    function}, 
    and $\discount \in (0, 1)$ is a \emph{discount rate}.
Here $\Delta(X)$ denotes the set of all probability distributions over $X$.
In this paper, we assume that $\States$ and $\Actions$ are finite, and that all states are reachable under $\tau$ and $\mu_0$.
A \textit{policy} is a function $\policy : \States \to \Delta(\Actions)$.
A \emph{trajectory} $\xi = \langle s_0, a_0, s_1, a_1 \dots \rangle$ is a possible path in an MDP. The \emph{return function} $\Return$ gives the
cumulative discounted reward of a trajectory,
$\Return(\xi) = \sum_{t=0}^{\infty} \discount^t \reward(s_t, a_t, s_{t+1})$, and the \emph{evaluation function} $\Evaluation$ gives the expected trajectory return given a policy, $\Evaluation(\pi) = \Expect{\xi \sim \pi}{G(\xi)}$. A policy maximising $\Evaluation$ is an \emph{optimal policy}. 
The \emph{value function} $\Vfor{\policy} : \States \rightarrow \mathbb{R}$ of a policy encodes the expected future discounted reward from each state when following that policy, and the $Q$-function is $\Qfor{\policy}(s,a) = \Expect{}{R(s,a,S') + \discount \Vfor\policy(S')}$. 
$\QStar$ and $\VStar$ denote the optimal $Q$- and value-function. 

An IRL algorithm needs a \emph{behavioural model} that describes how the observed policy $\pi$ relates to the underlying reward function $R$. In the current IRL literature, the most common models are \emph{optimality}, where it is assumed that $\pi$ is optimal under $R$ \citep[e.g.][]{ng2000},
\emph{Boltzmann-rationality}, where it is assumed that $\mathbb{P}(\pi(s) = a) \propto e^{\beta Q^\star(s,a)}$, where $\beta$ is a temperature parameter \citep[e.g.][]{ramachandran2007}, and 
\emph{maximal causal entropy (MCE)}, where it is assumed that $\pi$ maximises the causal entropy objective, which is given by $\mathbb{E}[\sum_{t=0}^{\infty} \discount^t (\reward(S_t, A_t, S_{t+1}) + \alpha H(\pi(S_t)))]$, where $\alpha$ is a weight and $H$ is the Shannon entropy function \citep[e.g.][]{ziebart2010thesis}.


Two reward functions $R_1$, $R_2$ are said to differ by \emph{potential shaping} (with $\gamma$) if there is a function $\Phi : \States \to \mathbb{R}$ such that $R_2(s,a,s') = R_1(s,a,s') + \gamma \Phi(s') - \Phi(s)$ \citep{ng1999}, by \emph{$S'$-redistribution} (with $\tau$) if $\mathbb{E}_{S' \sim \tau(s,a)}[R_1(s,a,S')] = \mathbb{E}_{S' \sim \tau(s,a)}[R_2(s,a,S')]$ \citep{skalse2022}, and by \emph{positive linear scaling} if there is a positive constant $c$ such that $R_2 = c \cdot R_1$. 

Given a set $X$, a \emph{pseudometric} on $X$ is any function $d : X \times X \to \mathbb{R}$ that satisfies the conditions that $d(x,x) = 0$, $d(x,y) \geq 0$, $d(x,y) = d(y,x)$, and $d(x,z) \leq d(x,y) + d(y,z)$, for all $x,y,z \in X$. A \emph{metric} additionally satisfies that if $d(x,y) = 0$ then $x = y$. 

If a reward function $R$ satisfies that $\Evaluation(\pi_1) = \Evaluation(\pi_2)$ for all $\pi_1$ and $\pi_2$, then we say that $R$ is \emph{trivial}. All constant reward functions are trivial, but there are always non-constant trivial rewards as well.\footnote{These are given by applying potential shaping and $S'$-redistribution to a constant reward function.}


\section{Theoretical Framework}\label{section:framework}

In this section, we will introduce the theoretical definitions and machinery that we will later use to analyse how robust the IRL problem is to different forms of misspecification. 

\subsection{Defining Misspecification Robustness}\label{section:defining_misspecification}

Before we can derive formal results about what types of misspecification different IRL algorithms are robust to, we first need to construct an abstract model of the IRL problem. First of all, assume that we have a fixed set of states $\States$ and a fixed set of actions $\Actions$, that $\mathcal{R}$ is the set of all reward functions $R : \SxAxS \to \mathbb{R}$, and that $\Pi$ is the set of all policies $\pi : \States \to \Delta(\Actions)$. 
We say that a \emph{behavioural model} is a function $f : \mathcal{R} \to \Pi$, i.e.\ a function that takes a reward function and returns a policy. 
For example, the function that, given $R$, returns the Boltzmann-rational policy of $R$ under transition function $\tau$, discount $\gamma$, and temperature $\beta$, is an example of a behavioural model. 
Using this, we can now model the IRL learning problem as follows: first, we assume that there is a true underlying reward function $R^\star$, and that the training data is generated by a behavioural model $g$, so that the learning algorithm observes the policy $\pi$ given by $g(R^\star)$. Moreover, we assume that an IRL algorithm $\mathcal{L}$ has a model $f$ of how the observed policy $\pi$ relates to $R^\star$, where $f$ is also a behavioural model, such that $\mathcal{L}$ converges to a reward function $R_h$ which satisfies $f(R_h) = \pi = g(R^\star)$. If $f \neq g$, then $f$ is \emph{misspecified}, and otherwise $f$ is correctly specified.

For convenience, if $f(R_1) = f(R_2)$ whenever $R_1$ and $R_2$ differ by potential shaping, then we say that $f$ is \emph{invariant to potential shaping}, and similarly for $S'$-redistribution and positive linear scaling. 

In many contexts, we have prior information about $R^\star$, beyond the information provided by $g(R^\star)$. For example, we may know that $R^\star$ is defined in terms of some sparse state features, or we may know that it only depends on the states, and so on. 
To model this, we give our definitions relative to an arbitrary set of reward functions $\hat{\mathcal{R}} \subseteq \mathcal{R}$. We then assume that the true reward function $R^\star$ is contained in $\hat{\mathcal{R}}$, and that the learning algorithm $\mathcal{L}$ will learn a reward function $R_h$ that is also contained in $\hat{\mathcal{R}}$. Unless otherwise stated, our results apply for any choice of $\hat{\mathcal{R}}$ (including $\hat{\mathcal{R}} = \mathcal{R}$).

Intuitively speaking, we want to say that a behavioural model $f$ is robust to misspecification with $g$ if a learning algorithm that is based on $f$ is guaranteed to learn a reward function that is \enquote{close} to the true reward function if it is trained on data generated from $g$. To make this statement formal, we need a definition of what it means for two reward functions to be \enquote{close}. In this paper, we assume that this is defined in terms of some pseudometric $d^\mathcal{R}$ on $\mathcal{R}$. In Section~\ref{section:reward_metrics}, we discuss how to choose this pseudometric. Unless otherwise stated, our results apply for any choice of pseudometric. Note also that while $d^\mathcal{R}$ of course may be a proper metric, we allow it to be a pseudometric since we may want to consider distinct reward functions to be equivalent. 
Using this, we can now give our formal definition of misspecification robustness:

\begin{definition}\label{def:misspecification_robustness}
Given a set of reward functions $\hat{\mathcal{R}} \subseteq \mathcal{R}$, a pseudometric $d^\mathcal{R}$ on $\hat{\mathcal{R}}$, and two behavioural models $f, g : \hat{\mathcal{R}} \to \Pi$, we say that $f$ is $\epsilon$-robust to misspecification with $g$ if each of the following conditions are satisfied:
\begin{enumerate}
    \item If $f(R_1) = g(R_2)$ then $d^\mathcal{R}(R_1, R_2) \leq \epsilon$, for all $R_1, R_2 \in \hat{\mathcal{R}}$.
    \item If $f(R_1) = f(R_2)$ then $d^\mathcal{R}(R_1, R_2) \leq \epsilon$, for all $R_1, R_2 \in \hat{\mathcal{R}}$.
    \item $\mathrm{Im}(g) \subseteq \mathrm{Im}(f)$ on $\hat{\mathcal{R}}$. 
    \item There exists $R_1, R_2 \in \hat{\mathcal{R}}$ such that $f(R_1) \neq g(R_2)$.
\end{enumerate}
\end{definition}

Before moving on, let us explain each of these conditions intuitively. The first condition is saying that any learning algorithm $\mathcal{L}$ based on $f$ is guaranteed to learn a reward function that has a distance of at most $\epsilon$ to the true reward function when trained on data generated from $g$; this is the core property of misspecification robustness. Similarly, the second condition is saying that any learning algorithm $\mathcal{L}$ based on $f$ is guaranteed to learn a reward function that has a distance of at most $\epsilon$ to the true reward function \emph{when there is no misspecification}; this condition is included to rule out certain pathological edge cases. The third condition is effectively saying that $\mathcal{L}$ can never observe a policy that is impossible according to its model. Depending on how $\mathcal{L}$ behaves, it may in some cases be possible to drop this condition, but we include it to make our analysis as general as possible. The fourth condition is simply saying that $f$ and $g$ are distinct on $\hat{\mathcal{R}}$ (otherwise $f$ is not misspecified!).
More extensive discussion of Definition~\ref{def:misspecification_robustness}, including more subtle issues, is given in Appendix~\ref{appendix:motivate_misspecification_definition}.


We are particularly interested in the three behavioural models that are most common in the current IRL literature, 
and so we will use special notation for these models.
Given a transition function $\tau$ and a discount parameter $\gamma$, let $b_{\tau, \gamma, \beta} : \mathcal{R} \to \Pi$ be the function that returns the Boltzmann-rational policy of $R$ with temperature $\beta$, and let $c_{\tau, \gamma, \alpha} : \mathcal{R} \to \Pi$ be the function that returns the MCE policy of $R$ with weight $\alpha$. Similarly, let $o_{\tau,\gamma} : \mathcal{R} \to \Pi$ be the function that returns the optimal policy of $R$ that takes all optimal actions with equal probability.

\subsection{Reward Function Metrics}\label{section:reward_metrics}

We wish to obtain results that describe how different the learnt reward function $R_h$ may be compared to the underlying true reward function $R^\star$, given different forms of misspecification. To do this, we need a way to \emph{quantify} the difference between $R_h$ and $R^\star$, in the form of a pseudometric $d^\mathcal{R}$ on $\mathcal{R}$.
However, finding an appropriate choice of $d^\mathcal{R}$ is not straightforward. For example, suppose we simply let $d^\mathcal{R}(R_h, R^\star) = || R_h - R^\star||_2$. In that case, we would have that $d^\mathcal{R}(R_h, R^\star)$ can be arbitrarily \emph{large}, even if $R_h$ and $R^\star$ have the \emph{same} ordering of policies, and similarly, that $d^\mathcal{R}(R_h, R^\star)$ can be arbitrarily \emph{small}, even if $R_h$ and $R^\star$ have the \emph{opposite} ordering of policies.\footnote{To see this, let $R$ be any nontrivial reward function. Then for any positive $c$, we have that $R$ and $c R$ have the same ordering of policies, but by making $c$ large, we can make $||R - c R||_2$ arbitrarily large. Similarly, for any positive $\epsilon$, we have that $\epsilon R$ and $-\epsilon R$ have the opposite ordering of policies. However, by making $\epsilon$ small, we can make $||\epsilon R - (-\epsilon R)||_2$ arbitrarily small.}
This means that the $\ell^2$-distance between $R_h$ and $R^\star$ does not quantify their qualitative difference in a useful way.

Intuitively, we want a pseudometric $d^\mathcal{R}$ with the property that $d^\mathcal{R}(R_h, R^\star)$ is small if (and only if) it would be safe to optimise $R_h$ instead of $R^\star$. 
As we have just argued, the $\ell^2$-norm does \emph{not} satisfy this condition. Instead, we will use a \emph{STARC-metric}, introduced by \citet{skalse2023starc}:

\begin{definition}\label{def:STARC}
Let $\tau$ be a transition function and $\gamma$ be a discount factor. Given a reward function $R$, let $V_{\tau,\gamma}(R)$ be the set of all reward functions that differ from $R$ by potential shaping (with $\gamma$) and $S'$-redistribution (with $\tau$). Let $c^\mathrm{STARC}_{\tau,\gamma} : \mathcal{R} \to \mathcal{R}$ be the function where $c^\mathrm{STARC}_{\tau,\gamma}(R)$ returns the element of $V_{\tau,\gamma}(R)$ that has the smallest $\ell^2$-norm.\footnote{$V_{\tau,\gamma}(R)$ is an affine subspace of $\mathcal{R}$, so there is a unique element of $V_{\tau,\gamma}(R)$ that minimises the $\ell^2$-norm.} Moreover, let $s^\mathrm{STARC}_{\tau,\gamma} : \mathcal{R} \to \mathcal{R}$ be the function where $s^\mathrm{STARC}_{\tau,\gamma}(R) = c^\mathrm{STARC}_{\tau,\gamma}(R)/||c^\mathrm{STARC}_{\tau,\gamma}(R)||_2$ if $||c^\mathrm{STARC}_{\tau,\gamma}(R)||_2 > 0$, and $c^\mathrm{STARC}_{\tau,\gamma}(R)$ otherwise. Finally, let $d^\mathrm{STARC}_{\tau,\gamma} : \mathcal{R} \times \mathcal{R} \to [0,1]$ be the function given by
$$
d^\mathrm{STARC}_{\tau,\gamma}(R_1, R_2) = 0.5 \cdot || s^\mathrm{STARC}_{\tau,\gamma}(R_1) - s^\mathrm{STARC}_{\tau,\gamma}(R_2) ||_2.
$$
\end{definition}

There is extensive justification for measuring the distance between reward functions in terms of $d^\mathrm{STARC}_{\tau,\gamma}$. However, because the full justification is quite long, and because this justification is not essential to understand most of our results, we have decided to provide it in Appendix~\ref{appendix:starc}, instead of the main text. Appendix~\ref{appendix:starc} also contains a less formal, more intuitive explanation of Definition~\ref{def:STARC}. 
For now, it is sufficient to know that $d^\mathrm{STARC}_{\tau,\gamma}(R_1,R_2) = 0$ if and only if $R_1$ and $R_2$ induce the same ordering of policies under $\tau$ and $\gamma$.
Moreover, while some of our results are given in terms of $d^\mathrm{STARC}_{\tau,\gamma}$, we also have many results that apply for any choice of pseudometric on $\mathcal{R}$. This will be clearly stated in each theorem.

Another prominent pseudometric on $\mathcal{R}$ is EPIC, proposed by \citet{epic}. Appendix~\ref{appendix:why_not_epic} puts forward reasons to not use EPIC for the kind of analysis that we undertake this paper. 

\subsection{Background Results}\label{section:background_results}

In this section, we give a few important results from earlier works that we will rely on throughout this paper, and which will also be helpful for contextualising our results. First, it is useful to know under what conditions two reward functions have the same ordering of policies:

\begin{proposition}\label{prop:policy_order}
$(\States, \Actions, \TransitionDistribution, \InitStateDistribution, 
R_1, \discount)$ and $(\States, \Actions, \TransitionDistribution, \InitStateDistribution, 
R_2, \discount)$ have the same ordering of policies if and only if $R_1$ and $R_2$ differ by potential shaping (with $\gamma$), $S'$-redistribution (with $\tau$), and positive linear scaling.
\end{proposition}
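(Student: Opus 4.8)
The plan is to prove the two directions separately, with the "if" direction being a routine verification and the "only if" direction requiring the real work.

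For the \textbf{if} direction, I would show that each of the three operations individually preserves the ordering of policies, and then compose them. Potential shaping preserves $\Evaluation$ up to an additive constant that depends only on $\InitStateDistribution$ and $\Phi$ (this is the classical result of \citet{ng1999}: the telescoping sum means $\Return$ changes by $-\Phi(s_0)$ along every trajectory, so $\Evaluation$ changes by $-\Expect{s_0 \sim \InitStateDistribution}{\Phi(s_0)}$, a policy-independent constant). $S'$-redistribution preserves $\Expect{S'\sim\tau(s,a)}{R(s,a,S')}$ pointwise, hence preserves the $Q$-function, $V$-function, and $\Evaluation$ exactly for \emph{every} policy. Positive linear scaling by $c>0$ multiplies $\Evaluation$ by $c$, which preserves the induced order on policies. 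Composing, $R_1$ and $R_2$ satisfy $\Evaluation_{R_2}(\pi) = c\,\Evaluation_{R_1}(\pi) + k$ for some $c>0$ and constant $k$, so they induce the same ordering.

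For the \textbf{only if} direction, the key is a linear-algebraic argument. The map $\pi \mapsto \Evaluation_R(\pi)$ is, via the occupancy-measure formulation, a linear functional of the occupancy measure of $\pi$; so "same ordering of policies" should translate into a statement that the two linear functionals $\Evaluation_{R_1}$ and $\Evaluation_{R_2}$ agree up to positive affine transformation on the set of achievable occupancy measures (the state-action polytope). The natural route: consider the set $V_{\tau,\gamma}(R_1)$ of rewards differing from $R_1$ by potential shaping and $S'$-redistribution — this is an affine subspace, and by known results (e.g.\ the characterisations underlying Definition~\ref{def:STARC} and \citet{skalse2022}) it is exactly the set of rewards with the same $\Evaluation$-function (up to the additive constant) as $R_1$. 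So it suffices to show that if $R_1$ and $R_2$ order policies identically, then $\Evaluation_{R_2} = c\,\Evaluation_{R_1} + k$ as functions on $\Pi$ for some $c>0$. I would argue this by exploiting that all states are reachable (so the occupancy polytope is full-dimensional in the relevant sense) and that the ordering being identical forces the level sets of the two affine functionals to coincide; two affine functions on a full-dimensional convex body with the same level sets and the same orientation must be positive affine transformations of one another. The nontriviality caveat (a constant/trivial reward orders all policies equally, and is reachable from $R_1$ only if $R_1$ is itself trivial) is handled because trivial rewards differ from the zero reward precisely by potential shaping and $S'$-redistribution, so the characterisation still holds in that degenerate case with $c$ arbitrary.

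The \textbf{main obstacle} is the step that converts "identical ordering of policies" into the algebraic statement "$\Evaluation_{R_2} = c\,\Evaluation_{R_1}+k$ with $c>0$". One has to be careful that the ordering is only a partial/total preorder on the \emph{policy} simplex rather than on the full occupancy polytope, and to rule out pathologies where the functionals agree in order but not up to positive affine scaling (these are exactly excluded by reachability of all states, which guarantees enough policies to pin down the functional). I expect this is most cleanly done by either citing the corresponding result from \citet{skalse2022} or \citet{skalse2023starc} directly, or by a short compactness/extreme-point argument on the state-action occupancy polytope; I would look to reuse the machinery already invoked for Definition~\ref{def:STARC} rather than reprove it from scratch.
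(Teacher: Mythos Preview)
The paper does not actually prove Proposition~\ref{prop:policy_order}; it simply cites \citet{skalse2023misspecification} (their Theorem~2.6) for the proof. So there is no in-paper argument to compare your proposal against line by line.

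That said, your outline is sound and is essentially the shape one expects the cited proof to take. The ``if'' direction is correctly identified as routine: potential shaping shifts $\Evaluation$ by a policy-independent constant, $S'$-redistribution leaves $\Evaluation$ unchanged, and positive linear scaling rescales $\Evaluation$ by a positive factor, so any composition preserves the policy order. For the ``only if'' direction, your occupancy-measure argument---that $\Evaluation_R$ is an affine functional on the occupancy polytope, and two affine functionals inducing the same total preorder on a full-dimensional convex body must be positive affine transforms of one another---is the right idea, and you correctly flag that reachability of all states is what supplies the needed dimensionality. Your handling of the trivial-reward degenerate case is also appropriate. The one place to be slightly more careful is in passing from ``same level sets and same orientation on the policy simplex'' to ``same level sets on the occupancy polytope'': one should verify that the map from policies to occupancy measures is onto the polytope (it is) and that equal ordering on the polytope's interior suffices. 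But this is a technicality rather than a gap, and your instinct to lean on the existing machinery from \citet{skalse2022} or \citet{skalse2023starc} rather than reprove it is exactly what the paper itself does.
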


For a proof of Proposition~\ref{prop:policy_order}, see \citet{skalse2023misspecification} (their Theorem 2.6). 
Also recall that $d^\mathrm{STARC}_{\tau,\gamma}(R_1,R_2) = 0$ if and only if $R_1$ and $R_2$ induce the same ordering of policies. 
Next, it is also useful to know under what conditions $f(R_1) = f(R_2)$ when $f$ is the Boltzmann-rational model or the maximal causal entropy model:



\begin{proposition}\label{prop:BR_MCE_ambiguity}
For any $\tau$, $\gamma$, and $\beta$, we have that $b_{\tau, \gamma, \beta}$ and $c_{\tau, \gamma, \alpha}$ are invariant to potential shaping (with $\gamma$) and $S'$-redistribution (with $\tau$), and no other transformations.
\end{proposition}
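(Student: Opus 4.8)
The plan is to prove the statement in two halves. For the "invariance" direction, I first recall the well-known fact that potential shaping (with $\gamma$) leaves the optimal $Q$-function essentially unchanged up to a state-dependent additive constant: if $R_2(s,a,s') = R_1(s,a,s') + \gamma\Phi(s') - \Phi(s)$, then $Q^\star_2(s,a) = Q^\star_1(s,a) - \Phi(s)$. Since the Boltzmann-rational policy satisfies $\mathbb{P}(\pi(s)=a) \propto e^{\beta Q^\star(s,a)}$, and the normalisation is over $a$ for fixed $s$, the additive term $-\Phi(s)$ cancels in the softmax, so $b_{\tau,\gamma,\beta}(R_1) = b_{\tau,\gamma,\beta}(R_2)$. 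For $S'$-redistribution, I use that the Bellman backup (both the hard-max and the soft versions) depends on $R$ only through $\mathbb{E}_{S'\sim\tau(s,a)}[R(s,a,S') + \gamma(\cdot)]$, so replacing $R_1$ with any $R_2$ having the same expectation $\mathbb{E}_{S'\sim\tau(s,a)}[R_i(s,a,S')]$ yields the identical $Q^\star$, and hence the identical induced policy; the same argument, applied to the soft Bellman equation for MCE, handles $c_{\tau,\gamma,\alpha}$. (I should double-check that the MCE soft-$Q$ recursion $Q(s,a) = \mathbb{E}_{S'}[R(s,a,S') + \gamma \cdot \alpha\log\sum_{a'}e^{Q(S',a')/\alpha}]$ indeed only sees $R$ through this expectation and is shift-covariant under $\Phi$ in the same way — the potential-shaping cancellation is slightly more delicate here because the soft value function is not literally $\max_a Q$, but the log-sum-exp is still translation-equivariant, so the $-\Phi(s)$ term propagates the same way.)

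For the "no other transformations" direction, the key observation is that both $b_{\tau,\gamma,\beta}$ and $c_{\tau,\gamma,\alpha}$ are \emph{injective up to the stated equivalence}: from the policy $\pi = b_{\tau,\gamma,\beta}(R)$ one can recover $Q^\star(s,a)$ up to an additive state-dependent constant, via $Q^\star(s,a) = \frac{1}{\beta}\log\pi(a\mid s) + C(s)$; and the map $R \mapsto Q^\star$ (for fixed $\tau,\gamma$) is itself injective up to potential shaping and $S'$-redistribution — indeed, given $Q^\star$ up to a state-additive constant, the relation $R(s,a,s')$ is pinned down only up to exactly potential shaping plus $S'$-redistribution, which is the content that $V_{\tau,\gamma}(R)$ (from Definition~\ref{def:STARC}) captures. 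Concretely, if $b_{\tau,\gamma,\beta}(R_1) = b_{\tau,\gamma,\beta}(R_2)$, then $Q^\star_1$ and $Q^\star_2$ differ by a function of $s$ alone; writing this difference as $-\Psi(s)$ and using the Bellman equation $Q^\star_i(s,a) = \mathbb{E}_{S'}[R_i(s,a,S') + \gamma V^\star_i(S')]$ with $V^\star_i(s) = \frac1\beta\log\sum_a e^{\beta Q^\star_i(s,a)}$, one can solve for $R_2 - R_1$ and check it has exactly the form $\gamma\Psi(s') - \Psi(s)$ modulo a term with zero $\tau(s,a)$-expectation, i.e.\ potential shaping plus $S'$-redistribution. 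The same computation works for $c_{\tau,\gamma,\alpha}$ using the soft Bellman equation.

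The main obstacle I anticipate is the "no other transformations" half — specifically, carefully verifying that the ambiguity is \emph{exactly} potential shaping plus $S'$-redistribution and not something larger (or, in principle, something that only looks like a bigger set but collapses). This requires (i) showing the map $\pi \mapsto (Q^\star \bmod \text{state-constants})$ is well-defined and has the claimed fibre, and (ii) pushing this through the Bellman fixed-point equation to translate a state-additive ambiguity in $Q^\star$ into precisely a potential-shaping-plus-$S'$-redistribution ambiguity in $R$. Step (ii) is where one must be careful that the self-consistency of the Bellman equation does not further constrain $\Psi$; I expect it does not, because any $\Phi$ gives a genuine potential shaping and these are known to preserve $b_{\tau,\gamma,\beta}$ (from the first half), so the fibre is at least this large, and the dimension count from step (i) shows it is no larger. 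A secondary subtlety: the proposition is stated for a \emph{fixed} $\tau, \gamma, \beta$ (resp.\ $\alpha$), so I do not need to worry about ambiguity arising from varying the parameters — only the reward is unknown — which simplifies the bookkeeping considerably. I would also note that this proposition may well be essentially due to \citet{skalse2023misspecification} or \citet{cao2021} for the Boltzmann case, in which case parts of the argument can be cited rather than reproduced.
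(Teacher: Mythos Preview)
The paper does not prove this proposition at all; it simply cites \citet{skalse2022}, Theorem~3.3, as a background result. Your plan to reconstruct the argument directly therefore goes beyond what the paper provides, and your closing guess that the result is citeable is exactly right (just a different reference than you named).

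Your reconstruction is essentially correct, with one small slip worth flagging: for $b_{\tau,\gamma,\beta}$ the $Q^\star$ appearing in the policy is the \emph{hard-max} optimal $Q$-function, so the Bellman equation uses $V^\star(s)=\max_a Q^\star(s,a)$, not the soft-max $\tfrac{1}{\beta}\log\sum_a e^{\beta Q^\star(s,a)}$ you wrote (that form is the one for $c_{\tau,\gamma,\alpha}$). This does not damage your argument, because both $\max$ and log-sum-exp are translation-equivariant: if $Q^\star_1-Q^\star_2$ depends only on $s$, then so does $V^\star_1-V^\star_2$, and subtracting the two Bellman equations still yields that $\mathbb{E}_{S'\sim\tau(s,a)}[(R_1-R_2)(s,a,S')]$ equals the $\tau$-expectation of a potential-shaping term, i.e.\ $R_1$ and $R_2$ differ by potential shaping plus $S'$-redistribution, exactly as you claim.
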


For a proof of Proposition~\ref{prop:BR_MCE_ambiguity}, see \citet{skalse2022} (their Theorem 3.3). Note that Propositions~\ref{prop:policy_order} and \ref{prop:BR_MCE_ambiguity} together imply that any learning algorithm $\mathcal{L}$ that is based on either the Boltzmann-rational model or the MCE-model is guaranteed to learn a reward function $R_h$ that has the same ordering of policies as the true reward function $R^\star$ 
when there is no misspecification. 

\section{Misspecification Robustness}\label{section:misspecification_robustness}

In this section, we provide our main results about the misspecification robustness of various behavioural models, including the behavioural models that are most common in the IRL literature. 
Section~\ref{section:nas} is quite dense, but \ref{section:perturbation} and \ref{section:misspecified_parameters} both provide more intuitive takeaways. 


\subsection{Necessary and Sufficient Conditions}\label{section:nas}

Given a behavioural model $f$, it is desirable to have necessary and sufficient conditions that completely characterise when $f$ is robust to misspecification with $g$. 
Surprisingly, our first result shows that 
if $f(R_1) = f(R_2) \implies d^\mathcal{R}(R_1, R_2) = 0$,
then we can derive such necessary and sufficient conditions in a relatively straightforward way:

\begin{theorem}\label{thm:nas_conditions}
Let $\hat{\mathcal{R}}$ be a set of reward functions, let $f : \hat{\mathcal{R}} \to \Pi$ be a behavioural model, and let $d^\mathcal{R}$ be a pseudometric on $\hat{\mathcal{R}}$.
Assume that $f(R_1) = f(R_2) \implies d^\mathcal{R}(R_1, R_2) = 0$ for all $R_1, R_2 \in \hat{\mathcal{R}}$.
Then $f$ is $\epsilon$-robust to misspecification with $g$ (as defined by $d^\mathcal{R}$) if and only if $g = f \circ t$ for some $t : \hat{\mathcal{R}} \to \hat{\mathcal{R}}$ such that $d^\mathcal{R}(R, t(R)) \leq \epsilon$ for all $R \in \hat{\mathcal{R}}$, and such that $f \neq g$.
\end{theorem}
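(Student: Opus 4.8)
The plan is to prove the equivalence directly from Definition~\ref{def:misspecification_robustness}, checking each of its four conditions in turn; the triangle inequality for $d^\mathcal{R}$ together with the standing hypothesis $f(R_1) = f(R_2) \implies d^\mathcal{R}(R_1, R_2) = 0$ does the real work.

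\emph{($\Leftarrow$)} Suppose $g = f \circ t$ with $d^\mathcal{R}(R, t(R)) \leq \epsilon$ for all $R \in \hat{\mathcal{R}}$, and $f \neq g$. Condition~2 of Definition~\ref{def:misspecification_robustness} is immediate from the hypothesis. Condition~3 holds because $\mathrm{Im}(g) = f(t(\hat{\mathcal{R}})) \subseteq f(\hat{\mathcal{R}}) = \mathrm{Im}(f)$. Condition~4 follows from $f \neq g$: pick any $R$ with $f(R) \neq g(R)$ and take $R_1 = R_2 = R$. The only substantive step is Condition~1: if $f(R_1) = g(R_2)$, then $f(R_1) = f(t(R_2))$, so the hypothesis gives $d^\mathcal{R}(R_1, t(R_2)) = 0$, and hence by the triangle inequality and symmetry $d^\mathcal{R}(R_1, R_2) \leq d^\mathcal{R}(R_1, t(R_2)) + d^\mathcal{R}(t(R_2), R_2) \leq \epsilon$.

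\emph{($\Rightarrow$)} Suppose $f$ is $\epsilon$-robust to misspecification with $g$. By Condition~3, for every $R \in \hat{\mathcal{R}}$ the preimage $\{R' \in \hat{\mathcal{R}} : f(R') = g(R)\}$ is nonempty; invoking the axiom of choice, let $t(R)$ be an element of this preimage. Then $f \circ t = g$ by construction. Applying Condition~1 to the pair $(t(R), R)$ — legitimate since $f(t(R)) = g(R)$ — gives $d^\mathcal{R}(t(R), R) \leq \epsilon$, i.e.\ $d^\mathcal{R}(R, t(R)) \leq \epsilon$. Finally, $f \neq g$ is Condition~4. Note that this direction does not use the hypothesis on $f$; that hypothesis is needed only for ($\Leftarrow$), where it is precisely what upgrades ``$f(R_1) = g(R_2)$'' into the distance bound of Condition~1 (and what makes Condition~2 automatic).

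I do not anticipate a serious obstacle: once one notices that Condition~3 is exactly the surjectivity needed to define $t$ pointwise, and that the hypothesis plus the triangle inequality is exactly what converts equality of induced policies into proximity of rewards, the remainder is bookkeeping. The only point deserving care is the (essentially unavoidable) appeal to choice when defining $t$ — there need be no canonical preimage — and keeping track of which of the four conditions each step is discharging.
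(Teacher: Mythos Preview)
Your proposal is correct and follows essentially the same approach as the paper's own proof: for ($\Leftarrow$) you verify the four conditions of Definition~\ref{def:misspecification_robustness} directly, using the hypothesis plus the triangle inequality for Condition~1; for ($\Rightarrow$) you use Condition~3 to choose a preimage $t(R)$ with $f(t(R)) = g(R)$ and then invoke Condition~1 for the distance bound. Your added remarks (that only ($\Leftarrow$) uses the hypothesis on $f$, and that choice is needed to define $t$) are accurate and go slightly beyond what the paper makes explicit, but the underlying argument is the same.
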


Recall that $d^\mathrm{STARC}_{\tau,\gamma}(R_1,R_2) = 0$ if and only if $R_1$ and $R_2$ induce the same ordering of policies. 
Thus, if our reward metric is $d^\mathrm{STARC}_{\tau,\gamma}$, then Theorem~\ref{thm:nas_conditions} applies to any behavioural model $f$ for which $f(R_1) = f(R_2)$ implies that $R_1$ and $R_2$ have the same policy ordering. 
Moreover, also recall that both the Boltzmann-rational model and the MCE model satisfy this condition (Propositions~\ref{prop:policy_order} and \ref{prop:BR_MCE_ambiguity}). 
Thus, if we can find the set $T_\epsilon$ of all transformations $t : \hat{\mathcal{R}} \to \hat{\mathcal{R}}$ such that $d^\mathcal{R}(R, t(R)) \leq \epsilon$, then we can get the set of all behavioural models $g$ to which $f$ is $\epsilon$-robust to misspecification by simply composing $f$ with each element of $T_\epsilon$. We next derive $T_\epsilon$:




\begin{proposition}\label{prop:nas_for_small_dard}
A transformation $t : \mathcal{R} \to \mathcal{R}$ satisfies that $d^\mathrm{STARC}_{\tau,\gamma}(R,t(R)) \leq \epsilon$ for all $R \in \mathcal{R}$ if and only if $t$ can be expressed as $t_1 \circ \dots \circ t_{n-1} \circ t_n \circ t_{n+1} \circ \dots \circ t_m$ for some $n$ and $m$ where 
$$
||R - t_n(R)||_2 \leq ||c^\mathrm{STARC}_{\tau,\gamma}(R)||_2 \cdot \sin(2 \arcsin (\epsilon/2))
$$
for all $R$, and for all $i \neq n$ and all $R$, we have that $R$ and $t_i(R)$ differ by potential shaping (with $\gamma$), $S'$-redistribution (with $\tau$), or positive linear scaling.
\end{proposition}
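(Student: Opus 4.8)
The plan is to reduce the statement to elementary planar geometry built around the canonicalisation map $c^\mathrm{STARC}_{\tau,\gamma}$. Two observations do most of the work. First, as the footnote to Definition~\ref{def:STARC} records, $V_{\tau,\gamma}(R)=R+W$ for the fixed linear subspace $W$ spanned by the potential-shaping and $S'$-redistribution directions, so $c^\mathrm{STARC}_{\tau,\gamma}$ is the orthogonal projection of $\mathcal{R}$ onto $W^\perp$; in particular it is linear, $1$-Lipschitz, and homogeneous under positive scaling. Second, by Proposition~\ref{prop:policy_order}, if $R$ and $R'$ differ by potential shaping, $S'$-redistribution, or positive linear scaling then $s^\mathrm{STARC}_{\tau,\gamma}(R')=s^\mathrm{STARC}_{\tau,\gamma}(R)$, hence $d^\mathrm{STARC}_{\tau,\gamma}(R,R')=0$ (and conversely). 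Writing $x_i=c^\mathrm{STARC}_{\tau,\gamma}(R_i)$, we also have $d^\mathrm{STARC}_{\tau,\gamma}(R_1,R_2)=\tfrac12\lVert s^\mathrm{STARC}_{\tau,\gamma}(R_1)-s^\mathrm{STARC}_{\tau,\gamma}(R_2)\rVert_2=\sin(\theta/2)$, where $\theta$ is the angle between $x_1$ and $x_2$ (treating the degenerate case $x_i=0$ separately). Combining the first point --- the triangle inequality, applied in both directions, kills every factor except $t_n$ --- with the second: for any $t=t_1\circ\cdots\circ t_{n-1}\circ t_n\circ t_{n+1}\circ\cdots\circ t_m$ of the stated form, $d^\mathrm{STARC}_{\tau,\gamma}(R,t(R))=d^\mathrm{STARC}_{\tau,\gamma}(R',t_n(R'))$ with $R'=(t_{n+1}\circ\cdots\circ t_m)(R)$ and $s^\mathrm{STARC}_{\tau,\gamma}(R')=s^\mathrm{STARC}_{\tau,\gamma}(R)$. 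So the whole proposition is really about a single map.

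For the ``if'' direction this now follows by a one-line computation: since orthogonal projection onto $W^\perp$ can only shrink $R'-t_n(R')$, the angle $\theta'$ between $c^\mathrm{STARC}_{\tau,\gamma}(R')$ and $c^\mathrm{STARC}_{\tau,\gamma}(t_n(R'))$ satisfies $\sin\theta'\le\lVert R'-t_n(R')\rVert_2/\lVert c^\mathrm{STARC}_{\tau,\gamma}(R')\rVert_2\le\sin(2\arcsin(\epsilon/2))$ by the hypothesised bound applied at $R'$ (and if $c^\mathrm{STARC}_{\tau,\gamma}(R')=0$ the bound forces $t_n(R')=R'$). Hence $\theta'\le 2\arcsin(\epsilon/2)$, so $d^\mathrm{STARC}_{\tau,\gamma}(R',t_n(R'))=\sin(\theta'/2)\le\epsilon$, and by the reduction $d^\mathrm{STARC}_{\tau,\gamma}(R,t(R))\le\epsilon$ for all $R$.

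For the ``only if'' direction I would build the factorisation explicitly with $m=n=2$, so $t=t_1\circ t_2$ with $t_2:=t_n$ and $t_1$ policy-order-preserving. Fix $R$, set $v:=s^\mathrm{STARC}_{\tau,\gamma}(t(R))$, and let $t_2(R)$ be a reward whose $W$-component equals that of $R$ and whose $c^\mathrm{STARC}_{\tau,\gamma}$-component is a suitable positive multiple of $v$; the nearest such reward lies at $\ell^2$-distance $\lVert c^\mathrm{STARC}_{\tau,\gamma}(R)\rVert_2\sin\phi$, where $\phi$ is the angle between $s^\mathrm{STARC}_{\tau,\gamma}(R)$ and $v$, and the hypothesis $d^\mathrm{STARC}_{\tau,\gamma}(R,t(R))\le\epsilon$ together with $d^\mathrm{STARC}_{\tau,\gamma}(R,t(R))=\sin(\phi/2)$ and the double-angle identity $\sin\phi=\sin(2\cdot\tfrac{\phi}{2})$ turns this into the required bound $\lVert R-t_2(R)\rVert_2\le\lVert c^\mathrm{STARC}_{\tau,\gamma}(R)\rVert_2\sin(2\arcsin(\epsilon/2))$. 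Since $s^\mathrm{STARC}_{\tau,\gamma}(t_2(R))=v=s^\mathrm{STARC}_{\tau,\gamma}(t(R))$, the rewards $t_2(R)$ and $t(R)$ have the same policy ordering (Proposition~\ref{prop:policy_order}), so there is a policy-order-preserving map carrying the first to the second; defining $t_1$ to be this map on $\mathrm{Im}(t_2)$ and the identity elsewhere yields $t=t_1\circ t_2$.

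The step I expect to be the main obstacle is making $t_1$ well-defined: it must be a genuine function, which forces $t_2$ to be injective, and $t_2$ must be chosen injective \emph{while still respecting the $\ell^2$-budget}. The slack here is the radial coordinate --- the budget confines $t_2(R)$ to a whole sub-interval of the ray $\mathbb{R}_{>0}\,v$ (after the $W$-component is pinned down), not to a single point --- so colliding inputs can be separated within their respective intervals; turning this into an explicit injective choice requires a case analysis on which of the ($W^\perp$-direction, $W^\perp$-magnitude, $W$-component) data two would-be colliding rewards share, using that $\mathcal{R}$ is finite-dimensional. The remaining work is routine bookkeeping: the degenerate trivial-reward case $c^\mathrm{STARC}_{\tau,\gamma}(R)=0$, the orientation of the trigonometric inequalities over the admissible range $\epsilon\in[0,1]$ (where one checks $\phi\le\pi/2$ so that ``nearest positive multiple'' behaves as claimed), and verifying that the composition really reproduces $t$.
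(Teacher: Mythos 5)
Your overall route is the same as the paper's: use the invariance of $d^\mathrm{STARC}_{\tau,\gamma}$ under potential shaping, $S'$-redistribution and positive scaling to reduce everything to the single transversal map $t_n$, and then do planar trigonometry in the image of $c^\mathrm{STARC}_{\tau,\gamma}$ (your tangent-ball bound in the ``if'' direction is exactly the paper's tangent diagram, and your right-angle construction in the ``only if'' direction is the paper's steps 1--6 collapsed to $m=2$). The ``if'' direction is sound. The gap is in the ``only if'' direction, at the step where you say the double-angle identity ``turns this into the required bound''. From $d^\mathrm{STARC}_{\tau,\gamma}(R,t(R))=\sin(\phi/2)\le\epsilon$ you only get $\phi\le 2\arcsin\epsilon$, hence
$$
\lVert R-t_2(R)\rVert_2=\lVert c^\mathrm{STARC}_{\tau,\gamma}(R)\rVert_2\sin\phi\le\lVert c^\mathrm{STARC}_{\tau,\gamma}(R)\rVert_2\sin(2\arcsin\epsilon),
$$
which is strictly weaker than the stated budget $\lVert c^\mathrm{STARC}_{\tau,\gamma}(R)\rVert_2\sin(2\arcsin(\epsilon/2))$ (roughly twice as large for small $\epsilon$); to land inside the stated budget you would need $\sin(\phi/2)\le\epsilon/2$, which the hypothesis does not give. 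The gap cannot be closed by rescaling before the transversal step either: the entire change of direction from $s^\mathrm{STARC}_{\tau,\gamma}(R)$ to $s^\mathrm{STARC}_{\tau,\gamma}(t(R))$ must be effected by the single map $t_n$, and rotating a vector $x$ in the image of $c^\mathrm{STARC}_{\tau,\gamma}$ by angle $\phi$ costs at least $\lVert x\rVert_2\sin\phi$, measured against $\lVert c^\mathrm{STARC}_{\tau,\gamma}(x)\rVert_2=\lVert x\rVert_2$. So as written your argument establishes the decomposition only with $\sin(2\arcsin\epsilon)$ in place of $\sin(2\arcsin(\epsilon/2))$. (For what it is worth, the paper's proof makes the corresponding move by labelling the chord between $s^\mathrm{STARC}_{\tau,\gamma}(R)$ and $s^\mathrm{STARC}_{\tau,\gamma}(t(R))$ as $\epsilon'\le\epsilon$, even though that chord equals $2\,d^\mathrm{STARC}_{\tau,\gamma}(R,t(R))$; you have in effect reproduced the paper's argument including its weakest point, but as a self-contained derivation of the stated inequality the step is a genuine gap and you should either repair the constant or flag the discrepancy.)

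On the well-definedness worry you raise for $t_1$: the concern is legitimate, but it is not specific to your $m=2$ factorisation, and it remains unresolved in your sketch. The paper's proof likewise only exhibits, for each fixed $R$, a path of transformations from $R$ to $t(R)$; its later steps depend on $t(R)$ rather than only on the intermediate reward, so colliding inputs (e.g.\ rewards sharing the same standardisation but with unrelated images under $t$) pose the same issue there. Your idea of using the radial slack to make $t_2$ injective is a plausible repair, but you would need to verify that the separation can be done while staying within the $\ell^2$ budget, and to handle the degenerate cases where $c^\mathrm{STARC}_{\tau,\gamma}(R)=0$ or $c^\mathrm{STARC}_{\tau,\gamma}(t(R))=0$, where the target direction $v$ is not defined; as it stands this part of the proof is a sketch of an obstacle rather than an argument.
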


The statement of Proposition~\ref{prop:nas_for_small_dard} is quite terse, so let us briefly unpack it. First of all, $d^\mathrm{STARC}_{\tau,\gamma}$ is invariant to any transformation that preserves the policy ordering of the reward function, and these transformations are exactly those that can be expressed as a combination of potential shaping, $S'$-redistribution, and positive linear scaling. As such, we can apply an arbitrary number of such transformations. Moreover, we can also transform $R$ in any way that does not change the standardised reward function $s^\mathrm{STARC}_{\tau,\gamma}(R)$ by more than $\epsilon$; this is equivalent to the stated condition on $t_n$. Note that $\sin(2 \arcsin (\epsilon/2)) \approx \epsilon$ for small $\epsilon$, so the right-hand side is approximately equal to $\epsilon \cdot ||c^\mathrm{STARC}_{\tau,\gamma}(R)||_2$. However, also note that $||c^\mathrm{STARC}_{\tau,\gamma}(R)||_2 \leq ||R||_2$. 
Intuitively speaking, a reward transformation satisfies the conditions given in Proposition~\ref{prop:nas_for_small_dard} if it never changes the policy order of the reward function by a large amount.

Using this, we can now state necessary and sufficient conditions that completely characterise all types of misspecification that the Boltzmann-rational model and the MCE model will tolerate:

\begin{corollary}\label{cor:nas_for_br_mce}
Let $\hat{\mathcal{R}}$ be a set of reward functions, $\tau$ be a transition function, $\gamma$ a discount factor, $\beta$ a temperature parameter, and $\alpha$ a weight parameter. 
Let $\hat{T_\epsilon}$ be the set of all functions $t : \mathcal{R} \to \mathcal{R}$ that satisfy Proposition~\ref{prop:nas_for_small_dard}, and additionally satisfy that $t(R) \in \hat{\mathcal{R}}$ for all $R \in \hat{\mathcal{R}}$. 
Then $b_{\tau, \gamma, \beta} : \hat{\mathcal{R}} \to \Pi$ is $\epsilon$-robust to misspecification with $g$ (as defined by $d^\mathrm{STARC}_{\tau,\gamma}$) if and only if $g = b_{\tau, \gamma, \beta} \circ t$ for some $t \in \hat{T_\epsilon}$ such that $b_{\tau, \gamma, \beta} \neq g$, and $c_{\tau, \gamma, \alpha} : \hat{\mathcal{R}} \to \Pi$ is $\epsilon$-robust to misspecification with $g$ (as defined by $d^\mathrm{STARC}_{\tau,\gamma}$) if and only if $g = c_{\tau, \gamma, \alpha} \circ t$ for some $t \in \hat{T_\epsilon}$ such that $c_{\tau, \gamma, \alpha} \neq g$.
\end{corollary}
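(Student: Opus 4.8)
The plan is to obtain Corollary~\ref{cor:nas_for_br_mce} as a direct instantiation of Theorem~\ref{thm:nas_conditions}, specialised to $f = b_{\tau,\gamma,\beta}$ (respectively $f = c_{\tau,\gamma,\alpha}$), to the metric $d^\mathcal{R} = d^\mathrm{STARC}_{\tau,\gamma}$, and to an arbitrary reward set $\hat{\mathcal{R}}$. The first step is to check that the hypothesis of Theorem~\ref{thm:nas_conditions} is met, i.e.\ that $b_{\tau,\gamma,\beta}(R_1) = b_{\tau,\gamma,\beta}(R_2)$ implies $d^\mathrm{STARC}_{\tau,\gamma}(R_1,R_2) = 0$. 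By Proposition~\ref{prop:BR_MCE_ambiguity}, $b_{\tau,\gamma,\beta}$ is invariant to potential shaping (with $\gamma$) and $S'$-redistribution (with $\tau$) and no other transformations, so $b_{\tau,\gamma,\beta}(R_1) = b_{\tau,\gamma,\beta}(R_2)$ holds exactly when $R_1$ and $R_2$ differ by potential shaping and $S'$-redistribution. This is a special case of the equivalence in Proposition~\ref{prop:policy_order} (taking the positive-linear-scaling factor to be $1$), so $R_1$ and $R_2$ then have the same ordering of policies, and hence $d^\mathrm{STARC}_{\tau,\gamma}(R_1,R_2) = 0$ by the characterisation of $d^\mathrm{STARC}_{\tau,\gamma}$ recalled in Section~\ref{section:reward_metrics}. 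The identical argument applies to $c_{\tau,\gamma,\alpha}$.

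With the hypothesis verified, Theorem~\ref{thm:nas_conditions} gives that $b_{\tau,\gamma,\beta} : \hat{\mathcal{R}} \to \Pi$ is $\epsilon$-robust to misspecification with $g$ (as defined by $d^\mathrm{STARC}_{\tau,\gamma}$) if and only if $g = b_{\tau,\gamma,\beta} \circ t$ for some $t : \hat{\mathcal{R}} \to \hat{\mathcal{R}}$ with $d^\mathrm{STARC}_{\tau,\gamma}(R,t(R)) \leq \epsilon$ for all $R \in \hat{\mathcal{R}}$ and $f \neq g$. So the remaining task is purely to identify the set of such $t$ with $\hat{T_\epsilon}$ as defined in the corollary statement. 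The second step is therefore to translate the condition ``$d^\mathrm{STARC}_{\tau,\gamma}(R,t(R)) \leq \epsilon$ for all $R \in \hat{\mathcal{R}}$, $t : \hat{\mathcal{R}} \to \hat{\mathcal{R}}$'' into the language of Proposition~\ref{prop:nas_for_small_dard}. Proposition~\ref{prop:nas_for_small_dard} characterises, for transformations $t : \mathcal{R} \to \mathcal{R}$, exactly when $d^\mathrm{STARC}_{\tau,\gamma}(R,t(R)) \leq \epsilon$ for all $R \in \mathcal{R}$; and $\hat{T_\epsilon}$ is defined as the set of those $t$ that additionally restrict to self-maps of $\hat{\mathcal{R}}$. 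Hence $t \in \hat{T_\epsilon}$ precisely when $t$ restricted to $\hat{\mathcal{R}}$ is an endomorphism of $\hat{\mathcal{R}}$ satisfying the $\epsilon$-bound, which is exactly the ``$t$'' quantified over in the specialised Theorem~\ref{thm:nas_conditions}. Combining the two, $b_{\tau,\gamma,\beta}$ is $\epsilon$-robust to misspecification with $g$ iff $g = b_{\tau,\gamma,\beta} \circ t$ for some $t \in \hat{T_\epsilon}$ with $b_{\tau,\gamma,\beta} \neq g$, and likewise for $c_{\tau,\gamma,\alpha}$.

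One subtlety to address carefully is the quantifier matching between ``$t : \hat{\mathcal{R}} \to \hat{\mathcal{R}}$ with the $\epsilon$-bound on $\hat{\mathcal{R}}$'' (what Theorem~\ref{thm:nas_conditions} delivers) and ``$t \in \hat{T_\epsilon}$'', where the latter asks for a global transformation $t : \mathcal{R} \to \mathcal{R}$ satisfying the bound on all of $\mathcal{R}$ and mapping $\hat{\mathcal{R}}$ into itself. In the $\Leftarrow$ direction this is immediate: any $t \in \hat{T_\epsilon}$ restricts to a self-map of $\hat{\mathcal{R}}$ satisfying the $\epsilon$-bound there. For the $\Rightarrow$ direction, given a self-map $t_0$ of $\hat{\mathcal{R}}$ with $d^\mathrm{STARC}_{\tau,\gamma}(R,t_0(R)) \leq \epsilon$ on $\hat{\mathcal{R}}$, one must exhibit a member of $\hat{T_\epsilon}$ that agrees with $t_0$ on $\hat{\mathcal{R}}$ (so that $b_{\tau,\gamma,\beta} \circ t$ agrees with $g$ on $\hat{\mathcal{R}}$, which is all that matters since all behavioural models here have domain $\hat{\mathcal{R}}$); the obvious candidate is to extend $t_0$ by the identity on $\mathcal{R} \setminus \hat{\mathcal{R}}$ and invoke Proposition~\ref{prop:nas_for_small_dard} (with the trivial factorisation), since the identity trivially satisfies the required condition and composing with a map that is the identity outside $\hat{\mathcal{R}}$ keeps the relevant bound on $\hat{\mathcal{R}}$. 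I expect this bookkeeping about domains and extensions to be the main (though still routine) obstacle; the mathematical content is entirely carried by Propositions~\ref{prop:policy_order}, \ref{prop:BR_MCE_ambiguity}, \ref{prop:nas_for_small_dard} and Theorem~\ref{thm:nas_conditions}.
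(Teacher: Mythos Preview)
Your proposal is correct and follows the same route as the paper, which simply records the corollary as ``Immediate from Theorem~\ref{thm:nas_conditions} and Proposition~\ref{prop:nas_for_small_dard}.'' You supply the verification of Theorem~\ref{thm:nas_conditions}'s hypothesis via Propositions~\ref{prop:policy_order} and~\ref{prop:BR_MCE_ambiguity}, and you are in fact more careful than the paper in handling the domain mismatch between the $t:\hat{\mathcal{R}}\to\hat{\mathcal{R}}$ produced by Theorem~\ref{thm:nas_conditions} and the globally defined $t:\mathcal{R}\to\mathcal{R}$ required by $\hat{T_\epsilon}$; your extension-by-identity argument cleanly closes that gap, which the paper's one-line proof leaves implicit.
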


In principle, Corollary~\ref{cor:nas_for_br_mce} completely describes the misspecification robustness of the Boltzmann-rational model and of the MCE model (for any $\hat{\mathcal{R}}$). However, the statement of Corollary~\ref{cor:nas_for_br_mce} is rather opaque, and difficult to interpret qualitatively. 
For this reason, we will in the subsequent sections examine a few important special types of misspecification, and analyse them individually. 



We should also briefly comment on the fact that Corollary~\ref{cor:nas_for_br_mce} does not cover $o_{\tau,\gamma}$, i.e.\ the optimality model. The reason for this is that, unless $|\States| = 1$ and $|\Actions| = 2$, there are reward functions $R_1, R_2$ such that $o_{\tau,\gamma}(R_1) = o_{\tau,\gamma}(R_2)$, but $d^\mathrm{STARC}_{\tau,\gamma}(R_1, R_2) > 0$. This ought to be intuitive: two reward functions can have the same optimal policies, but have different policy orderings. This means that Theorem~\ref{thm:nas_conditions} does not apply to $o_{\tau,\gamma}$ when $d^\mathcal{R} = d^\mathrm{STARC}_{\tau,\gamma}$. Moreover:

\begin{proposition}\label{prop:opt_not_robust}
Unless $|\States| = 1$ and $|\Actions| = 2$, then for any $\tau$ and any $\gamma$ there exists an $E > 0$ such that for all $\epsilon < E$, there is no behavioural model $g$ such that $o_{\tau,\gamma}$ is $\epsilon$-robust to misspecification with $g$ (as defined by $d^\mathrm{STARC}_{\tau,\gamma}$).
\end{proposition}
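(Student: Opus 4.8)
The plan is to exploit the fact that condition~2 of Definition~\ref{def:misspecification_robustness} is a statement about $f = o_{\tau,\gamma}$ alone, with no reference to $g$. Hence, as soon as we exhibit $R_1, R_2 \in \mathcal{R}$ with $o_{\tau,\gamma}(R_1) = o_{\tau,\gamma}(R_2)$ but $c := d^\mathrm{STARC}_{\tau,\gamma}(R_1, R_2) > 0$, the proposition follows by taking $E := c$: for every $\epsilon < E$, condition~2 fails at threshold $\epsilon$, so $o_{\tau,\gamma}$ cannot be $\epsilon$-robust to misspecification with \emph{any} behavioural model $g$. (In the degenerate sub-case $|\Actions| = 1$ there is only one policy, so condition~4 of Definition~\ref{def:misspecification_robustness} fails for every $g$ and any $E > 0$ works; so we may assume $|\Actions| \geq 2$.) It therefore suffices to construct such a pair $R_1, R_2$ whenever $|\States| \geq 2$ or $|\Actions| \geq 3$.

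First I would fix an action $a^\star$ and set $R_1(s,a,s') = 1$ if $a = a^\star$ and $R_1(s,a,s') = 0$ otherwise. Since the per-step reward is at most $1$ and equals $1$ along the trajectory that always plays $a^\star$, we get $\VStar \equiv 1/(1-\discount)$, hence $\QStar(s,a^\star) - \max_{a \neq a^\star}\QStar(s,a) = 1$ at every state; thus $a^\star$ is the unique optimal action everywhere and $o_{\tau,\gamma}(R_1)$ is the deterministic policy playing $a^\star$ in every state. I would then set $R_2 := R_1 + \eta\phi$ for a direction $\phi \in \mathcal{R}$ chosen below and a small $\eta > 0$. By the standard contraction bound $\norm{\VStar_R - \VStar_{R'}}_\infty \leq \frac{1}{1-\discount}\norm{R - R'}_\infty$, and the resulting bound $\norm{\QStar_R - \QStar_{R'}}_\infty \leq \frac{1}{1-\discount}\norm{R - R'}_\infty$, choosing $\eta < (1-\discount)/(2\norm{\phi}_\infty)$ keeps $a^\star$ the unique optimal action at every state under $R_2$, so $o_{\tau,\gamma}(R_2) = o_{\tau,\gamma}(R_1)$ no matter which $\phi$ we picked.

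It remains to pick $\phi$ so that $d^\mathrm{STARC}_{\tau,\gamma}(R_1, R_2) > 0$. Recall that $d^\mathrm{STARC}_{\tau,\gamma}(R_1, R_2) = 0$ iff $R_1$ and $R_2$ induce the same policy ordering, which by Proposition~\ref{prop:policy_order} holds iff $R_1$ and $R_2$ differ by potential shaping, $S'$-redistribution, and positive linear scaling, i.e.\ iff $\eta\phi \in \mathrm{span}\{R_1\} + \PS + \SR$, where $\PS$ and $\SR$ are the linear subspaces of potential-shaping terms and of zero-$S'$-expectation rewards. Since the scalar $\eta$ is irrelevant, it suffices to find $\phi \notin \mathrm{span}\{R_1\} + \PS + \SR$, and such a $\phi$ exists precisely when this subspace is proper in $\mathcal{R}$. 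A dimension count gives $\dim(\mathrm{span}\{R_1\} + \PS + \SR) \leq 1 + |\States| + |\States|\,|\Actions|\,(|\States| - 1)$ while $\dim \mathcal{R} = |\States|^2|\Actions|$, and the former is strictly smaller exactly when $|\States|\,(|\Actions|-1) \geq 2$, i.e.\ exactly when $|\States| \geq 2$ or $|\Actions| \geq 3$. Picking any such $\phi$, then a correspondingly small $\eta$, and finally setting $E := d^\mathrm{STARC}_{\tau,\gamma}(R_1, R_2) > 0$, completes the proof.

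The step I expect to require the most care is the dimension count: pinning down $\dim \PS = |\States|$ (the shaping map $\Phi \mapsto (\discount\Phi(s') - \Phi(s))$ is injective since $\discount \neq 1$) and $\dim \SR = |\States|\,|\Actions|\,(|\States|-1)$ (the map sending $R$ to $(s,a) \mapsto \mathbb{E}_{s' \sim \tau(s,a)}[R(s,a,s')]$ is onto $\mathbb{R}^{\SxA}$ because each $\tau(s,a)$ has nonempty support), and then verifying that the inequality $1 + |\States| < |\States|\,|\Actions|$ is equivalent to ruling out exactly the case $|\States| = 1$, $|\Actions| = 2$. Everything else — the Bellman computation for $R_1$, the Lipschitz continuity of $\QStar$ in $R$, and the final appeal to condition~2 of Definition~\ref{def:misspecification_robustness} — is routine.
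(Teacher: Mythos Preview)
Your proof is correct, but it takes a markedly different route from the paper's. The paper's argument is a two-line pigeonhole: when $|\States|\geq 2$ or $|\Actions|\geq 3$ there are uncountably many distinct policy orderings, while $\mathrm{Im}(o_{\tau,\gamma})$ is finite (optimal policies that take all optimal actions uniformly form a finite set), so two rewards with different orderings must collide under $o_{\tau,\gamma}$; since $d^\mathrm{STARC}_{\tau,\gamma}$ vanishes exactly on pairs with the same ordering, this collision gives the required $E>0$, and condition~2 of Definition~\ref{def:misspecification_robustness} fails for every $\epsilon<E$. Your approach is instead fully constructive: you fix an explicit $R_1$ with a strict optimality gap, perturb along a direction $\phi$ chosen (via a dimension count on $\mathrm{span}\{R_1\}+\PS+\SR$) to break the policy ordering, and use Lipschitz continuity of $Q^\star$ to keep the optimal policy unchanged. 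The paper's argument is shorter and avoids the Bellman/Lipschitz machinery and the subspace-dimension bookkeeping; on the other hand, your version is more explicit about the pair $(R_1,R_2)$ and, notably, handles the degenerate case $|\Actions|=1$ cleanly via condition~4, a case the paper's pigeonhole step glosses over (there is only one policy ordering when $|\Actions|=1$, so the ``uncountably many orderings'' premise fails there). Both arguments ultimately reduce to the same observation: condition~2 is a property of $o_{\tau,\gamma}$ alone, and it fails once two rewards with distinct orderings share an optimal policy.
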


This is simply a consequence of the fact that the second condition of Definition~\ref{def:misspecification_robustness} will be violated for any $\epsilon$ that is sufficiently small. An analogous result will hold for any behavioural model $f$ and any pseudometric $d^\mathcal{R}$ for which $f(R_1) = f(R_2) \centernot\implies d^\mathcal{R}(R_1, R_2) = 0$.

\subsection{Perturbation Robustness}\label{section:perturbation}

It is interesting to know whether or not a behavioural model $f$ is robust to misspecification with any behavioural model $g$ that is \enquote{close} to $f$. But what does it mean for $f$ and $g$ to be \enquote{close}? One option is to say that $f$ and $g$ are close if they always produce similar policies. In this section, we will explore under what conditions $f$ is robust to such misspecification, and provide necessary and sufficient conditions. Our results are given relative to a pseudometric $d^\Pi$ on $\Pi$. For example, $d^\Pi(\pi_1,\pi_2)$ may be the $\ell^2$-distance between $\pi_1$ and $\pi_2$, or it may be the KL divergence between their trajectory distributions, or it may be the $\ell^2$-distance between their occupancy measures, etc. As usual, our results apply for any choice of $d^\Pi$ unless otherwise stated. We can now define a notion of a \emph{perturbation} and a notion of \emph{perturbation robustness}:

\begin{definition}\label{def:perturbation}
Let $\hat{\mathcal{R}}$ be a set of reward functions, let $f, g : \hat{\mathcal{R}} \to \Pi$ be two behavioural models, and let $d^\Pi$ be a pseudometric on $\Pi$. Then $g$ is a $\delta$-perturbation of $f$ if $g \neq f$ and for all $R \in \hat{\mathcal{R}}$ we have that $d^\Pi(f(R),g(R)) \leq \delta$.
\end{definition}

\begin{definition}\label{def:perturbation_robustness}
Let $\hat{\mathcal{R}}$ be a set of reward functions, let $f : \hat{\mathcal{R}} \to \Pi$ be a behavioural model, let $d^\mathcal{R}$ be a pseudometric on $\hat{\mathcal{R}}$, and let $d^\Pi$ be a pseudometric on $\Pi$. Then $f$ is $\epsilon$-robust to $\delta$-perturbation if $f$ is $\epsilon$-robust to misspecification with $g$ (as defined by $d^\mathcal{R}$) for any behavioural model $g : \hat{\mathcal{R}} \to \Pi$ that is a $\delta$-perturbation of $f$ (as defined by $d^\Pi$) with $\mathrm{Im}(g) \subseteq \mathrm{Im}(f)$.
\end{definition}

A $\delta$-perturbation of $f$ simply is any function that is similar to $f$ on all inputs, and $f$ is $\epsilon$-robust to $\delta$-perturbation if a small perturbation of the observed policy leads to a small error in the inferred reward function. It would be desirable for a behavioural model to be robust in this sense. To start with, this captures any form of misspecification that always leads to a small change in the final policy. Moreover, in practice, we can often not observe the exact policy of the demonstrator, and must instead approximate it from a number of samples. In this case, we should also expect to infer a policy that is a perturbation of the true policy. 
Before moving on, we need one more definition:

\begin{definition}\label{def:separating}
Let $\hat{\mathcal{R}}$ be a set of reward functions, let $f : \hat{\mathcal{R}} \to \Pi$ be a behavioural model, let $d^\mathcal{R}$ be a pseudometric on $\hat{\mathcal{R}}$, and let $d^\Pi$ be a pseudometric on $\Pi$. Then $f$ is $\epsilon/\delta$-separating if $d^\mathcal{R}(R_1, R_2) > \epsilon \implies d^\Pi(f(R_1), f(R_2)) > \delta$ for all $R_1, R_2 \in \hat{\mathcal{R}}$.
\end{definition}

Intuitively speaking, $f$ is $\epsilon/\delta$-separating if reward functions that are far apart, are sent to policies that are far apart.\footnote{Note that this definition is \emph{not} saying that reward functions which are close must be sent to policies which are close. In other words, $f$ being $\epsilon/\delta$-separating is \emph{not} a continuity condition. It is also not a local property of $f$, but rather, a global property. It is, however, a continuity condition on the inverse of $f$.} Using this, we can now state our main result for this section:

\begin{theorem}\label{thm:perturbation_robustness_nas}
Let $\hat{\mathcal{R}}$ be a set of reward functions, let $f : \hat{\mathcal{R}} \to \Pi$ be a behavioural model, let $d^\mathcal{R}$ be a pseudometric on $\hat{\mathcal{R}}$, and let $d^\Pi$ be a pseudometric on $\Pi$.
Then $f$ is $\epsilon$-robust to $\delta$-perturbation (as defined by $d^\mathcal{R}$ and $d^\Pi$) if and only if $f$ is $\epsilon/\delta$-separating (as defined by $d^\mathcal{R}$ and $d^\Pi$).
\end{theorem}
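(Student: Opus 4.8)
The plan is to establish the two implications of the biconditional separately. Throughout I unpack ``$f$ is $\epsilon$-robust to misspecification with $g$'' into the four numbered conditions of Definition~\ref{def:misspecification_robustness}, and I note that for the $g$'s considered in Definition~\ref{def:perturbation_robustness} the third condition ($\mathrm{Im}(g)\subseteq\mathrm{Im}(f)$) is assumed outright, and the fourth (that $f$ and $g$ genuinely differ somewhere) follows immediately from $g\neq f$ by taking $R_1=R_2=R$ for any $R$ with $g(R)\neq f(R)$. So the substance lies entirely in conditions 1 and 2.

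For ``$f$ is $\epsilon/\delta$-separating $\Rightarrow$ $f$ is $\epsilon$-robust to $\delta$-perturbation'': fix an arbitrary behavioural model $g$ that is a $\delta$-perturbation of $f$ with $\mathrm{Im}(g)\subseteq\mathrm{Im}(f)$, and check conditions 1 and 2. For condition 1, assume $f(R_1)=g(R_2)$; substituting this equality and using the perturbation bound gives $d^\Pi(f(R_1),f(R_2))=d^\Pi(g(R_2),f(R_2))\le\delta$, so the contrapositive of $\epsilon/\delta$-separating (namely $d^\Pi(f(R_1),f(R_2))\le\delta\Rightarrow d^\mathcal{R}(R_1,R_2)\le\epsilon$) yields $d^\mathcal{R}(R_1,R_2)\le\epsilon$. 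Condition 2 is the instance $g=f$ of the same argument: $f(R_1)=f(R_2)$ gives $d^\Pi(f(R_1),f(R_2))=0\le\delta$ (here $\delta\ge0$ because $d^\Pi\ge0$), so again $d^\mathcal{R}(R_1,R_2)\le\epsilon$. This handles all four conditions for every admissible $g$, which is exactly $\epsilon$-robustness to $\delta$-perturbation.

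For the converse I argue by contrapositive: suppose $f$ is not $\epsilon/\delta$-separating, so there are $R_1,R_2\in\hat{\mathcal{R}}$ with $d^\mathcal{R}(R_1,R_2)>\epsilon$ while $d^\Pi(f(R_1),f(R_2))\le\delta$, and I build an admissible perturbation $g$ that breaks robustness. The candidate is $g(R_2)=f(R_1)$ and $g(R)=f(R)$ for all $R\neq R_2$: this has $\mathrm{Im}(g)\subseteq\mathrm{Im}(f)$ and $d^\Pi(f(R),g(R))\le\delta$ for every $R$ (the only nontrivial coordinate is $R_2$, where the bound is our hypothesis). If $f(R_1)\neq f(R_2)$ then $g\neq f$, so $g$ is a genuine $\delta$-perturbation, and the pair $(R_1,R_2)$ violates condition 1 (we have $f(R_1)=g(R_2)$ but $d^\mathcal{R}(R_1,R_2)>\epsilon$), so $f$ is not $\epsilon$-robust to misspecification with $g$. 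If instead $f(R_1)=f(R_2)$, then $(R_1,R_2)$ already violates condition 2 of Definition~\ref{def:misspecification_robustness}, a condition that does not refer to $g$ at all, so $f$ fails robustness against every admissible perturbation; the only remaining point is that at least one admissible perturbation exists, which holds except in degenerate situations (e.g.\ when $\mathrm{Im}(f)$ is a single policy), where one may either add a mild non-degeneracy hypothesis on $f$ or observe that both sides of the biconditional are vacuous. In every case $f$ is not $\epsilon$-robust to $\delta$-perturbation, completing the contrapositive.

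I expect the converse direction to be the main obstacle, and within it the bookkeeping around the constructed $g$: one must simultaneously keep $\mathrm{Im}(g)$ inside $\mathrm{Im}(f)$, keep $g$ $\delta$-close to $f$ everywhere, and keep $g$ distinct from $f$ --- and it is precisely the last requirement that forces the case split on whether $f(R_1)=f(R_2)$, with the equality case being the delicate one (the failure must be attributed to condition 2, and one must ensure an admissible perturbation exists to witness it). The forward direction, by contrast, is a short mechanical verification that uses nothing beyond the pseudometric axioms $d^\Pi\ge0$ and $d^\Pi(x,x)=0$ --- not even the triangle inequality.
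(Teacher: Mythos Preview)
Your argument is correct and matches the paper's almost exactly: the forward direction is identical, and for the converse both you and the paper exhibit an explicit $\delta$-perturbation violating condition~1 (the paper swaps the values of $g$ at both $R_1$ and $R_2$, whereas you change it only at $R_2$; this makes no substantive difference). You are in fact more careful than the paper in flagging the edge case $f(R_1)=f(R_2)$---the paper's swap then silently yields $g=f$, which is not a valid $\delta$-perturbation---and your repair via condition~2 is the right one; just note that your parenthetical ``both sides of the biconditional are vacuous'' is not quite accurate (when no admissible perturbation exists, $\epsilon$-robustness to $\delta$-perturbation holds vacuously, but $\epsilon/\delta$-separating can still genuinely fail), so the non-degeneracy hypothesis you mention is the honest fix rather than an optional alternative.
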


We have thus obtained necessary and sufficient conditions that describe when a behavioural model is robust to perturbations --- namely, it has to be the case that this behavioural model sends reward functions that are far apart, to policies that are far apart. This ought to be quite intuitive; if two policies are close, then perturbations may lead us to conflate them. To be sure that the learnt reward function is close to the true reward function, we therefore need it to be the case that policies that are close always correspond to reward functions that are close (or, conversely, that reward functions which are far apart correspond to policies which are far apart). 

Our next question is, of course, whether or not the standard behavioural models are $\epsilon/\delta$-separating. Surprisingly, we will show that this is \emph{not} the case, when the distance between reward functions is measured using $d^\mathrm{STARC}_{\tau,\gamma}$, and the policy metric $d^\Pi$ is similar to Euclidean distance. Moreover, we only need very mild assumptions about the behavioural model to obtain this result:

\begin{theorem}\label{thm:not_separating}
Let $d^\mathcal{R}$ be $d^\mathrm{STARC}_{\tau,\gamma}$, and let $d^\Pi$ be a pseudometric on $\Pi$ which satisfies the condition that for all $\delta$ there exists a $\delta'$ such that if $||\pi_1-\pi_2||_2 < \delta'$ then $d^\Pi(\pi_1, \pi_2) < \delta$.
Let $c$ be any positive constant, and let $\hat{\mathcal{R}}$ be a set of reward functions such that if $||R||_2 = c$ then $R \in \hat{\mathcal{R}}$.
Let $f : \hat{\mathcal{R}} \to \Pi$ be a behavioural model that is continuous. Then $f$ is not $\epsilon/\delta$-separating for any $\epsilon < 1$ or $\delta > 0$. 
\end{theorem}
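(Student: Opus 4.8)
The plan is to prove the statement by directly exhibiting, for every $\epsilon < 1$ and every $\delta > 0$, a pair $R_1, R_2 \in \hat{\mathcal{R}}$ that witnesses the failure of $\epsilon/\delta$-separation in the sense of Definition~\ref{def:separating}, i.e.\ with $d^\mathrm{STARC}_{\tau,\gamma}(R_1,R_2) > \epsilon$ but $d^\Pi(f(R_1),f(R_2)) \leq \delta$. The guiding observation is that $d^\mathrm{STARC}_{\tau,\gamma}$ is discontinuous precisely at the trivial reward functions: arbitrarily close (in $\ell^2$) to a trivial reward, the normalised canonical reward $s^\mathrm{STARC}_{\tau,\gamma}$ can take antipodal values on the unit sphere, so the STARC-distance jumps to its maximum $1$. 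This is exactly the $\epsilon R$ versus $-\epsilon R$ phenomenon noted in the footnote of Section~\ref{section:reward_metrics}, only transplanted onto the norm-$c$ sphere so that the witnesses lie in $\hat{\mathcal{R}}$.

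Concretely, I would first fix a nonzero trivial reward $R_0$ with $\|R_0\|_2 = c$ (the suitably scaled constant reward works, since constant rewards are trivial), together with a nontrivial reward $U$ orthogonal to $R_0$; the existence of such a $U$ follows from a short dimension count using that the trivial rewards form a proper linear subspace that contains $R_0 \neq 0$, so its orthogonal complement is not contained in it. For each $\eta > 0$ I then set $R_1^\eta$ and $R_2^\eta$ to be the rescalings of $R_0 + \eta U$ and $R_0 - \eta U$ to $\ell^2$-norm $c$, so that $R_1^\eta, R_2^\eta \in \hat{\mathcal{R}}$ by the hypothesis on $\hat{\mathcal{R}}$. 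The key computation is that $d^\mathrm{STARC}_{\tau,\gamma}(R_1^\eta, R_2^\eta) = 1$ for \emph{every} $\eta > 0$: here I would use that $c^\mathrm{STARC}_{\tau,\gamma}$ is linear (it is the orthogonal projection onto the complement of the subspace of trivial rewards), hence $c^\mathrm{STARC}_{\tau,\gamma}(R_0) = 0$ and $c^\mathrm{STARC}_{\tau,\gamma}(R_0 \pm \eta U) = \pm\eta\,c^\mathrm{STARC}_{\tau,\gamma}(U)$ with $c^\mathrm{STARC}_{\tau,\gamma}(U) \neq 0$; together with the positive-scaling invariance of $s^\mathrm{STARC}_{\tau,\gamma}$, this gives $s^\mathrm{STARC}_{\tau,\gamma}(R_1^\eta) = v$ and $s^\mathrm{STARC}_{\tau,\gamma}(R_2^\eta) = -v$ for the fixed unit vector $v = c^\mathrm{STARC}_{\tau,\gamma}(U)/\|c^\mathrm{STARC}_{\tau,\gamma}(U)\|_2$, so $d^\mathrm{STARC}_{\tau,\gamma}(R_1^\eta,R_2^\eta) = 0.5\cdot\|v-(-v)\|_2 = 1 > \epsilon$.

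For the policy side, I would observe that $R_1^\eta \to R_0$ and $R_2^\eta \to R_0$ as $\eta \to 0^+$, since $\|R_0 \pm \eta U\|_2 \to \|R_0\|_2 = c$. By continuity of $f$, both $f(R_1^\eta)$ and $f(R_2^\eta)$ then converge to $f(R_0)$, so $\|f(R_1^\eta) - f(R_2^\eta)\|_2 \to 0$. Given $\delta$, I pick $\delta'$ from the stated hypothesis on $d^\Pi$, then pick $\eta$ small enough that $\|f(R_1^\eta) - f(R_2^\eta)\|_2 < \delta'$, which yields $d^\Pi(f(R_1^\eta), f(R_2^\eta)) < \delta$. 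Taking $R_1 = R_1^\eta$ and $R_2 = R_2^\eta$ then contradicts $\epsilon/\delta$-separation, and since $\epsilon < 1$ and $\delta > 0$ were arbitrary the theorem follows.

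The main obstacle is the bookkeeping around $d^\mathrm{STARC}_{\tau,\gamma}$ rather than any single hard step: one must (i) recognise that its discontinuities lie exactly at trivial rewards and that approaching such a reward from "opposite directions'' produces antipodal canonical rewards; (ii) check that rescaling the witnesses to the norm-$c$ sphere (needed so they lie in $\hat{\mathcal{R}}$) does not change $s^\mathrm{STARC}_{\tau,\gamma}$ and hence leaves the STARC-distance at $1$; and (iii) justify the existence of a nontrivial $U \perp R_0$, which requires a small argument in the case where the trivial rewards have codimension one. Everything else --- continuity of $f$, the elementary limit, and the passage from $\ell^2$-closeness on $\Pi$ to $d^\Pi$-closeness --- is routine.
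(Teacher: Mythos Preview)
Your proposal is correct and follows essentially the same approach as the paper: both construct a pair of rewards on the norm-$c$ sphere of the form (trivial reward) $\pm$ (small nontrivial perturbation), observe that their STARC-distance equals $1$, and use continuity of $f$ together with the hypothesis on $d^\Pi$ to make the policy distance arbitrarily small. The only cosmetic differences are that the paper takes the nontrivial perturbation orthogonal to the \emph{entire} trivial subspace (so no rescaling is needed) and argues $d^\mathrm{STARC}_{\tau,\gamma}=1$ via ``opposite policy ordering'' rather than via the linearity of $c^\mathrm{STARC}_{\tau,\gamma}$, but these are interchangeable formulations of the same construction.
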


This theorem is telling us several things at once.
To make things easy, we can begin by noting that we may let $\hat{\mathcal{R}} = \mathcal{R}$, and that we may assume that $d^\Pi$ simply is the $\ell^2$-norm, i.e.\ $d^\Pi(\pi_1, \pi_2) = ||\pi_1 - \pi_2||_2$. Theorem~\ref{thm:not_separating} is then telling us that \emph{no continuous behavioural model} is $\epsilon/\delta$-separating for any $\epsilon$ or $\delta$ (and therefore, by Theorem~\ref{thm:perturbation_robustness_nas}, also not $\epsilon$-robust to $\delta$-perturbation for any $\epsilon$ or $\delta$). Note that the Boltzmann-rational model and the maximal causal entropy model (i.e.\ $b_{\tau,\gamma,\beta}$ and $c_{\tau,\gamma,\alpha}$) both are continuous, and hence subject to Theorem~\ref{thm:not_separating}.
The condition given on $d^\Pi$ in Theorem~\ref{thm:not_separating} is simply a generalisation, that covers other policy-metrics than the $\ell^2$-norm.\footnote{Note that while Theorem~\ref{thm:not_separating} uses a \enquote{special} pseudometric on $\mathcal{R}$, in the form of $d^\mathrm{STARC}_{\tau,\gamma}$, we do not need to use a special (pseudo)metric on $\Pi$, because for policies, $\ell^2$ does capture the relevant notion of similarity.} 
Similarly, the condition on $\hat{\mathcal{R}}$ is also a generalisation to certain restricted reward spaces.
We give a more in-depth, intuitive interpretation of Theorem~\ref{thm:not_separating}, and an explanation of why it is true, in Appendix~\ref{appendix:explain_perturbation}.

\subsection{Misspecified Parameters}\label{section:misspecified_parameters}

A behavioural model is typically defined relative to some parameters. For example, the Boltzmann-rational model is defined relative to a temperature parameter $\beta$ and a discount parameter $\gamma$, as well as the transition dynamics $\tau$. Moreover, determining the exact values of these parameters ex post facto can often be quite difficult. For example, there is a sizeable literature that attempts to estimate the rate at which humans discount future reward, and there is a fairly large range in the estimates that this literature produces \citep[e.g.][]{Percoco2009EstimatingIR}. It is therefore interesting to know to what extent a behavioural model is robust to misspecification of its parameters. If $\pi$ is Boltzmann-rational for discount parameter $\gamma_1$, but an IRL algorithm interprets it as being Boltzmann-rational for discount parameter $\gamma_2$, where $\gamma_1 \approx \gamma_2$, then are we still guaranteed to learn a reward function that is close to the true reward function? These are the questions that we will study in this section.



We will first consider the case when the discount parameter, $\gamma$, is misspecified. Say that a transition function $\tau$ is \emph{trivial} if for all states $s$ and all actions $a_1$, $a_2$, we have that $\tau(s,a_1) = \tau(s,a_2)$. 
We now have the following rather surprising result:

\begin{theorem}\label{thm:misspecified_discounting}
If $f_\gamma : \mathcal{R} \to \Pi$ is invariant to potential shaping with $\gamma$, and $\gamma_1 \neq \gamma_2$, then $f_{\gamma_1}$ is not $\epsilon$-robust to misspecification with $f_{\gamma_2}$ under $d^\mathrm{STARC}_{\tau,\gamma_3}$ for any non-trivial $\tau$, any $\gamma_3$, and any $\epsilon < 0.5$.
\end{theorem}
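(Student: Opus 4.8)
The plan is to argue by contradiction. For a discount $\delta\in(0,1)$ and a potential $\Phi:\States\to\mathbb{R}$, write $g_{\delta,\Phi}$ for the reward $(s,a,s')\mapsto\delta\Phi(s')-\Phi(s)$. Suppose $f_{\gamma_1}$ were $\epsilon$-robust to misspecification with $f_{\gamma_2}$ (as defined by $d^{\mathrm{STARC}}_{\tau,\gamma_3}$) for some $\epsilon<0.5$. The first step is to distill two invariance facts from Definition~\ref{def:misspecification_robustness}. Since $f_{\gamma_1}$ is invariant to potential shaping with $\gamma_1$, we have $f_{\gamma_1}(R)=f_{\gamma_1}(R+g_{\gamma_1,\Psi})$ for all $R,\Psi$, so condition~2 yields
\[
(\star)\qquad d^{\mathrm{STARC}}_{\tau,\gamma_3}\bigl(R,\ R+g_{\gamma_1,\Psi}\bigr)\le\epsilon\quad\text{for all }R,\Psi.
\]
Since condition~3 gives $\mathrm{Im}(f_{\gamma_2})\subseteq\mathrm{Im}(f_{\gamma_1})$, for any $R$ we may pick $R_1$ with $f_{\gamma_1}(R_1)=f_{\gamma_2}(R)$; because $f_{\gamma_2}$ is invariant to potential shaping with $\gamma_2$ we also have $f_{\gamma_2}(R+g_{\gamma_2,\Phi})=f_{\gamma_2}(R)=f_{\gamma_1}(R_1)$, so applying condition~1 to the pairs $(R_1,R)$ and $(R_1,R+g_{\gamma_2,\Phi})$ and using the triangle inequality yields
\[
(\star\star)\qquad d^{\mathrm{STARC}}_{\tau,\gamma_3}\bigl(R,\ R+g_{\gamma_2,\Phi}\bigr)\le 2\epsilon\quad\text{for all }R,\Phi.
\]

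The second step reduces the theorem to the following claim: \emph{if $\delta\neq\delta'$ are discount factors and $\tau$ is non-trivial, then there is a potential $\Phi$ such that $g_{\delta',\Phi}$ is not trivial with respect to $(\tau,\delta)$.} Granting this, I would first record that $d^{\mathrm{STARC}}_{\tau,\gamma_3}(R,-R)=1$ for every $R$ that is non-trivial with respect to $(\tau,\gamma_3)$: potential shaping and $S'$-redistribution are linear in the base reward, so $V_{\tau,\gamma_3}(-R)=-V_{\tau,\gamma_3}(R)$ and hence $s^{\mathrm{STARC}}_{\tau,\gamma_3}(-R)=-s^{\mathrm{STARC}}_{\tau,\gamma_3}(R)$, a unit vector antipodal to $s^{\mathrm{STARC}}_{\tau,\gamma_3}(R)$ (cf.\ Definition~\ref{def:STARC}). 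Now, as $\gamma_1\neq\gamma_2$, either $\gamma_3\neq\gamma_1$ or $\gamma_3\neq\gamma_2$. If $\gamma_3\neq\gamma_1$, apply the claim with $\delta=\gamma_3,\ \delta'=\gamma_1$ to get $\Phi$ with $g_{\gamma_1,\Phi}$ non-trivial w.r.t.\ $(\tau,\gamma_3)$, and instantiate $(\star)$ at $R=-\tfrac12 g_{\gamma_1,\Phi}$ and $\Psi=\Phi$: then $R+g_{\gamma_1,\Psi}=-R$ and $R$ is a nonzero multiple of $g_{\gamma_1,\Phi}$, hence non-trivial, so $(\star)$ asserts $1=d^{\mathrm{STARC}}_{\tau,\gamma_3}(R,-R)\le\epsilon<0.5$, a contradiction. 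If instead $\gamma_3\neq\gamma_2$, the identical argument with $(\star\star)$ and $\delta'=\gamma_2$ gives $1\le 2\epsilon<1$, again a contradiction. (This argument never invokes condition~4, so the degenerate case $f_{\gamma_1}=f_{\gamma_2}$ is also covered.)

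The third step is to prove the claim, which I expect to be the main obstacle. I would pass to mean rewards $\bar R(s,a):=\mathbb{E}_{S'\sim\tau(s,a)}[R(s,a,S')]$; triviality of $R$ w.r.t.\ $(\tau,\delta)$ depends only on $\bar R$, and since the trivial rewards are exactly those obtained from a constant reward by potential shaping and $S'$-redistribution (see the preliminaries, Proposition~\ref{prop:policy_order}, and \citet{skalse2022}), it holds iff $\bar R(s,a)=c+\delta(P\Phi)(s,a)-\Phi(s)$ for some constant $c$ and potential $\Phi$, where $(P\Phi)(s,a):=\mathbb{E}_{S'\sim\tau(s,a)}[\Phi(S')]$. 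Writing $L_\delta\Phi$ for the function $(s,a)\mapsto\delta(P\Phi)(s,a)-\Phi(s)$ and $E\Phi$ for $(s,a)\mapsto\Phi(s)$, and noting $L_\delta(\mathbf{1})=(\delta-1)\mathbf{1}$, the trivial mean rewards are exactly $\mathrm{Im}(L_\delta)$; moreover $E$ and $L_\delta$ are injective on $\mathbb{R}^{\States}$ (a maximum-principle argument using $\delta<1$), so $\dim\mathrm{Im}(E)=\dim\mathrm{Im}(L_\delta)=|\States|$. If $g_{\delta',\Phi}$ were trivial w.r.t.\ $(\tau,\delta)$ for every $\Phi$ then, since $\overline{g_{\delta',\Phi}}=L_{\delta'}\Phi$, we would have $\mathrm{Im}(L_{\delta'})\subseteq\mathrm{Im}(L_\delta)$; as $L_\delta-L_{\delta'}=(\delta-\delta')P$ with $\delta\neq\delta'$, this forces $\mathrm{Im}(P)\subseteq\mathrm{Im}(L_\delta)$, hence $\mathrm{Im}(E)=\mathrm{Im}(\delta P-L_\delta)\subseteq\mathrm{Im}(L_\delta)$, hence $\mathrm{Im}(E)=\mathrm{Im}(L_\delta)$ by the dimension count, hence $\mathrm{Im}(P)\subseteq\mathrm{Im}(E)$. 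But $\mathrm{Im}(E)$ is precisely the set of functions on $\SxA$ not depending on the action, so this says $(P\Phi)(s,a)$ is action-independent for every $\Phi$ --- impossible when $\tau$ is non-trivial (take $\Phi$ to be the indicator of a state on which $\tau(s,a_1)$ and $\tau(s,a_2)$ differ for some $s,a_1,a_2$). This contradiction proves the claim, and with it the theorem.

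The hard part is this last claim: showing that potential shaping with one discount can send a reward to the antipode of its STARC-equivalence class under a \emph{different} discount; the dimension-counting step is exactly where $\delta\neq\delta'$ is used. Non-triviality of $\tau$ is essential here --- if $\tau$ is trivial then $(P\Phi)$ is always action-independent, $\mathrm{Im}(L_\delta)=\mathrm{Im}(E)$ for every $\delta$, and both the claim and the theorem fail, consistent with the hypothesis. It will also be worth checking the injectivity of $L_\delta$ and the identity $d^{\mathrm{STARC}}_{\tau,\gamma_3}(R,-R)=1$ with care, since the whole argument turns on the exact constant rather than merely a ``large'' distance.
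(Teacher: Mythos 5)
Your proof is correct and follows essentially the same route as the paper's: the same case split on whether $\gamma_3\neq\gamma_1$ or $\gamma_3\neq\gamma_2$, the same antipodal pair $\pm\tfrac{1}{2}g_{\gamma_i,\Phi}$ built from a shaping reward that is non-trivial under $(\tau,\gamma_3)$, and your bound $(\star\star)$ is exactly the paper's Lemma~\ref{lemma:misspecificaton_triangle_inequality}. The only difference is that you prove the key existence claim (the content of Lemma~\ref{lemma:PS_additive_invariance}) in full via the injectivity of $L_\delta$ and the $\mathrm{Im}(L_\delta)$ dimension count, whereas the paper defers it to prior work; that argument is sound.
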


Note that Theorem~\ref{thm:misspecified_discounting} permits that $\gamma_3 = \gamma_1$ or $\gamma_3 = \gamma_2$.
Of course, any interesting environment will have a non-trivial transition function, so this requirement is very mild. 
Moreover, a $d^\mathrm{STARC}_{\tau,\gamma}$-distance of $0.5$ is very large; this corresponds to the case where the reward functions are nearly orthogonal. 
This means that Theorem~\ref{thm:misspecified_discounting} is saying that if a behavioural model $f$ is invariant to potential shaping, then it is not robust to any misspecification of the discount parameter. Note that this holds even if $\gamma_1$ and $\gamma_2$ are arbitrarily close! Moreover, optimal policies, Boltzmann-rational policies, and MCE policies are all invariant to potential shaping,
and hence $o_{\tau, \gamma}$, $b_{\tau, \gamma, \beta}$, and $c_{\tau, \gamma, \alpha}$ are subject to  Theorem~\ref{thm:misspecified_discounting}.
In general, we should expect any behavioural model that uses exponential discounting to be invariant to potential shaping, and so Theorem~\ref{thm:misspecified_discounting} will apply very widely.

We will next consider the case when the transition function, $\tau$, is misspecified. Here, we similarly find that a very wide class of behavioural models are non-robust to any amount of misspecification: 

\begin{theorem}\label{thm:misspecified_env}
If $f_\tau : \mathcal{R} \to \Pi$ is invariant to $S'$-redistribution with $\tau$, and $\tau_1 \neq \tau_2$, then $f_{\tau_1}$ is not $\epsilon$-robust to misspecification with $f_{\tau_2}$ under $d^\mathrm{STARC}_{\tau_3,\gamma}$ for any $\tau_3$, any $\gamma$, and any $\epsilon < 0.5$.
\end{theorem}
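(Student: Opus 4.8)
The plan is to mirror the structure of the proof of Theorem~\ref{thm:misspecified_discounting}, replacing "potential shaping with $\gamma$" by "$S'$-redistribution with $\tau$" throughout. The key idea is to exhibit a single reward function $R$ on which $f_{\tau_1}$ and $f_{\tau_2}$ are forced to disagree in a way that is detectable by $d^\mathrm{STARC}_{\tau_3,\gamma}$ at distance arbitrarily close to $0.5$, thereby violating the first condition of Definition~\ref{def:misspecification_robustness} for every $\epsilon < 0.5$. Concretely, since $\tau_1 \neq \tau_2$, there is a state--action pair $(s,a)$ with $\tau_1(s,a) \neq \tau_2(s,a)$; I would build $R$ so that $R(s,a,\cdot)$ has nonzero variation over successor states (so that the two $S'$-redistribution equivalence classes genuinely differ at $(s,a)$), while keeping the rest of $R$ essentially free. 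The crucial observation is that $f_{\tau_1}$ being invariant to $S'$-redistribution with $\tau_1$ means it "sees" $R$ only through its $\tau_1$-expected values $\mathbb{E}_{S'\sim\tau_1(s,a)}[R(s,a,S')]$, whereas $f_{\tau_2}$ sees it through the $\tau_2$-expected values; since these expectations can be chosen to disagree while still being realised by a common reward $R$, we get $f_{\tau_1}(R_1) = f_{\tau_2}(R_2)$ for suitable $R_1, R_2$ that have genuinely different orderings of policies under $\tau_3,\gamma$, hence $d^\mathrm{STARC}_{\tau_3,\gamma}(R_1,R_2)$ can be pushed up toward $0.5$.

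More carefully, I would argue as follows. Pick $R_1$ and set $\pi = f_{\tau_1}(R_1)$. Now I want $R_2$ with $f_{\tau_2}(R_2) = \pi$ but $d^\mathrm{STARC}_{\tau_3,\gamma}(R_1, R_2)$ large. The natural choice is to take $R_2$ to differ from $R_1$ only at the pair $(s,a)$, in such a way that $R_2$'s $\tau_2$-expectation at $(s,a)$ equals $R_1$'s $\tau_1$-expectation at $(s,a)$, and $R_2$ agrees with $R_1$ elsewhere. By invariance of $f_{\tau_2}$ to $S'$-redistribution with $\tau_2$, $f_{\tau_2}(R_2)$ depends only on $\tau_2$-expected rewards, so I can then further adjust $R_2$ within its $\tau_2$-redistribution class so that in fact $f_{\tau_2}(R_2) = f_{\tau_1}(R_1) = \pi$ whenever $R_1$ and $R_2$ induce the same $\tau_2$-expected-reward function — i.e. I should instead start from $R_1$, let $\bar R$ be the $\tau_2$-expected reward of $R_1$ (a reward depending on $(s,a)$ only), note $f_{\tau_2}(\bar R) = f_{\tau_2}(R_1)$, and compare $R_1$ against $\bar R$. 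The remaining task is to show $f_{\tau_1}(R_1)$ and $f_{\tau_2}(R_1)$ can be made to differ (or, cleanly, to show $d^\mathrm{STARC}_{\tau_3,\gamma}$ of the relevant pair is near $0.5$); here I would use that the $S'$-redistribution classes for $\tau_1$ versus $\tau_2$ at $(s,a)$ are transverse, choose the successor-state component of $R_1(s,a,\cdot)$ to have large $\ell^2$-norm relative to the rest of $R_1$, and invoke the geometry of $c^\mathrm{STARC}_{\tau_3,\gamma}$ and $s^\mathrm{STARC}_{\tau_3,\gamma}$ (normalisation onto the unit sphere, the $0.5$ factor) exactly as in Proposition~\ref{prop:nas_for_small_dard}, to conclude that the standardised rewards are nearly orthogonal.

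The main obstacle, I expect, is the same one as in Theorem~\ref{thm:misspecified_discounting}: making the quantitative geometric claim precise — namely that one can simultaneously (i) keep the two reward functions in the same $f$-fibre (so the premise $f_{\tau_1}(R_1) = f_{\tau_2}(R_2)$ holds), and (ii) force the STARC distance up to essentially $0.5$. This requires checking that the perturbation at $(s,a)$ that is "invisible" to one model is \emph{not} killed by the $c^\mathrm{STARC}_{\tau_3,\gamma}$ projection (i.e. it is not itself a $\tau_3$-redistribution plus potential-shaping artefact), and that its magnitude relative to $\|c^\mathrm{STARC}_{\tau_3,\gamma}(R_1)\|_2$ can be made arbitrarily large so the normalised difference approaches a right angle; a degenerate transition structure at $(s,a)$ under $\tau_3$ is the only thing that could interfere, and handling or excluding that case is the delicate point. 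Everything else — unpacking Definition~\ref{def:misspecification_robustness}, condition~1, and invoking invariance to $S'$-redistribution — is bookkeeping that parallels the discounting case.
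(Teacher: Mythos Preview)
Your proposal has a genuine structural gap: you aim to violate condition~1 of Definition~\ref{def:misspecification_robustness} by exhibiting $R_1,R_2$ with $f_{\tau_1}(R_1)=f_{\tau_2}(R_2)$ and large STARC distance, but you never explain how to force the two \emph{different} maps $f_{\tau_1}$ and $f_{\tau_2}$ to agree on a common policy. Your second paragraph wanders between ``set $\pi=f_{\tau_1}(R_1)$ and find $R_2$ with $f_{\tau_2}(R_2)=\pi$'' and ``compare $R_1$ with its $\tau_2$-averaged version $\bar R$,'' but the latter only gives $f_{\tau_2}(\bar R)=f_{\tau_2}(R_1)$, not $f_{\tau_2}(\bar R)=f_{\tau_1}(R_1)$. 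With no hypothesis on $f$ beyond invariance to $S'$-redistribution, there is no mechanism to match outputs across the two models, so the condition-1 route stalls.

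The paper avoids this entirely. It never tries to produce a cross-model collision $f_{\tau_1}(R_1)=f_{\tau_2}(R_2)$. Instead it case-splits on $\tau_3$: since $\tau_1\neq\tau_2$, either $\tau_1\neq\tau_3$ or $\tau_2\neq\tau_3$. In either case one constructs (via a short lemma) a reward $R^\dagger$ that is $S'$-redistribution--invisible to the relevant model but \emph{non-trivial} under $\tau_3,\gamma$, so that $R^\dagger$ and $-R^\dagger$ lie in the \emph{same fibre} of that single model while $d^\mathrm{STARC}_{\tau_3,\gamma}(R^\dagger,-R^\dagger)=1$. If the offending model is $f_{\tau_1}$, this directly breaks condition~2 for all $\epsilon<1$. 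If it is $f_{\tau_2}$, one invokes a triangle-inequality lemma (if $f$ is $\epsilon$-robust to $g$ then $g(R_1)=g(R_2)\Rightarrow d^\mathcal{R}(R_1,R_2)\le 2\epsilon$), which is what produces the $0.5$ threshold---not any near-orthogonality geometry. So the missing ingredients in your plan are: (i) the case split on $\tau_3$, (ii) targeting condition~2 rather than condition~1, (iii) the $R^\dagger$ versus $-R^\dagger$ pair giving distance exactly $1$, and (iv) the $2\epsilon$ triangle lemma that converts a diameter-$1$ fibre of $g$ into non-robustness of any $f$ for $\epsilon<0.5$.
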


Note that Theorem~\ref{thm:misspecified_env} permits that $\tau_3 = \tau_1$ or $\tau_3 = \tau_2$.
Thus, Theorem~\ref{thm:misspecified_env} is saying that if a behavioural model $f$ is invariant to $S'$-redistribution, then it is not robust to any degree of misspecification of $\tau$ (even if $\tau_1$ and $\tau_2$ are arbitrarily close). 
Moreover, optimal policies, Boltzmann-rational policies, and maximal causal entropy policies, are all invariant to $S'$-redistribution,
and hence $o_{\tau, \gamma}$, $b_{\tau, \gamma, \beta}$, and $c_{\tau, \gamma, \alpha}$ are subject to Theorem~\ref{thm:misspecified_env}.
Indeed, since $S'$-redistribution does not change the expected value of any policy, we should expect almost all sensible behavioural models to be invariant to $S'$-redistribution. As such, Theorem~\ref{thm:misspecified_env} will also apply very widely.


Theorem~\ref{thm:misspecified_discounting} and \ref{thm:misspecified_env} show that a very wide range of behavioural models in principle are highly sensitive to arbitrarily small misspecification of two of their core parameters. To make this result more accessible and easier to understand, we have included two examples in Appendix~\ref{appendix:explain_misspecified_params} that explain the intuition behind these two theorems.


Before moving on, we also want to note that the Boltzmann-rational model is robust to arbitrary misspecification of the temperature parameter, $\beta$, and that the maximal causal entropy model is robust to arbitrary misspecification of the weight parameter, $\alpha$. This was shown by \citet{skalse2023misspecification}, in their Theorems 3.2 and 3.4. 
Specifically, we have that for any $\tau$ and $\gamma$, any $\epsilon \geq 0$, and any $\beta_1$, $\beta_2$, $\alpha_1$, $\alpha_2$, we have that $b_{\tau, \gamma, \beta_1}$ is $\epsilon$-robust to misspecification with  $b_{\tau, \gamma, \beta_2}$, and that $c_{\tau, \gamma, \alpha_1}$ is $\epsilon$-robust to misspecification with  $c_{\tau, \gamma, \alpha_2}$, as defined by $d^\mathrm{STARC}_{\tau,\gamma}$. For a detailed description of how to connect the results of \citet{skalse2023misspecification} to ours, see Appendix~\ref{appendix:connect_to_misspecification_I}.

\section{Discussion}\label{section:discussion}

We have quantified how robust IRL is to misspecification of the behavioural model. We first provided necessary and sufficient conditions that fully describe what types of misspecification many behavioural models will tolerate. 
In principle, these conditions give a complete answer to how tolerant a given behavioural model is to any given type of misspecification.
However, these conditions are rather opaque, and difficult to interpret. Therefore, we have also separately provided necessary and sufficient conditions that characterise when a behavioural model is robust to \emph{perturbation}, and we have analysed how robust many behavioural models are to misspecification of the discount factor $\gamma$ or the environment dynamics $\tau$. Our analysis suggests that the IRL problem is highly sensitive to many plausible forms of misspecification. In particular, a very wide class of behavioural models are unable to guarantee robust inference under arbitrarily small perturbations of the observed policy, or under arbitrarily small misspecification of $\gamma$ or $\tau$. 

Our results present a serious challenge to IRL in the context of preference elicitation.
The relationship between human preferences and human behaviour is very complex, and while it is certainly possible to create increasingly accurate models of human behaviour, it will never be realistically possible to create a model that is completely free from all forms of misspecification. Therefore, if IRL is unable to guarantee accurate inferences under even mild misspecification of the behavioural model, as our results suggest, then we should expect it to be very difficult (and perhaps even prohibitively difficult) to guarantee that IRL reliably will produce accurate inferences in real-world situations.
This in turn means that IRL should be used cautiously, and that the learned reward functions should be carefully examined and evaluated \citep[as done by e.g.\ ][]{michaud2020understanding,jenner2022preprocessing}. It also means that we need IRL algorithms that are specifically designed to be more robust under misspecification, such as e.g.\ that proposed by \citet{viano2021robust}. It may also be fruitful to combine IRL with other data sources, as done by e.g.\ \cite{ibarz2018}, or consider policy optimisation algorithms that assume that the reward function may be misspecified, as done by e.g.\ \citet{krakovnaside, krakovnaside2, turnerside, griffin2022alls}.

We also need more extensive investigations into the issue of how robust IRL is to misspecification, and there are several ways that our analysis can be extended. First of all, it may in some cases be possible to mitigate some of our negative results if $\mathcal{R}$ is restricted. For example, Theorem~\ref{thm:misspecified_discounting} and \ref{thm:misspecified_env} 
rely on the fact that we for any reward $R_1$ can find a reward $R_2$ such that $R_1$ and $R_2$ differ by potential shaping or $S'$-redistribution for a given choice of $\gamma$ and $\tau$, but such that $R_1$ and $R_2$ have a large STARC-distance for other choices of $\gamma$ and $\tau$. We may thus be able to circumvent these results by restricting $\hat{\mathcal{R}}$ in a way that removes all such reward pairs. However, this is of course not straightforward, not least because we need to ensure that the true reward in fact is contained in $\hat{\mathcal{R}}$. 
Moreover, some of our results rely on the fact that we use STARC-metrics to quantify the difference between reward functions. While there are compelling theoretical justifications for doing so (cf.\ Appendix~\ref{appendix:starc} and \ref{appendix:why_not_epic}), there may be other relevant options.
STARC-metrics are quite strong, and we may be able to derive weaker guarantees using other forms of reward metrics.
Furthermore, it may also be fruitful to modify Definition~\ref{def:misspecification_robustness}, for example by making it more probabilistic, or generalising it in other ways. This topic is discussed in Appendix~\ref{appendix:motivate_misspecification_definition}.
Finally, our analysis can also be extended to other types of behavioural models and other types of misspecification. For example, are policies that use (e.g.) \emph{hyperbolic discounting} subject to a result that is analogous to Theorem~\ref{thm:appendix_misspecified_discounting}?
Such investigations also present an interesting direction for future work.



\bibliography{bibliography}
\bibliographystyle{iclr2024_conference}

\newpage
\appendix
\section{Motivating Our Definition of Misspecification Robustness}\label{appendix:motivate_misspecification_definition}

In this section, we provide further discussion and motivation for our formalisation of misspecification robustness, given in Definition~\ref{def:misspecification_robustness}, beyond the discussion we give in Section~\ref{section:defining_misspecification}.


\subsection{Additional Comments On the Conditions For Misspecification Robustness}

In this section, we make a few additional comments on some of the four conditions in Definition~\ref{def:misspecification_robustness}. In particular, while the first condition ought to be reasonably clear, we have further comments on each of the remaining three conditions.



Condition 2 says that for all $R_1, R_2 \in \hat{\mathcal{R}}$, if $f(R_1) = f(R_2)$ then $d^\mathcal{R}(R_1,R_2) \leq \epsilon$. In other words, any learning algorithm $\mathcal{L}$ based on $f$ is guaranteed to learn a reward function that has a distance of at most $\epsilon$ to the true reward function when trained on data generated by $f$, i.e.\ when there is no misspecification. It may not be immediately obvious why this assumption is included, since we assume that the data is generated by $g$, where $f \neq g$. To see this, suppose $\hat{\mathcal{R}} = \{R_1, R_2, R_3, R_4\}$ where $d^\mathcal{R}(R_1, R_2) < \epsilon$, $d^\mathcal{R}(R_3, R_4) < \epsilon$, and $d^\mathcal{R}(R_2, R_3) \gg \epsilon$, and let $f,g : \hat{\mathcal{R}} \to \Pi$ be two behavioural models where $f(R_1) = \pi_1$, $f(R_2) = f(R_3) = \pi_2$, $f(R_4) = \pi_3$, and $g(R_1) = g(R_2) = \pi_1$, $g(R_3) = g(R_4) = \pi_3$. This is illustrated in the diagram below:

\begin{figure}[H]
    \centering
    \includegraphics[width=\textwidth/2]{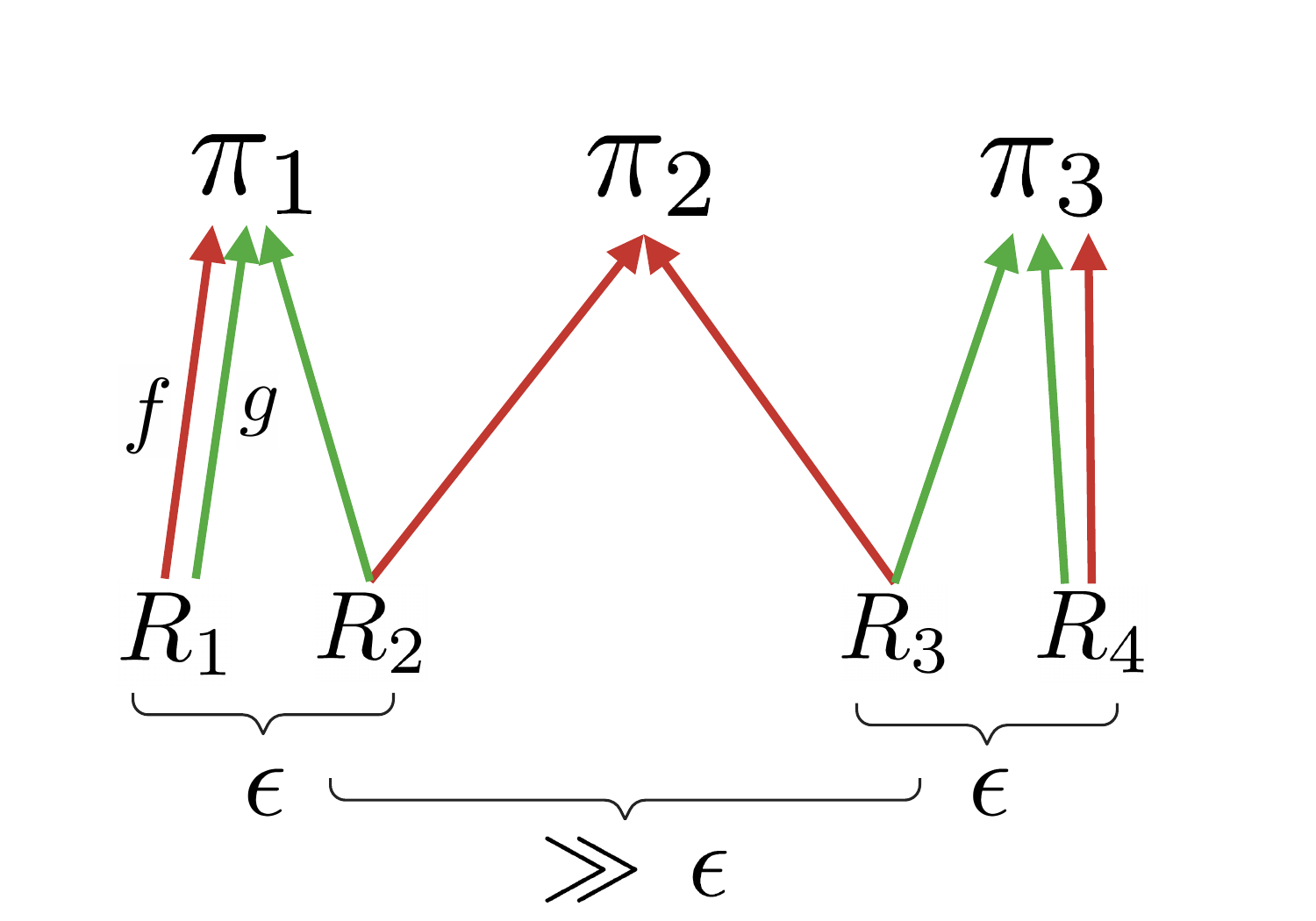}
\end{figure}

In this case, we have that $f(R_2) = f(R_3)$, but $d^\mathcal{R}(R_2, R_3) \gg \epsilon$. As such, $f$ violates condition 2 in Definition~\ref{def:misspecification_robustness}; a learning algorithm $\mathcal{L}$ based on $f$ is \emph{not} guaranteed to learn a reward function that has distance at most $\epsilon$ to the true reward function when there is no misspecification, because $f$ cannot distinguish between $R_2$ and $R_3$, which have a large distance. However, if $f(R) = g(R')$, it does in this case follow that $d^\mathcal{R}(R,R') \leq \epsilon$. 
In other words, if the training data is coming from $g$, then a learning algorithm $\mathcal{L}$ based on $f$ \emph{is} guaranteed to learn a reward function that has distance at most $\epsilon$ to the true reward function.
As such, we could define misspecification robustness in such a way that $f$ would be considered to be robust to misspecification with $g$ in this case. However, this seems unsatisfactory, because $g$ essentially has to be carefully designed specifically to avoid certain blind spots in $f$. 
In other words, while condition 1 in Definition~\ref{def:misspecification_robustness} is met, it is only met \emph{spuriously}.
To rule out these kinds of edge cases, we have therefore included the condition that for all $R_1, R_2 \in \hat{\mathcal{R}}$, if $f(R_1) = f(R_2)$, then it must be that $d^\mathcal{R}(R_1,R_2) \leq \epsilon$. 

Condition 3 says that there for all $R_1 \in \hat{\mathcal{R}}$ exists an $R_2 \in \hat{\mathcal{R}}$ such that $f(R_2) = g(R_1)$. Stated differently, the image of $g$ on $\hat{\mathcal{R}}$ is a subset of the image of $f$ on $\hat{\mathcal{R}}$. 
The reason for why this assumption is necessary is to ensure that the learning algorithm can never observer data that is impossible according to its assumed model. For example, suppose $f$ maps each reward function to a deterministic policy; in that case, the learning algorithm $\mathcal{L}$ will assume that the observed policy must be deterministic. What happens if such an algorithm is given data from a nondeterministic policy? 
This is undefined, absent further details about $\mathcal{L}$, because $\mathcal{L}$ cannot possibly find a reward function that fits the training data under its assumed model. Since we do not want to make any strong assumptions about $\mathcal{L}$, it therefore seems reasonable to say that if $f$ always produces a deterministic policy, and $g$ sometimes produces nondeterministic policies, then $f$ is not robust to misspecification with $g$. More generally, it has to be the case that any policy that could be produced by $g$, can be explained under $f$. This is encompassed by the condition that there for all $R_1 \in \hat{\mathcal{R}}$ exists an $R_2 \in \hat{\mathcal{R}}$ such that $f(R_2) = g(R_1)$. Of course, in many cases we may have that $\mathrm{Im}(f) = \Pi$, i.e.\ that $f$ can produce any policy, and in that case this condition is vacuous.

Condition 4 says that there exists $R_1, R_2 \in \hat{\mathcal{R}}$ such that $f(R_1) \neq f(R_2)$; in other words, that $f \neq g$ on $\hat{\mathcal{R}}$. This condition is not strictly necessary -- from a mathematical standpoint, very little would change if we were to simply remove it from Definition~\ref{def:misspecification_robustness}. Indeed, the only effect that this condition has on the results in this paper is that Theorem~\ref{thm:nas_conditions} and Corollary~\ref{cor:nas_for_br_mce} add the condition that $f \neq g$, and that $f \neq g$ is part of the definition of $\delta$-perturbations (Definition~\ref{def:perturbation}). Rather, the reason for including the assumption that $f \neq g$ is purely to make Definition~\ref{def:misspecification_robustness} more intuitive. If $f = g$, then $f$ is not misspecified, and it would seem odd to say that \enquote{$f$ is $\epsilon$-robust to misspecification with itself}. As such, there is no deeper significance to this condition besides making our terminology more clear.

\subsection{On the Assumption That Behavioural Models Are Functions}

Here, we will comment on the fact that behavioural models are assumed to be \emph{functions}; i.e., we assume that a behavioural model associate each reward function $R$ with a unique policy $\pi$. This is true for the Boltzmann-rational model and the maximal causal entropy model, but it may not be a natural assumption in all cases. For example, there may in general be more than one optimal policy. Thus, an optimal agent could associate some reward functions $R$ with multiple policies $\pi$. This particular example is not too problematic, because the set of all optimal policies still form a convex set. As such, it is natural to assume that an optimal agent would take all optimal actions with equal probability, which is what we have done in the definition of $o_{\tau,\gamma}$.\footnote{ Note also that this is equivalent to assuming that an optimal agent would take all optimal actions with \emph{positive} probability, but that the exact probability that it associates with each action does not convey any further information about $R$.} However, we could imagine alternative criteria which would associate some rewards with multiple policies, and where there may not be any canonical way to select a single policy among them. Such criteria may then not straightforwardly translate into a \emph{functional} behavioural model.

There are several ways to handle such cases within our framework. First of all, we may simply assume that the observed agent still has some fixed method for breaking ties between policies that it considers to be equivalent (as we do for $o_{\tau,\gamma}$). In that case, we still ultimately end up with a function from $\mathcal{R}$ to $\Pi$, in which case our framework can be applied without modification. We expect this approach to be satisfactory in most cases.

It is worth noting that this approach does not necessarily require us to actually know how the observed agent breaks ties between equivalent policies. To see this, let $G : \mathcal{R} \to \mathcal{P}(\Pi)$ be a function that associates each reward function with a set of policies. We can then say that a behavioural model $g : \mathcal{R} \to \Pi$ \emph{implements} $G$ if $g(R) \in G(R)$ for all $R \in \mathcal{R}$. Using this definition, we could then say that $f : \mathcal{R} \to \Pi$ is robust to misspecification with $G : \mathcal{R} \to \mathcal{P}(\Pi)$ if $f$ is robust to misspecification with each $g$ that implements $G$, where $f$ being robust to misspecification with $g$ is defined as in Definition~\ref{def:misspecification_robustness}. In other words, we assume that the observed agent has a fixed method for breaking ties between policies in $G$, but without making any assumptions about what this method is. Using that definition, our framework can then be applied without modification.

An alternative approach could be to generalise the definition of behavioural models to allow them to return a set of policies, i.e.\ $f : \mathcal{R} \to \mathcal{P}(\Pi)$. Most of our results can be extended to cover this case in a mostly straightforward manner. However, this approach is somewhat unsatisfactory, because we would then assume that the learning algorithm $\mathcal{L}$ gets to observe all policies in the set $f(R^\star)$. However, in reality, it seems more realistic to assume that $\mathcal{L}$ only gets to observe a single element of $f(R^\star)$, unless perhaps $\mathcal{L}$ gets data from multiple similar agents acting in the same environment. 


\subsection{On Restricted Spaces of Reward Functions}

Our definitions are given relative to a set of reward functions $\hat{\mathcal{R}}$, which in general may be any subset of $\mathcal{R}$. It may not be immediately obvious why this is necessary, and so we will say a few words about that issue here.

First of all, we should note that we always allow $\hat{\mathcal{R}} = \mathcal{R}$. This means that the introduction of $\hat{\mathcal{R}}$ makes our analysis strictly more general, in the sense that we always can assume that $\hat{\mathcal{R}}$ is unrestricted. In other words, nothing is lost by giving our definitions and theorem statements relative to a set of reward functions $\hat{\mathcal{R}}$, instead of the set of all reward functions.

Moreover, there are many cases where it is interesting to restrict $\mathcal{R}$. To start with, we use reward functions with type signature $\States \times \Actions \times \States \to \mathbb{R}$, but it is quite common to use reward functions with a different type signature, such as for example $\States \times \Actions \to \mathbb{R}$ or $\States \to \mathbb{R}$. We can ensure that our analysis covers these settings as well, by noting that we can allow $\hat{\mathcal{R}}$ to be equal to $\{R \in \mathcal{R} \mid \forall s,a,s_1,s_2 : R(s,a,s_1) = R(s,a,s_2)\}$, or $\{R \in \mathcal{R} \mid \forall s,a_1,a_2,s_1,s_2 : R(s,a_1,s_1) = R(s,a_2,s_2)\}$, and so on. As such, by using a (potentially restricted) set of rewards $\hat{\mathcal{R}}$, we can make sure that our results do not depend on these design choices.

Additionally, there are many cases where we may have \emph{prior information} about the underlying true reward function $R^\star$, over and above the information provided by the observed policy. For example, we may know that the reward function cannot depend on certain features of the environment, or we may know that it only depends on the state of the environment at the end of an episode, and so on. This information may come from expert knowledge, or from auxiliary data sources, etc. In these cases, it makes sense to restrict $\hat{\mathcal{R}}$ to the set of all reward functions that are viable in light of this prior knowledge. Moreover, restricted reward sets also allow us to handle the case where this information is given in the form of a Bayesian prior, see Appendix~\ref{appendix:make_more_probabilistic}.

Another reason for restricting $\mathcal{R}$ is that Definition~\ref{def:misspecification_robustness} is existential, in the sense that a single counterexample in principle is enough to prevent $f$ from being $\epsilon$-robust to misspecification with $g$, even if $f(R_1) = g(R_2)$ implies $d^\mathcal{R}(R_1, R_2) \leq \epsilon$ for \enquote{most} $R_1$ and $R_2$, etc. As such, even if $f$ is not $\epsilon$-robust to misspecification with $g$, it could in theory still be the case that a learning algorithm $\mathcal{L}$ based on $f$ is guaranteed to learn a reward function $R_h$ that is close to the true reward function $R^\star$ for most choices of $R^\star$. We can rule out this possibility by restricting $\hat{\mathcal{R}}$ in a way that excludes gerrymandered counter-examples.

As such, by giving our definitions relative to a set of reward functions $\hat{\mathcal{R}}$, we make our analysis more versatile and more general.


\subsection{On Making the Analysis More Probabilistic}\label{appendix:make_more_probabilistic}

The formalisation of misspecification robustness in Definition~\ref{def:misspecification_robustness} is essentially a worst-case analysis, in the sense that it requires each condition to hold for \emph{all} reward functions.
For example, a single pair of rewards $R_1$, $R_2$ with $f(R_1) = g(R_2)$ and $d^\mathcal{R}(R_1,R_2) > \epsilon$ is enough to make it so that $f$ is not $\epsilon$-robust to misspecification with $g$, even if $f(R_1) = g(R_2)$ implies $d^\mathcal{R}(R_1,R_2) \leq \epsilon$ for \enquote{most} reward functions. 
This makes sense if we do not want to make any assumptions about the true reward function $R^\star$, or about the inductive bias of the learning algorithm. However, in certain cases, we may know that $R^\star$ is sampled from a particular distribution $\mathcal{D}$ over $\mathcal{R}$. In those cases, it may be more relevant to know whether $d^\mathcal{R}(R^\star,R_h) \leq \epsilon$ with high probability.

To make this more formal, we may assume that we have two behavioural models $f,g : \mathcal{R} \to \Pi$ and a distribution $\mathcal{D}$ over $\mathcal{R}$, that $R^\star$ is sampled from $\mathcal{D}$, and that the learning algorithm $\mathcal{L}$ observes the policy $\pi = g(R^\star)$. We then assume that $\mathcal{L}$ returns the reward function $R_h$ such that $f(R_h) = \pi$, and that $\mathcal{L}$ selects among all such reward functions using some (potentially nondeterministic) inductive bias. We then want to know if $d^\mathcal{R}(R^\star,R_h) \leq \epsilon$ with probability at least $1-\delta$, for some $\delta$ and $\epsilon$.


Our framework can, to an extent, be used to study this setting as well. In particular, suppose we pick a set $\hat{\mathcal{R}}$ of \enquote{likely} reward functions such that $\mathbb{P}_{R \sim \mathcal{D}}(R \in \hat{\mathcal{R}}) \geq 1-\delta$, and such that the learning algorithm $\mathcal{L}$ will return a reward function $R_h \in \hat{\mathcal{R}}$ if there exists a reward function $R_h \in \hat{\mathcal{R}}$ such that $f(R_h) = g(R^\star)$. 
Then if $f$ is $\epsilon$-robust to misspecification with $g$ on $\hat{\mathcal{R}}$, we have that $\mathcal{L}$ will learn a reward function $R_h$ such that $d^\mathcal{R}(R^\star,R_h) \leq \epsilon$ with probability at least $1-\delta$.

So, for example, suppose $\hat{\mathcal{R}}$ is the set of all reward functions that have \enquote{low complexity}, for some complexity measure and complexity threshold. The above argument then informally tells us that if the true reward function is likely to have low complexity, and if $\mathcal{L}$ will attempt to fit a low-complexity reward function to its training data, then the learnt reward function will be close to the true reward function with high probability, as long as $f$ is $\epsilon$-robust to misspecification with $g$ on the set of all low-complexity reward functions.

Thus, while Definition~\ref{def:misspecification_robustness} gives us a worst-case formalisation of misspecification robustness, it is relatively straightforward to carry out a more probabilistic analysis within the same framework.

\section{Explaining and Motivating STARC-Metrics}\label{appendix:starc}


In this section, we will explain Definition~\ref{def:STARC}, and provide the theoretical justification for measuring the difference between reward functions using $d^\mathrm{STARC}_{\tau,\gamma}$.

Let us first walk through the definition of $d^\mathrm{STARC}_{\tau,\gamma}$, and explain each of the steps. Intuitively speaking, we want to consider $R_1$ and $R_2$ to be equivalent if (and only if) they induce the same ordering of policies. Moreover, also recall that $R_1$ and $R_2$ have the same ordering of policies if and only if they differ by potential shaping, $S'$-redistribution, and positive linear scaling (see Proposition~\ref{prop:policy_order}). For this reason, $d^\mathrm{STARC}_{\tau,\gamma}$ first \emph{standardises} each reward function in a way that maps all equivalent rewards to a single representative in their respective equivalence class, before measuring their difference. 

To do this, we first use $c^\mathrm{STARC}_{\tau,\gamma}$ to map all rewards that differ by potential shaping and $S'$-redistribution to a single representative. Note that for all $R$, the set of all rewards that differ from $R$ by potential shaping and $S'$-redistribution forms an affine subspace. This means that there is a well-defined \enquote{smallest} element of each such equivalence class, which is the reward function that $c^\mathrm{STARC}_{\tau,\gamma}$ returns. It is also worth noting that $c^\mathrm{STARC}_{\tau,\gamma}$ is an orthogonal linear transformation, that maps $\mathcal{R}$ to an $|\States|(|\Actions|-1)$-dimensional linear subspace of $\mathcal{R}$. 

After this, we \emph{normalise} the resulting reward functions, by dividing them by their $\ell^2$-norm. This collapses positive linear scaling, which now means that $s^\mathrm{STARC}_{\tau,\gamma}(R_1) = s^\mathrm{STARC}_{\tau,\gamma}(R_2)$ if and only if $R_1$ and $R_2$ have the same ordering of policies. We then measure the distance between the resulting reward functions, and multiply this distance by $0.5$ to ensure that the resulting value is between $0$ and $1$. For more details, see \citet{skalse2023starc}.

To get an intuitive sense of how $d^\mathrm{STARC}_{\tau,\gamma}$ behaves, first note that $d^\mathrm{STARC}_{\tau,\gamma}$ is a pseudometric on $\mathcal{R}$. Moreover, as we have already alluded to, $d^\mathrm{STARC}_{\tau,\gamma}(R_1,R_2) = 0$ if and only if $R_1$ and $R_2$ induce the same ordering of policies under $\tau$ and $\gamma$. In addition to this, we also have that $d^\mathrm{STARC}_{\tau,\gamma}(R_1,R_2) = 1$ if and only if $R_1$ and $R_2$ induce the \emph{opposite} ordering of policies under $\tau$ and $\gamma$. Furthermore, if $R_0$ is trivial and $R$ is non-trivial, then we have that $d^\mathrm{STARC}_{\tau,\gamma}(R,R_0) = 0.5$. More generally, if $R_1$ and $R_2$ are approximately orthogonal, then $d^\mathrm{STARC}_{\tau,\gamma}(R_1,R_2) \approx 0.5$. As such, $d^\mathrm{STARC}_{\tau,\gamma}$ gives each pair of reward functions $R_1, R_2$ a distance between $0$ and $1$, where a distance close to $0$ means that $R_1$ and $R_2$ have approximately the same policy order, a distance close to $1$ means that they have approximately the opposite policy order, and a distance close to $0.5$ means that they are approximately orthogonal. Almost all reward functions have a distance close to $0.5$.

In addition to this, $d^\mathrm{STARC}_{\tau,\gamma}$ induces an upper bound on worst-case regret. Specifically:

\begin{definition}\label{def:soundness}
A pseudometric $d$ on $\mathcal{R}$ is \emph{sound} if there exists a positive constant $U$, such that for any reward functions $R_1$ and $R_2$, if two policies $\pi_1$ and $\pi_2$ satisfy that $J_2(\pi_2) \geq J_2(\pi_1)$, then
$$
J_1(\pi_1) - J_1(\pi_2) \leq U \cdot (\max_\pi J_1(\pi) - \min_\pi J_1(\pi)) \cdot d(R_1, R_2).
$$
\end{definition}
\begin{proposition}\label{prop:starc_sound}
$d^\mathrm{STARC}_{\tau,\gamma}$ is sound.
\end{proposition}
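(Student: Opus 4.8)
The plan is to establish soundness of $d^\mathrm{STARC}_{\tau,\gamma}$ by exploiting the structure of the standardisation procedure, in particular the fact that $c^\mathrm{STARC}_{\tau,\gamma}$ is an orthogonal linear transformation onto an $|\States|(|\Actions|-1)$-dimensional subspace and that potential shaping and $S'$-redistribution do not change the evaluation $J$ of any policy. First I would note that, since $R$ and $c^\mathrm{STARC}_{\tau,\gamma}(R)$ differ only by potential shaping and $S'$-redistribution, they induce the same evaluation function $J$ (up to an additive constant that is the same for all policies), so it suffices to prove the bound for rewards already in the standardised subspace. Likewise, normalising by the $\ell^2$-norm only rescales $J$ by a positive constant, which is absorbed into the $(\max_\pi J_1(\pi) - \min_\pi J_1(\pi))$ factor on the right-hand side; so really the work reduces to comparing two unit-norm standardised rewards $\bar R_1 = s^\mathrm{STARC}_{\tau,\gamma}(R_1)$ and $\bar R_2 = s^\mathrm{STARC}_{\tau,\gamma}(R_2)$.

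Next I would quantify how the evaluation functional relates to the $\ell^2$-geometry on the standardised subspace. The map $R \mapsto J(\cdot)$ restricted to policies, or more precisely the map sending a reward to the vector of policy returns (equivalently to its occupancy-measure pairing), is linear in $R$; on the standardised subspace it is injective, so there is a finite constant $L$ (depending on $\tau$, $\gamma$, $\mu_0$, and the geometry of the occupancy-measure polytope) such that $|J^{\bar R_1}(\pi) - J^{\bar R_2}(\pi)| \le L \cdot \|\bar R_1 - \bar R_2\|_2$ for every policy $\pi$. Then, for $\pi_1, \pi_2$ with $J_2(\pi_2) \ge J_2(\pi_1)$, I would write
$$
J_1(\pi_1) - J_1(\pi_2) = \big(J_1(\pi_1) - J_2(\pi_1)\big) + \big(J_2(\pi_1) - J_2(\pi_2)\big) + \big(J_2(\pi_2) - J_1(\pi_2)\big),
$$
and bound the first and third terms by $L \cdot \|\bar R_1 - \bar R_2\|_2$ each (after rescaling to the normalised rewards) while the middle term is $\le 0$ by hypothesis. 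This yields $J_1(\pi_1) - J_1(\pi_2) \le 2L \cdot \|\bar R_1 - \bar R_2\|_2 = 4L \cdot d^\mathrm{STARC}_{\tau,\gamma}(R_1, R_2)$, after accounting for the factor $0.5$ in the definition of $d^\mathrm{STARC}$.

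Finally I would convert this into the exact form required by Definition~\ref{def:soundness}, which asks for the bound to scale with $\max_\pi J_1(\pi) - \min_\pi J_1(\pi)$ rather than with a fixed constant. This is where the normalisation is essential: because $\bar R_1$ has unit $\ell^2$-norm and lies in the standardised subspace, the spread $\max_\pi J^{\bar R_1}(\pi) - \min_\pi J^{\bar R_1}(\pi)$ is bounded below by a strictly positive constant $m$ (this uses that a nonzero standardised reward is non-trivial, hence assigns different evaluations to some pair of policies, combined with compactness of the unit sphere in the finite-dimensional subspace). Rescaling back from $\bar R_1$ to $R_1$ multiplies both the spread and the left-hand-side estimate by the same positive factor $\|c^\mathrm{STARC}_{\tau,\gamma}(R_1)\|_2$, so the ratio is preserved and we obtain $J_1(\pi_1) - J_1(\pi_2) \le (2L/m) \cdot (\max_\pi J_1(\pi) - \min_\pi J_1(\pi)) \cdot d^\mathrm{STARC}_{\tau,\gamma}(R_1, R_2)$, giving soundness with $U = 2L/m$. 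The main obstacle I anticipate is the careful bookkeeping of the two rescalings (from $R_i$ to $c^\mathrm{STARC}_{\tau,\gamma}(R_i)$ to $s^\mathrm{STARC}_{\tau,\gamma}(R_i)$) and making sure the constants $L$ and $m$ are genuinely uniform over all reward pairs — in particular handling the degenerate case where $R_1$ is trivial, in which case $\max_\pi J_1 - \min_\pi J_1 = 0$ and the left side is also $0$, so the inequality holds vacuously. Much of this bookkeeping is essentially already done in \citet{skalse2023starc}, so I would lean on that reference for the detailed constant-chasing.
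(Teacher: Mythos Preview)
The paper does not give its own proof of this proposition; it simply cites \citet{skalse2023starc}. Your sketch is a correct outline of a soundness argument and, as you yourself note at the end, is essentially the argument carried out in that reference: reduce to the standardised unit sphere via the invariances of $J$ under potential shaping, $S'$-redistribution, and positive scaling; use the linear pairing $J^R(\pi)=\langle R,\eta^\pi\rangle$ to get a uniform Lipschitz bound $L$; use the three-term decomposition with the hypothesis $J_2(\pi_2)\ge J_2(\pi_1)$ to kill the middle term; and use compactness of the unit sphere in $\mathrm{Im}(c^\mathrm{STARC}_{\tau,\gamma})$ together with the fact that nonzero standardised rewards are non-trivial to get a uniform lower bound $m$ on the spread. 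The only blemish is a harmless constant slip: after writing $2L\,\|\bar R_1-\bar R_2\|_2 = 4L\, d^\mathrm{STARC}_{\tau,\gamma}(R_1,R_2)$ you conclude with $U=2L/m$ rather than $U=4L/m$; this does not affect the argument.
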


For a proof of Proposition~\ref{prop:starc_sound}, see \citet{skalse2023starc}. Before moving on, let us briefly unpack Definition~\ref{def:soundness}. 
$J_1(\pi_1) - J_1(\pi_2)$ is the regret, as measured by $R_1$, of using policy $\pi_2$ instead of $\pi_1$. Division by $\max_\pi J_1(\pi) - \min_\pi J_1(\pi)$ normalises this quantity to lie between $0$ and $1$ (though the term is put on the right-hand side of the inequality, instead of being used as a denominator, in order to avoid division by zero when $R_1$ is trivial.
The condition that $J_2(\pi_2) \geq J_2(\pi_1)$ says that $R_2$ prefers $\pi_2$ over $\pi_1$.
Taken together, this means that a pseudometric $d$ on $\mathcal{R}$ is sound if $d(R_1, R_2)$ gives an upper bound on the (normalised) maximal regret that could be incurred under $R_1$ if an arbitrary policy $\pi_1$ is optimised to another policy $\pi_2$ according to $R_2$. Note that we, as a special case, may assume that $\pi_1$ is optimal under $R_1$, and that $\pi_2$ is optimal under $R_2$. Since $d^\mathrm{STARC}_{\tau,\gamma}$ is sound, it induces such a bound.

In addition to this, $d^\mathrm{STARC}_{\tau,\gamma}$ also induces a \emph{lower} bound on worst-case regret. It may not be immediately obvious why this property is desirable. To see why this is the case, note that if a pseudometric $d$ on $\mathcal{R}$ does not induce a lower bound on worst-case regret, then there are reward functions that have a low regret, but large distance under $d$. This would in turn mean that $d$ is not tight, and that it should be possible to find a better way to measure the distance between reward functions.
If a pseudometric induces a lower bound on regret, then these kinds of cases are ruled out. When a pseudometric has this property, we say that it is \emph{complete}:

\begin{definition}\label{def:completeness}
A pseudometric $d$ on $\mathcal{R}$ is \emph{complete} if there exists a positive constant $L$, such that for any reward functions $R_1$ and $R_2$, there exists two policies $\pi_1$ and $\pi_2$ such that $J_2(\pi_2) \geq J_2(\pi_1)$ and
$$
J_1(\pi_1) - J_1(\pi_2) \geq L \cdot (\max_\pi J_1(\pi) - \min_\pi J_1(\pi)) \cdot d(R_1, R_2),
$$
and moreover, if $R_1$ and $R_2$ have the same policy order then $d(R_1, R_2) = 0$.
\end{definition}
\begin{proposition}\label{prop:starc_complete}
$d^\mathrm{STARC}_{\tau,\gamma}$ is complete.
\end{proposition}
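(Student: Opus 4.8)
The plan is to prove completeness of $d^\mathrm{STARC}_{\tau,\gamma}$ by exhibiting, for any pair of reward functions $R_1$ and $R_2$, a pair of policies that witnesses a regret under $R_1$ that is at least a constant multiple of $(\max_\pi J_1(\pi) - \min_\pi J_1(\pi)) \cdot d^\mathrm{STARC}_{\tau,\gamma}(R_1, R_2)$. The natural candidates are $\pi_1$ an optimal policy for $R_1$ and $\pi_2$ an optimal policy for $R_2$; by construction $J_2(\pi_2) \geq J_2(\pi_1)$, so the ordering hypothesis is automatic, and it remains to lower-bound $J_1(\pi_1) - J_1(\pi_2)$. The second clause of Definition~\ref{def:completeness} (that same policy order forces distance zero) is immediate from the fact, already stated after Definition~\ref{def:STARC} and in Appendix~\ref{appendix:starc}, that $d^\mathrm{STARC}_{\tau,\gamma}(R_1,R_2)=0$ iff $R_1$ and $R_2$ induce the same policy ordering.

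First I would reduce to the standardised representatives. Since $c^\mathrm{STARC}_{\tau,\gamma}$ replaces $R$ by a reward differing only by potential shaping and $S'$-redistribution, it leaves $J(\pi)$ unchanged for every policy (potential shaping and $S'$-redistribution are return-preserving, as recalled in the Preliminaries and the definition of triviality), so both the left- and right-hand sides of the completeness inequality are unaffected by applying $c^\mathrm{STARC}_{\tau,\gamma}$; normalisation by the $\ell^2$-norm rescales returns by a positive constant, which rescales $J_1(\pi_1)-J_1(\pi_2)$ and $\max_\pi J_1 - \min_\pi J_1$ by the same factor and hence also cancels. Thus I may assume $R_1 = s^\mathrm{STARC}_{\tau,\gamma}(R_1)$ and $R_2 = s^\mathrm{STARC}_{\tau,\gamma}(R_2)$, i.e. both are canonicalised unit-norm (or zero) rewards lying in the fixed $|\States|(|\Actions|-1)$-dimensional subspace, and $d^\mathrm{STARC}_{\tau,\gamma}(R_1,R_2) = 0.5\,\norm{R_1 - R_2}_2$. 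Now the task is purely geometric: relate the Euclidean gap between two canonical unit reward vectors to the regret gap they induce.

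The key step is to pass from rewards to \emph{policy evaluations via occupancy measures}. For each policy $\pi$ there is an occupancy measure (a vector in $\mathbb{R}^{\SxA}$, or its natural lift to $\SxAxS$), and $J(\pi) = \langle R, \eta_\pi \rangle$ is linear in $R$. Hence $J_1(\pi_1) - J_1(\pi_2) = \langle R_1, \eta_{\pi_1} - \eta_{\pi_2}\rangle$, and with $\pi_1, \pi_2$ optimal for $R_1, R_2$ respectively this equals $\max_\eta \langle R_1, \eta\rangle - \max_\eta \langle R_2 - (R_2 - R_1), \eta \rangle$ evaluated at the $R_2$-maximiser; a standard manipulation gives $J_1(\pi_1) - J_1(\pi_2) \geq \langle R_1 - R_2, \eta_{\pi_1} - \eta_{\pi_2}\rangle$ since $\pi_2$ is $R_2$-optimal, and symmetrically in the other direction, so in particular $J_1(\pi_1)-J_1(\pi_2) + J_2(\pi_2) - J_2(\pi_1) = \langle R_1 - R_2, \eta_{\pi_1} - \eta_{\pi_2}\rangle \le \norm{R_1-R_2}_2 \cdot \norm{\eta_{\pi_1}-\eta_{\pi_2}}_2$. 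The real content, however, is the reverse: I need the span of $\langle R_1, \cdot\rangle$ over occupancy measures, namely $\max_\pi J_1 - \min_\pi J_1$, to be comparable to $\norm{R_1}_2$ for canonical rewards, and I need the regret to be at least a constant times $\norm{R_1-R_2}_2$. Both follow from a single geometric fact: the projection of the canonical reward subspace onto the affine span of occupancy measures is a linear isomorphism onto its image, so there are positive constants $c_{\min}, c_{\max}$ (depending only on $\States, \Actions, \tau, \gamma, \mu_0$) with $c_{\min}\norm{R}_2 \leq \max_\pi J(\pi) - \min_\pi J(\pi) \leq c_{\max}\norm{R}_2$ for every canonical $R$; this is where I would invoke, or reprove, the corresponding lemma from \citet{skalse2023starc}. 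Combining this two-sided bound with the inner-product computation and choosing $\pi_1,\pi_2$ to be the optimisers realising the extremes of $\langle R_1 - R_2, \cdot\rangle$ yields the claimed lower bound with $L$ an explicit constant.

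The main obstacle is establishing the lower bound $\max_\pi J_1(\pi) - \min_\pi J_1(\pi) \geq c_{\min}\norm{R_1-R_2}_2$-type estimate uniformly, i.e. showing that no canonical nonzero reward can have arbitrarily small evaluation-span; equivalently, that the canonicalisation $c^\mathrm{STARC}_{\tau,\gamma}$ really does quotient out \emph{exactly} the kernel of $R \mapsto (\pi \mapsto J(\pi))$ restricted to its subspace, with no residual near-degeneracy. This requires understanding the linear map from canonical rewards to the function $\pi \mapsto J(\pi)$ (or to occupancy-measure functionals) and arguing it is injective, hence bounded below on the unit sphere by compactness. I would lean on the analysis in \citet{skalse2023starc} for this — indeed the soundness result (Proposition~\ref{prop:starc_sound}) already encodes the matching upper direction — so the proof here can reasonably be a short argument reducing completeness to that reference, with the occupancy-measure linear-algebra sketch above filling in the conceptual picture.
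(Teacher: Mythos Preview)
The paper does not actually prove this proposition: immediately after stating it, the text says ``For a proof, see \citet{skalse2023starc}.'' So there is no in-paper argument to compare against, and your plan to ultimately lean on that reference matches exactly what the paper does. Your high-level reduction---pass to standardised representatives (so that potential shaping and $S'$-redistribution are quotiented out and the distance becomes $0.5\norm{R_1-R_2}_2$), then work with occupancy measures and use that the canonical subspace injects into policy-evaluation functionals, with a compactness argument to get a uniform lower bound---is the right shape and is indeed how the cited work proceeds.

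That said, your sketch has a genuine gap at the witness step. Your ``natural candidates'' $\pi_1$ optimal for $R_1$ and $\pi_2$ optimal for $R_2$ do \emph{not} work in general: if $R_1$ has a unique optimal policy with a strict gap, then for all canonical $R_2$ in a neighbourhood of $R_1$ the same policy is optimal for $R_2$, giving $J_1(\pi_1)-J_1(\pi_2)=0$ while $d^\mathrm{STARC}_{\tau,\gamma}(R_1,R_2)>0$. You then silently switch to ``optimisers realising the extremes of $\langle R_1-R_2,\cdot\rangle$'', but those witnesses do not obviously satisfy the ordering hypothesis $J_2(\pi_2)\ge J_2(\pi_1)$, and you never reconcile the two choices. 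Relatedly, the displayed inequality ``$J_1(\pi_1)-J_1(\pi_2)\ge \langle R_1-R_2,\eta_{\pi_1}-\eta_{\pi_2}\rangle$ since $\pi_2$ is $R_2$-optimal'' has the wrong sign: $R_2$-optimality of $\pi_2$ gives $J_2(\pi_1)-J_2(\pi_2)\le 0$, hence $J_1(\pi_1)-J_1(\pi_2)\le \langle R_1-R_2,\eta_{\pi_1}-\eta_{\pi_2}\rangle$, which is an upper bound, not the lower bound you need. The actual argument in \citet{skalse2023starc} handles the witness selection and the lower bound more carefully than your ``combining'' sentence suggests; if you want a self-contained proof rather than a citation, this is the step that needs real work.
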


Note that if $R_1$ and $R_2$ have the same policy order and $\max_\pi J_1(\pi) - \min_\pi J_1(\pi) > 0$, then $d(R_1,R_2) = 0$; the last condition ensures that this also holds when $\max_\pi J_1(\pi) - \min_\pi J_1(\pi) = 0$.
Intuitively, if $d$ is sound, then a small $d$ is \emph{sufficient} for low regret, and if $d$ is complete, then a small $d$ is \emph{necessary} for low regret. 
Soundness implies the absence of false positives, and completeness the absence of false negatives. As per Proposition~\ref{prop:starc_complete}, we have that $d^\mathrm{STARC}_{\tau,\gamma}$ is complete, and hence tight. For a proof, see \citet{skalse2023starc}.

Moreover, if a pseudometric is both sound and complete, then this implies that it, in a certain sense, is unique. Specifically:

\begin{proposition}\label{prop:bilipschitz}
If two pseudometrics $d_1$, $d_2$ on $\mathcal{R}$ are both sound and complete, then $d_1$ and $d_2$ are bilipschitz equivalent.
\end{proposition}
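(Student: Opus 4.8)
The plan is to exploit the sound/complete constants of the two pseudometrics to sandwich each one between scalar multiples of the other, using a single shared pair of policies. Write $U_1, L_1$ for the soundness and completeness constants of $d_1$, and $U_2, L_2$ for those of $d_2$; all four are positive. Fix reward functions $R_1, R_2$ and first suppose $R_1$ is non-trivial, so that $\max_\pi J_1(\pi) - \min_\pi J_1(\pi) > 0$.

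For the lower bound on $d_2$ in terms of $d_1$: apply completeness of $d_1$ to the pair $(R_1, R_2)$ to obtain policies $\pi_1, \pi_2$ with $J_2(\pi_2) \geq J_2(\pi_1)$ and $J_1(\pi_1) - J_1(\pi_2) \geq L_1 \cdot (\max_\pi J_1(\pi) - \min_\pi J_1(\pi)) \cdot d_1(R_1, R_2)$. These same policies satisfy exactly the ordering hypothesis required by soundness of $d_2$, so soundness of $d_2$ gives $J_1(\pi_1) - J_1(\pi_2) \leq U_2 \cdot (\max_\pi J_1(\pi) - \min_\pi J_1(\pi)) \cdot d_2(R_1, R_2)$. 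Dividing out the common positive range factor yields $d_2(R_1, R_2) \geq (L_1 / U_2) \cdot d_1(R_1, R_2)$. Running the same argument with the roles of $d_1$ and $d_2$ exchanged (completeness of $d_2$ produces the witnessing policies, then soundness of $d_1$ bounds the regret from above) gives $d_2(R_1, R_2) \leq (U_1 / L_2) \cdot d_1(R_1, R_2)$.

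It then remains to discharge the assumption that $R_1$ is non-trivial. If $R_1$ is trivial but $R_2$ is not, then, since every pseudometric is symmetric, we may simply apply the bound just derived to the pair $(R_2, R_1)$ (whose first argument is now non-trivial) and rewrite it for $(R_1, R_2)$ using $d_i(R_1,R_2) = d_i(R_2,R_1)$. If both $R_1$ and $R_2$ are trivial, they rank all policies equally and hence have the same policy order, so the final clause in the definition of completeness forces $d_1(R_1, R_2) = d_2(R_1, R_2) = 0$ and the bilipschitz inequalities hold trivially. In every case we obtain $(L_1 / U_2) \cdot d_1(R_1, R_2) \leq d_2(R_1, R_2) \leq (U_1 / L_2) \cdot d_1(R_1, R_2)$ with the two positive constants independent of $R_1, R_2$, which is precisely the statement that $d_1$ and $d_2$ are bilipschitz equivalent.

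The only delicate point — and the step to handle with care — is the trivial-reward case: when $R_1$ is trivial the normalising factor $\max_\pi J_1(\pi) - \min_\pi J_1(\pi)$ vanishes and both the soundness and completeness inequalities degenerate to $0 \leq 0$, conveying no quantitative information. This is exactly why the definition of completeness carries the extra clause pinning $d$ to $0$ on reward functions with equal policy orders, and why the symmetry of pseudometrics is needed to reduce the mixed case to the non-trivial one. Beyond that, the argument is essentially the single observation that completeness of one metric hands us a pair of policies on which soundness of the other metric can be invoked directly, so no genuinely hard estimate is involved.
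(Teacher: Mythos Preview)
Your argument is correct. The paper itself does not supply a proof of this proposition; it merely cites \citet{skalse2023starc}, so there is no in-paper proof to compare against. That said, your route---use completeness of one pseudometric to produce a witnessing pair of policies, then apply soundness of the other pseudometric to that same pair, and cancel the shared positive range factor---is the canonical argument and almost certainly what the cited reference does as well. Your treatment of the degenerate case (trivial $R_1$) via symmetry and the extra clause in the definition of completeness is exactly the right way to close the gap.
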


For a proof, see \citet{skalse2023starc}. Note that this means that any pseudometric on $\mathcal{R}$ that is both sound and complete must be bilipschitz equivalent to $d^\mathrm{STARC}_{\tau,\gamma}$. As such, $d^\mathrm{STARC}_{\tau,\gamma}$ is a canonical pseudometric on $\mathcal{R}$, in the sense that a small $d^\mathrm{STARC}_{\tau,\gamma}$-distance is both necessary and sufficient for low worst-case regret, and that any other pseudometric on $\mathcal{R}$ with this property also must be equivalent to $d^\mathrm{STARC}_{\tau,\gamma}$. Therefore, we think it is justified to regard $d^\mathrm{STARC}_{\tau,\gamma}$ as the \enquote{right} way to quantify the difference between reward functions.

Recent literature has proposed other pseudometrics for quantifying the difference between reward functions, namely EPIC \citep{epic} and DARD \citep{dard}. However, these do not enjoy the same strong theoretical guarantees as $d^\mathrm{STARC}_{\tau,\gamma}$. In particular, they are neither sound nor complete in the sense of $d^\mathrm{STARC}_{\tau,\gamma}$. For more details, see \citet{skalse2023starc}.

\section{Why Not Use EPIC?}\label{appendix:why_not_epic}

Many of our results are invariant to the choice of pseudometric on $\mathcal{R}$, but when we do have to pick a particular metric, we use $d^\mathrm{STARC}_{\tau,\gamma}$. Another prominent pseudometric on $\mathcal{R}$ is EPIC, which was first proposed by \citet{epic}, and has since become the most widely used pseudometric on $\mathcal{R}$ (as judged by the number of citations at the time of writing). So why are we not using EPIC in this paper? There is a simple reason for this, namely that EPIC is sensitive to $S'$-redistribution. Specifically, for any reward function $R$ and any $\delta \in (0,1]$ there exists two reward functions $R_1$, $R_2$ such that $R$, $R_1$, and $R_2$ differ by $S'$-redistribution, but such that the EPIC-distance between $R_1$ and $R_2$ is $1-\delta$. In other words, starting from an arbitrary reward function $R$ and using only $S'$-redistribution, we can find reward functions whose EPIC-distance is arbitrarily close to 1. 

This is problematic, because essentially any behavioural model of interest is invariant to $S'$-redistribution (including $o_{\tau, \gamma}$, $b_{\tau, \gamma, \beta}$, and $c_{\tau, \gamma, \alpha}$). This means that any such model will violate condition 2 in Definition~\ref{def:misspecification_robustness} for all $\epsilon < 1$ when $d^\mathcal{R}$ is the EPIC pseudometric. Moreover, this also means that if $f$ is $\epsilon$-robust to misspecification with $g$ (as defined by the EPIC distance), and $g$ is invariant to $S'$-redistribution, then it must be the case that $\epsilon \geq 0.5$ (c.f.\ Lemma~\ref{lemma:misspecificaton_triangle_inequality}). Since an EPIC-distance of $0.5$ is very large, such results are essentially vacuous. In other words, the EPIC-pseudometric is too loose, and cannot be used to derive any non-trivial results within the setting that we are concerned with in this paper.

In addition to this, $d^\mathrm{STARC}_{\tau,\gamma}$ also yields stronger theoretical guarantees than EPIC; see Appendix~\ref{appendix:starc} and \citet{skalse2023starc}.

\section{Why Are Continuous Models Not Robust to Perturbations?}\label{appendix:explain_perturbation}

In this section, we give a more in-depth interpretation and explanation of Theorem~\ref{thm:not_separating}.

Intuitively speaking, the fundamental reason that Theorem~\ref{thm:not_separating} holds is because there is a mismatch between $\ell^2$-distance and STARC-distance. In particular, if $f$ is continuous, then it must send reward functions that are close under the $\ell^2$-norm to policies that are close under the $\ell^2$-norm. However, there are reward functions that are close under the $\ell^2$-norm but have a large STARC distance. Hence, if $f$ is continuous then it will send some reward functions that are far apart under $d^\mathrm{STARC}_{\tau,\gamma}$ (but close under $\ell^2$) to policies which are close (under $\ell^2$), which in turn means that $f$ is not $\epsilon/\delta$-separating. 

To see this, let $R$ be an arbitrary non-trivial reward function, and let $\epsilon$ be any positive constant. We then have that $\epsilon \cdot R$ and $-\epsilon \cdot R$ have the opposite policy ordering, which means that $d^\mathrm{STARC}_{\tau,\gamma}(\epsilon \cdot R, -\epsilon \cdot R) = 1$. However, by making $\epsilon$ small enough, we can ensure that the $\ell^2$-distance between $\epsilon \cdot R$ and $-\epsilon \cdot R$ is arbitrarily small, and hence that $d^\Pi(\epsilon \cdot R, -\epsilon \cdot R) < \delta$. Thus, we have two reward functions that have a large STARC-distance that are sent to policies that are close.

This example is not too concerning by itself, because it only demonstrates that we may run into trouble for reward functions that are very close to $0$, and we may expect such reward functions to be unlikely (both in the sense that the observed agent is unlikely to have such a reward function, and in the sense that the inductive bias of the learning algorithm is unlikely to generate such a hypothesis). It would therefore be natural to restrict $\hat{\mathcal{R}}$ in some way, for example by imposing a minimum size on the $\ell^2$-norm of all considered reward functions, or by supposing that they are normalised. However, Theorem~\ref{thm:not_separating} tells us that this will not work either: as long as there is some positive constant $c$ such that if $||R||_2 = c$ then $R \in \hat{\mathcal{R}}$, then we can always find reward functions $R_1, R_2$ such that their $\ell^2$-distance is small but $d^\mathrm{STARC}_{\tau,\gamma}(R_1, R_2)$ is large. Theorem~\ref{thm:not_separating} thus applies very widely.

\section{Why Is IRL Sensitive to Misspecified Parameters?}\label{appendix:explain_misspecified_params}

In this section, we give a more in-depth explanation of Theorems~\ref{thm:misspecified_discounting} and \ref{thm:misspecified_env}.

To start with, the reason that Theorem~\ref{thm:misspecified_discounting} is true is that we for any reward function $R_1$ can find a reward function $R_2$ such that $R_1$ and $R_2$ differ by potential shaping with $\gamma_1$, but such that $R_1$ and $R_2$ have a different policy ordering under $\gamma_2$ (when $\gamma_1 \neq \gamma_2$). To see this, consider a simple environment with three states $s_0, s_1, s_2$, where $s_0$ is the initial state, and where the agent can choose to either go directly from $s_0$ to $s_2$, or choose to first visit state $s_1$:

\begin{center}
\begin{tikzpicture}[shorten >=1pt,node distance=2.6cm,on grid,auto]
   \node[state, initial] (s_0)   {$s_0$}; 
   \node[state]         (s_1) [above right=of s_0] {$s_1$};
   \node[state]         (s_2) [above left=of s_1] {$s_2$};
    \path[->] 
    (s_0) edge [swap] node {$\gamma_1 \cdot x$} (s_1)
          edge [swap, sloped] node {} (s_2)
    (s_1) edge [swap] node {$-x$} (s_2)
    (s_2) edge [loop] node {} (s_2)
    ;
\end{tikzpicture}
\end{center}

Let $R_1$ be any reward function over this environment, and let $R_2$ be the reward function that we get if we take $R_1$ and \emph{increase} the reward of going from $s_0$ to $s_1$ by $\gamma_1 \cdot x$, and \emph{decrease} the reward of going from $s_1$ to $s_2$ by $x$. Now, the policy order under discounting with $\gamma_1$ is completely unchanged. At $s_1$, the value of every action is changed by the same amount, and so there is no reason to change action. Similarly, at $s_0$, the value of going to $s_1$ is changed by $\gamma_1 \cdot x - \gamma_1 \cdot x = 0$, and so there is likewise no reason to change action. This transformation corresponds to potential shaping where $\Phi(s_1) = x$ and $\Phi(s_0) = \Phi(s_2) = 0$. Therefore, if $f : \mathcal{R} \to \Pi$ is invariant to potential shaping with $\gamma_1$, then $f(R_1) = f(R_2)$.

However, if we discount with $\gamma_2$, then $R_1$ and $R_2$ have a different policy order.
In particular, the value of going from $s_0$ to $s_1$ is changed by $\gamma_1 \cdot x - \gamma_2 \cdot x = (\gamma_1-\gamma_2) \cdot x \neq 0$. Thus, if the optimal action under $R_1$ at $s_0$ is to go to $s_1$, then by making $x$ sufficiently large or sufficiently small (depending on whether $\gamma_1 > \gamma_2$, or vice versa), then we can create a reward function $R_2$ for which the optimal action instead is to go to $s_2$, and vice versa. 

Thus, in this environment, for every reward function $R_1$ and every $\gamma_1, \gamma_2$ such that $\gamma_1 \neq \gamma_2$, we can find a reward function $R_2$ such that $R_1$ and $R_2$ differ by potential shaping with $\gamma_1$, but such that they have a different ordering of policies when we discount with $\gamma_2$. This in turn means that we cannot be robust to misspecification of $\gamma$; if the observed policy is computed using $\gamma_2$, then there are reward functions that would lead to the same observed policy (and which hence cannot be distinguished by the learning process) but which nonetheless are a large distance from each other as evaluated by $\gamma_1$. This issue is present as long as $\gamma_1 \neq \gamma_2$, and so the degree of misspecification does not matter.

This is the basic mechanism behind Theorem~\ref{thm:misspecified_discounting}, although this theorem additionally shows that the dynamic which we describe above shows up for \emph{any} non-trivial transition function. Intuitively speaking, we can use potential shaping to move reward around in the MDP (so that the agent receives a larger immediate reward at the cost of a lower reward later, or vice versa). However, because of the discounting, later rewards must be made larger than immediate rewards. If the discount values do not match, then this \enquote{compensation} will also not match, leading to a distortion of the policy ordering. Indeed, we can make it so that this distortion dominates the rest of the reward function. For the full details, see the proof of Theorem~\ref{thm:misspecified_discounting}.

As for Theorem~\ref{thm:misspecified_discounting}, this theorem is similarly true because we for any reward function $R_1$ can find a reward function $R_2$ such that $R_1$ and $R_2$ differ by $S'$-redistribution with $\tau_1$, but such that $R_1$ and $R_2$ have a different policy ordering under $\tau_2$ (when $\tau_1 \neq \tau_2$). To see this, suppose we have an MDP with (at least) three states $s_0, s_1, s_2$, and that taking action $a$ in state $s_0$ under transition function $\tau_1$ takes you to state $s_1$ with probability $p$, and $s_2$ with probability $1-p$. Similarly, taking action $a$ in state $s_0$ under transition function $\tau_2$ takes you to state $s_1$ with probability $q$, and $s_2$ with probability $1-q$, where $p \neq q$.

\begin{center}
\begin{tikzpicture}[shorten >=1pt,node distance=2.6cm,on grid,auto]
   \node[state, initial] (s_0)   {$s_0$}; 
   \node[state]         (s_1) [above left=of s_0] {$s_1$};
   \node[state]         (s_2) [above right=of s_0] {$s_2$};
    \path[->] 
    (s_0) edge [] node {$p$, $q$} (s_1)
          edge [swap] node {$1-p$, $1-q$} (s_2)
    ;
\end{tikzpicture}
\end{center}

Let $R_1$ be any reward function, and $X$ any real number. Now note that we, regardless of the values of $R_1$ and $X$, can find values of $R_2(s_0, a, s_1)$ and $R_2(s_0, a, s_2)$ such that
$$
p \cdot R_2(s_0, a, s_1) + (1-p) \cdot R_2(s_0, a, s_2) = p \cdot R_1(s_0, a, s_1) + (1-p) \cdot R_1(s_0, a, s_2)
$$
and such that
$q \cdot R_2(s_0, a, s_1) + (1-q) \cdot R_2(s_0, a, s_2) = X$.

Note that this means that $\mathbb{E}_{S' \sim \tau_1(s_0,a)}[R_2(s_0,a,S')] = \mathbb{E}_{S' \sim \tau_1(s_0,a)}[R_1(s_0,a,S')]$, and that $\mathbb{E}_{S' \sim \tau_2(s_0,a)}[R_2(s_0,a,S')] = X$. Note also that $X$ was selected arbitrarily. In other words, the fact that $R_1$ and $R_2$ differ by $S'$-redistribution under $\tau_1$, leaves the expectation of $R_2$ under $\tau_2$ \emph{completely unconstrained} for all transitions where $\tau_1 \neq \tau_2$. If $\tau_1 \neq \tau_2$ for all states, then the policy order of $R_2$ under $\tau_2$ can be literally any possible policy ordering. This in turn means that we cannot be robust to misspecification of $\tau$; if the observed policy is computed using $\tau_2$, then there are reward functions that would lead to the same observed policy (and which hence cannot be distinguished by the learning process) but which nonetheless have an arbitrarily large distance under $\tau_1$. 

It is also important to note that Theorem~\ref{thm:misspecified_discounting} does not require that $\tau_1 \neq \tau_2$ for all states; indeed, it is enough for them to differ at just a single transition $s,a$. Using the same strategy as above, we can find two reward functions $R_1$ and $R_2$ such that $R_1$ and $R_2$ differ by $S'$-redistribution under $\tau_1$, but such that under $\tau_2$, the value of a given policy $\pi$ under $R_1$ depends primarily on visiting $s,a$ as many times as possible, but the value of $\pi$ under $R_2$ depends primarily on visiting $s,a$ as \emph{few} times as possible. For the full details, see the proof of Theorem~\ref{thm:misspecified_discounting}.

We can also give a somewhat less artificial example, to make this point more intuitive. Consider a simple $N \times N$ gridworld environment. We assume that the agent has four actions, \texttt{up}, \texttt{down}, \texttt{left}, and \texttt{right}. We assume that $\tau_1$ is deterministic, so that if the agent takes action \texttt{up}, then it moves one step up, etc. Moreover, we assume that $\tau_2$ is slippery, so that if the agent takes action \texttt{up}, then it moves up, up-left, and up-right with equal probability, and that if it takes action \texttt{right}, then it moves right, up-right, and down-right with equal probability, etc. For simplicity, we will also assume that the environment has a \enquote{PacMan-like} border, so that if the agent moves up from the top of the environment, then it ends up at the bottom, etc.\footnote{In other words, the environment is shaped like a torus.}

\begin{figure}[H]
    \centering
    \includegraphics[angle=180,width=\textwidth/3]{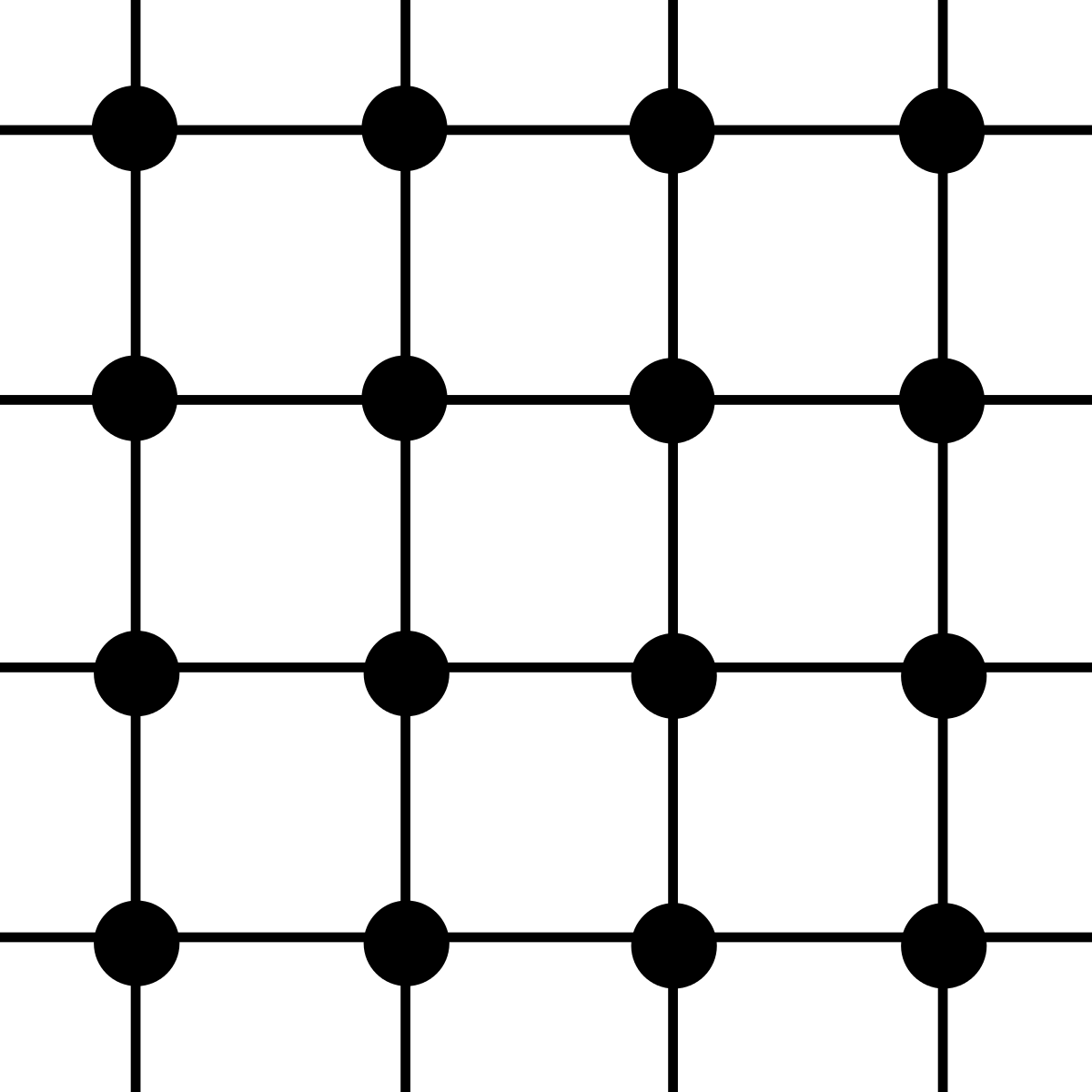}
    \label{fig:grid}
\end{figure}

Now suppose that $R_1$ and $R_2$ reward each transition depending on how the agent moves, according to the following schemas:

\begin{center}
\begin{tikzpicture}[shorten >=1pt,node distance=2.6cm,on grid,auto]
   \node[state] (r1_center)   {}; 
   \node[] (r1_label) [below=of r1_center, yshift=-30]  {$R_1$};
   \node[]         (1_u) [above=of r1_center] {$0$};
   \node[]         (1_ur) [above right=of r1_center] {$1$};
   \node[]         (1_r) [right=of r1_center] {$1$};
   \node[]         (1_dr) [below right=of r1_center] {$1$};
   \node[]         (1_d) [below=of r1_center] {$0$};
   \node[]         (1_dl) [below left=of r1_center] {$-1$};
   \node[]         (1_l) [left=of r1_center] {$-1$};
   \node[]         (1_ul) [above left=of r1_center] {$-1$};
   \node[state] (r2_center) [right=of r1_center, xshift=150]  {}; 
   \node[] (r2_label) [below=of r2_center, yshift=-30]  {$R_2$};
   \node[]         (2_u) [above=of r2_center] {$0$};
   \node[]         (2_ur) [above right=of r2_center] {$3$};
   \node[]         (2_r) [right=of r2_center] {$-1$};
   \node[]         (2_dr) [below right=of r2_center] {$-1$};
   \node[]         (2_d) [below=of r2_center] {$0$};
   \node[]         (2_dl) [below left=of r2_center] {$1$};
   \node[]         (2_l) [left=of r2_center] {$1$};
   \node[]         (2_ul) [above left=of r2_center] {$-3$};
    \path[->] 
    (r1_center) edge [] node {} (1_u)
    (r1_center) edge [] node {} (1_ur)
    (r1_center) edge [] node {} (1_r)
    (r1_center) edge [] node {} (1_dr)
    (r1_center) edge [] node {} (1_d)
    (r1_center) edge [] node {} (1_dl)
    (r1_center) edge [] node {} (1_l)
    (r1_center) edge [] node {} (1_ul)
    (r2_center) edge [] node {} (2_u)
    (r2_center) edge [] node {} (2_ur)
    (r2_center) edge [] node {} (2_r)
    (r2_center) edge [] node {} (2_dr)
    (r2_center) edge [] node {} (2_d)
    (r2_center) edge [] node {} (2_dl)
    (r2_center) edge [] node {} (2_l)
    (r2_center) edge [] node {} (2_ul)
    ;
\end{tikzpicture}
\end{center}


These two reward functions are identical under $\tau_2$, and give the agent $1$ reward for going right, $-1$ for going left, and $0$ for going up or down. However, under $\tau_1$, they are opposites; $R_1$ rewards the agent for going right, and $R_2$ rewards the agent for going left. Thus, if we observe a policy computed under $\tau_2$, then we will not be able to distinguish between $R_1$ and $R_2$, even though they have a large distance under $\tau_1$. Similar issues will occur given any discrepancy between $\tau_1$ and $\tau_2$.

\section{Proofs}

Here, we will provide the proofs of all our theorems and other theoretical results. The proofs are split up in three sections, mirroring the three subsections in Section~\ref{section:misspecification_robustness}.

\setcounter{theorem}{0}
\setcounter{proposition}{2}
\setcounter{corollary}{0}

\subsection{Necessary and Sufficient Conditions}

\begin{theorem}\label{thm:appendix_nas_conditions}
Let $\hat{\mathcal{R}}$ be a set of reward functions, let $f : \hat{\mathcal{R}} \to \Pi$ be a behavioural model, and let $d^\mathcal{R}$ be a pseudometric on $\hat{\mathcal{R}}$.
Suppose that $f(R_1) = f(R_2) \implies d^\mathcal{R}(R_1, R_2) = 0$ for all $R_1, R_2 \in \hat{\mathcal{R}}$.
Then $f$ is $\epsilon$-robust to misspecification with $g$ (as defined by $d^\mathcal{R}$) if and only if $g = f \circ t$ for some $t : \hat{\mathcal{R}} \to \hat{\mathcal{R}}$ such that $d^\mathcal{R}(R, t(R)) \leq \epsilon$ for all $R \in \hat{\mathcal{R}}$, and such that $f \neq g$.
\end{theorem}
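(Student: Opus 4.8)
The plan is to prove the two directions of the biconditional separately, with the forward direction being essentially a repackaging of Definition~\ref{def:misspecification_robustness} and the backward direction requiring us to verify all four conditions of that definition. The key observation that makes everything work is the hypothesis $f(R_1) = f(R_2) \implies d^\mathcal{R}(R_1,R_2) = 0$: this immediately discharges condition~2 of Definition~\ref{def:misspecification_robustness} for \emph{every} $\epsilon \geq 0$, so in both directions we only need to worry about conditions 1, 3, and 4.

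For the forward direction, suppose $f$ is $\epsilon$-robust to misspecification with $g$. By condition~3, $\mathrm{Im}(g) \subseteq \mathrm{Im}(f)$ on $\hat{\mathcal{R}}$, so for each $R \in \hat{\mathcal{R}}$ there is some $R' \in \hat{\mathcal{R}}$ with $f(R') = g(R)$; using this to make a choice for every $R$ defines a function $t : \hat{\mathcal{R}} \to \hat{\mathcal{R}}$ with $f \circ t = g$. (This step uses the axiom of choice in general, but that is unproblematic.) For any such $R$, we then have $f(t(R)) = g(R)$, so by condition~1, $d^\mathcal{R}(R, t(R)) \leq \epsilon$ — here I apply condition~1 with $R_1 := t(R)$ and $R_2 := R$, using symmetry of the pseudometric. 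Finally, condition~4 gives $f \neq g$ directly (since $f(R_1) \neq g(R_2)$ for some $R_1, R_2$ forces $f \neq g$ as functions — strictly, condition~4 says there exist $R_1,R_2$ with $f(R_1) \neq g(R_2)$, which is slightly weaker than $f \neq g$, so I should note that condition~4 as written is what we carry through, and that the theorem statement's ``$f \neq g$'' should be read in that same sense, or alternatively observe that condition~4 plus $\mathrm{Im}(g)\subseteq\mathrm{Im}(f)$ combined with condition~2-type collapsing does yield genuine distinctness — I would check which reading the authors intend and align with it).

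For the backward direction, suppose $g = f \circ t$ with $d^\mathcal{R}(R, t(R)) \leq \epsilon$ for all $R$, and $f \neq g$. I verify the four conditions. Condition~2 is free from the hypothesis, as noted. For condition~1: if $f(R_1) = g(R_2) = f(t(R_2))$, then by the hypothesis $d^\mathcal{R}(R_1, t(R_2)) = 0$, and by assumption $d^\mathcal{R}(t(R_2), R_2) \leq \epsilon$, so the triangle inequality gives $d^\mathcal{R}(R_1, R_2) \leq \epsilon$. Condition~3 is immediate: $\mathrm{Im}(g) = \mathrm{Im}(f \circ t) \subseteq \mathrm{Im}(f)$. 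Condition~4 follows from $f \neq g$. The main obstacle — really the only subtle point — is the precise relationship between ``$f \neq g$'' and condition~4 of Definition~\ref{def:misspecification_robustness}; everything else is a direct application of the pseudometric axioms and the defining hypothesis. I would make sure the two notions are reconciled (or that the theorem is stated with whichever one the proof actually delivers) before finalising.
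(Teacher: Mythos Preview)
Your proof is correct and follows essentially the same route as the paper: construct $t$ via the image-containment condition and verify the distance bound from condition~1 in one direction, and use the triangle inequality with the hypothesis $f(R_1)=f(R_2)\Rightarrow d^{\mathcal R}(R_1,R_2)=0$ to check the four conditions in the other. Your caution about condition~4 versus ``$f\neq g$'' is well-placed, but the paper simply treats them as synonymous (both the main text and the appendix gloss condition~4 as ``$f$ and $g$ are distinct''), so you should align with that intended reading rather than chase the literal one.
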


\begin{proof}
For the first direction, let $t : \hat{\mathcal{R}} \to \hat{\mathcal{R}}$ be a transformation such that $d^\mathcal{R}(R, t(R)) \leq \epsilon$ for all $R \in \hat{\mathcal{R}}$, and let $g = f \circ t$. To show that $f$ is $\epsilon$-robust to misspecification with $g$, we need to show that:
\begin{enumerate}
    \item For all $R_1, R_2 \in \hat{\mathcal{R}}$, if $f(R_1) = g(R_2)$ then $d^\mathcal{R}(R_1, R_2) \leq \epsilon$.
    \item For all $R_1, R_2 \in \hat{\mathcal{R}}$, if $f(R_1) = f(R_2)$ then $d^\mathcal{R}(R_1, R_2) \leq \epsilon$.
    \item For all $R_1 \in \hat{\mathcal{R}}$ there exists an $R_2 \in \hat{\mathcal{R}}$ such that $f(R_2) = g(R_1)$.
    \item There exists $R_1, R_2 \in \hat{\mathcal{R}}$ such that $f(R_1) \neq g(R_2)$.
\end{enumerate}
For the first condition, suppose $f(R_1) = g(R_2)$, which implies that $f(R_1) = f \circ t(R_2)$. By assumption, we have that if $f(R) = f(R')$, then $d^\mathcal{R}(R, R') = 0$. This implies that $d^\mathcal{R}(R_1, t(R_2)) = 0$. Moreover, we have that $d^\mathcal{R}(R, t(R)) \leq \epsilon$ for all $R$; this implies that $d^\mathcal{R}(R_2, t(R_2)) \leq \epsilon$. By the triangle inequality, we then have that $d^\mathcal{R}(R_1, R_2) \leq 0 + \epsilon = \epsilon$. Since $R_1$ and $R_2$ were chosen arbitrarily, this means that condition 1 holds. For condition 2, note that we by assumption have that if $f(R_1) = f(R_2)$, then $d^\mathcal{R}(R_1, R_2) = 0$. Since $0 \leq \epsilon$, this implies that condition 2 holds. For condition 3, let $R_1$ be any reward function, and let $R_2 = t(R_1)$. Now $f(R_2) = g(R_1)$. Since $R_1$ was chosen arbitrarily, this means that condition 3 is satisfied. Condition 4 is satisfied by direct assumption. We have thus shown that if $g = f \circ t$ for some  $t : \hat{\mathcal{R}} \to \hat{\mathcal{R}}$ such that $d^\mathcal{R}(R, t(R)) \leq \epsilon$ for all $R \in \hat{\mathcal{R}}$ and such that $f \neq g$, then $f$ is $\epsilon$-robust to misspecification with $g$ (as defined by $d^\mathcal{R}$).

For the other direction, let $f$ be $\epsilon$-robust to misspecification with $g$ (as defined by $d^\mathcal{R}$).
For each $y \in \mathrm{Im}(g)$, let $R_y \in \hat{\mathcal{R}}$ be some reward function such that $f(R_y) = y$; since $\mathrm{Im}(g) \subseteq \mathrm{Im}(f)$, such an $R_y \in \Rspace$ always exists. 
Now let $t$ be the function that maps each $R \in \hat{\mathcal{R}}$ to $R_{g(R)}$. Since by construction $g(R) = f(R_{g(R)})$, and since $f$ is $\epsilon$-robust to misspecification with $g$ on $\hat{\mathcal{R}}$, we have that $d^\mathcal{R}(R, R_{g(R)}) \leq \epsilon$. Since by construction $t(R) = R_{g(R)}$, this means that $d^\mathcal{R}(R, t(R)) \leq \epsilon$. Thus $t : \hat{\mathcal{R}} \to \hat{\mathcal{R}}$ satisfies the condition that $d^\mathcal{R}(R, t(R)) \leq \epsilon$. Moreover, since $f$ is $\epsilon$-robust to misspecification with $g$, we have that $f \neq g$. Finally, note that $g = f \circ t$. This completes the proof of the other direction, which means that we are done.
\end{proof}

\begin{proposition}\label{prop:appendix_nas_for_small_dard}
A transformation $t : \mathcal{R} \to \mathcal{R}$ satisfies that $d^\mathrm{STARC}_{\tau,\gamma}(R,t(R)) \leq \epsilon$ for all $R \in \mathcal{R}$ if and only if $t$ can be expressed as $t_1 \circ \dots \circ t_{n-1} \circ t_n \circ t_{n+1} \circ \dots \circ t_m$ for some $n$ and $m$ where 
$$
||R - t_n(R)||_2 \leq ||c^\mathrm{STARC}_{\tau,\gamma}(R)||_2 \cdot \sin(2 \arcsin (\epsilon/2))
$$
for all $R$, and for all $i \neq n$ and all $R$, we have that $R$ and $t_i(R)$ differ by potential shaping (with $\gamma$), $S'$-redistribution (with $\tau$), or positive linear scaling.
\end{proposition}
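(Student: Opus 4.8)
The plan is to reduce the statement to elementary geometry on the image subspace $\mathcal{C} := \mathrm{Im}(c^\mathrm{STARC}_{\tau,\gamma})$, using three facts. (i) By Appendix~\ref{appendix:starc}, $c^\mathrm{STARC}_{\tau,\gamma}$ is the orthogonal projection onto the linear subspace $\mathcal{C}$; hence it is linear, idempotent, and $1$-Lipschitz in $\ell^2$, so $\|c^\mathrm{STARC}_{\tau,\gamma}(R_1)-c^\mathrm{STARC}_{\tau,\gamma}(R_2)\|_2 \le \|R_1-R_2\|_2$. (ii) By Proposition~\ref{prop:policy_order} together with Definition~\ref{def:STARC} (and $d^\mathrm{STARC}_{\tau,\gamma}(R_1,R_2)=0 \iff s^\mathrm{STARC}_{\tau,\gamma}(R_1)=s^\mathrm{STARC}_{\tau,\gamma}(R_2)$), we have $s^\mathrm{STARC}_{\tau,\gamma}(R_1)=s^\mathrm{STARC}_{\tau,\gamma}(R_2)$ if and only if $R_1$ and $R_2$ differ by a combination of potential shaping (with $\gamma$), $S'$-redistribution (with $\tau$), and positive linear scaling; hence any transformation of the type permitted for the factors $t_i$ with $i\ne n$, and any finite composition of such transformations, fixes $s^\mathrm{STARC}_{\tau,\gamma}$. (iii) A short computation from Definition~\ref{def:STARC} shows that $d^\mathrm{STARC}_{\tau,\gamma}(R_1,R_2)$ is a strictly increasing function of the angle $\theta\in[0,\pi]$ between $c^\mathrm{STARC}_{\tau,\gamma}(R_1)$ and $c^\mathrm{STARC}_{\tau,\gamma}(R_2)$ (cases with a vanishing projection being absorbed into the trivial-reward convention), so there is a threshold $\theta_\epsilon$ with $d^\mathrm{STARC}_{\tau,\gamma}(R_1,R_2)\le\epsilon \iff \theta\le\theta_\epsilon$, and one checks $\sin\theta_\epsilon = \rho$ where $\rho:=\sin(2\arcsin(\epsilon/2))$.

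For the ``if'' direction, let $t=t_1\circ\dots\circ t_m$ have the stated form and put $p_1:=t_{n+1}\circ\dots\circ t_m$ and $p_2:=t_1\circ\dots\circ t_{n-1}$ (either of which may be the identity). By (ii), $s^\mathrm{STARC}_{\tau,\gamma}\circ p_1 = s^\mathrm{STARC}_{\tau,\gamma}$ and $s^\mathrm{STARC}_{\tau,\gamma}\circ p_2 = s^\mathrm{STARC}_{\tau,\gamma}$, so with $R':=p_1(R)$ we get $d^\mathrm{STARC}_{\tau,\gamma}(R,t(R)) = d^\mathrm{STARC}_{\tau,\gamma}(R',t_n(R'))$. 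By (i) and the displacement hypothesis on $t_n$, $\|c^\mathrm{STARC}_{\tau,\gamma}(R')-c^\mathrm{STARC}_{\tau,\gamma}(t_n(R'))\|_2 \le \|R'-t_n(R')\|_2 \le \|c^\mathrm{STARC}_{\tau,\gamma}(R')\|_2\,\rho$; since moving a nonzero vector by a relative amount $\rho<1$ rotates its direction by at most $\arcsin\rho=\theta_\epsilon$, the angle between $c^\mathrm{STARC}_{\tau,\gamma}(R')$ and $c^\mathrm{STARC}_{\tau,\gamma}(t_n(R'))$ is at most $\theta_\epsilon$, whence by (iii) $d^\mathrm{STARC}_{\tau,\gamma}(R,t(R))\le\epsilon$ for all $R$.

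For the ``only if'' direction, suppose $d^\mathrm{STARC}_{\tau,\gamma}(R,t(R))\le\epsilon$ for all $R$, so by (iii) the angle between $c^\mathrm{STARC}_{\tau,\gamma}(R)$ and $c^\mathrm{STARC}_{\tau,\gamma}(t(R))$ is at most $\theta_\epsilon$. I would build a presentation $t = p_2\circ t_n\circ p_1$ of the required form. Take $t_n$ to fix the $\mathcal{C}^\perp$-component of its input and replace the $\mathcal{C}$-component by its orthogonal projection onto the line through $c^\mathrm{STARC}_{\tau,\gamma}(t(R))$, oriented the same way; then $s^\mathrm{STARC}_{\tau,\gamma}(t_n(R)) = s^\mathrm{STARC}_{\tau,\gamma}(t(R))$, and, by the perpendicular-foot estimate and the angle bound, $\|R-t_n(R)\|_2 = \|c^\mathrm{STARC}_{\tau,\gamma}(R)\|_2\sin\theta \le \|c^\mathrm{STARC}_{\tau,\gamma}(R)\|_2\sin\theta_\epsilon = \|c^\mathrm{STARC}_{\tau,\gamma}(R)\|_2\,\rho$, so $t_n$ satisfies the displacement bound. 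Since $t_n(R)$ and $t(R)$ have the same standardised reward, by (ii) they differ by a combination of the three elementary transformations, so there is a correction map $p_2$ with $p_2(t_n(R))=t(R)$ for all $R$; taking $p_1$ to be the identity (or a harmless positive rescaling chosen to make $t_n\circ p_1$ injective) and writing $p_2$ out as a finite composition of elementary policy-order-preserving factors gives the claimed decomposition.

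The step I expect to be the main obstacle is this last one: turning the pointwise ``correction'' $R\mapsto t(R)$ on the image of $t_n$ into an honest finite composition of potential-shaping, $S'$-redistribution, and positive-scaling maps. Two issues must be handled: $p_2$ must be single-valued, which forces $t_n(R_1)=t_n(R_2)\implies t(R_1)=t(R_2)$ and is arranged by choosing $t_n$ (and $p_1$) so that $t_n\circ p_1$ is injective; and each elementary factor of $p_2$ is itself non-injective (e.g.\ a reward-dependent potential shaping can collapse distinct rewards), so the potentials, redistributions, and scale factors it uses must be expressible as functions of the intermediate reward rather than of the original $R$, which again hinges on injectivity of the preceding factors. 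Once a compatible choice of $p_1$ and $t_n$ is fixed, the remaining work --- the trigonometric identities linking $\epsilon$, $\theta_\epsilon$, and $\rho$, and the trivial-reward edge cases --- is routine.
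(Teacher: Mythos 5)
Your proposal follows essentially the same route as the paper's proof: for the ``if'' direction you use that every factor other than $t_n$ preserves $s^\mathrm{STARC}_{\tau,\gamma}$, that $c^\mathrm{STARC}_{\tau,\gamma}$ is a $1$-Lipschitz orthogonal projection, and the same tangent-line trigonometry; for the ``only if'' direction you standardise, move $c^\mathrm{STARC}_{\tau,\gamma}(R)$ to its perpendicular foot on the ray through $c^\mathrm{STARC}_{\tau,\gamma}(t(R))$, and correct with policy-order-preserving maps, which is exactly the paper's six-step walk repackaged as $p_2 \circ t_n \circ p_1$ (your version of the ``if'' direction is in fact slightly cleaner, since the paper only notes $d^\mathrm{STARC}_{\tau,\gamma}(R,t_i(R))=0$ without spelling out how the factors chain). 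The obstacle you flag --- turning the pointwise correction into genuine global function factors when $t_n \circ p_1$ identifies rewards with different $t$-images --- is a real subtlety, but it is not one the paper resolves: the published proof only exhibits, for each fixed $R$, a path of admissible moves from $R$ to $t(R)$, implicitly letting each step depend on the original $R$, so your treatment is at least as careful as the paper's, and your proposed fix (using the leftover freedom in $p_1$ and $t_n$ to avoid collisions) is a reasonable sketch of what a fully rigorous version would need. One small remark: your threshold angle $\theta_\epsilon = 2\arcsin(\epsilon/2)$ matches the paper's diagrammatic computation, which treats $\epsilon$ as the distance $\lVert s^\mathrm{STARC}_{\tau,\gamma}(R) - s^\mathrm{STARC}_{\tau,\gamma}(t(R))\rVert_2$; applying Definition~\ref{def:STARC} literally (with its factor $0.5$) would give threshold $2\arcsin(\epsilon)$, so here you inherit the paper's convention rather than introducing a new discrepancy.
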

\begin{proof}
For the first direction, suppose $d^\mathrm{STARC}_{\tau,\gamma}(R,t(R)) \leq \epsilon$ for all $R \in \mathcal{R}$, and let $R$ be an arbitrarily selected reward function. We will show that it is possible to navigate from $R$ to $t(R)$ using the described transformations.

Recall that $d^\mathrm{STARC}_{\tau,\gamma}(R_1, R_2)$ is computed by first applying $c^\mathrm{STARC}_{\tau,\gamma}$ to both $R_1$ and $R_2$, then normalising the resulting vectors, and finally measuring their $\ell^2$-distance. This means that $s^\mathrm{STARC}_{\tau,\gamma}(R)$ and $s^\mathrm{STARC}_{\tau,\gamma}(t(R))$ can be placed in the following diagram, where $\epsilon' \leq \epsilon$:

\begin{figure}[H]
    \centering
    \includegraphics[width=2\textwidth/3]{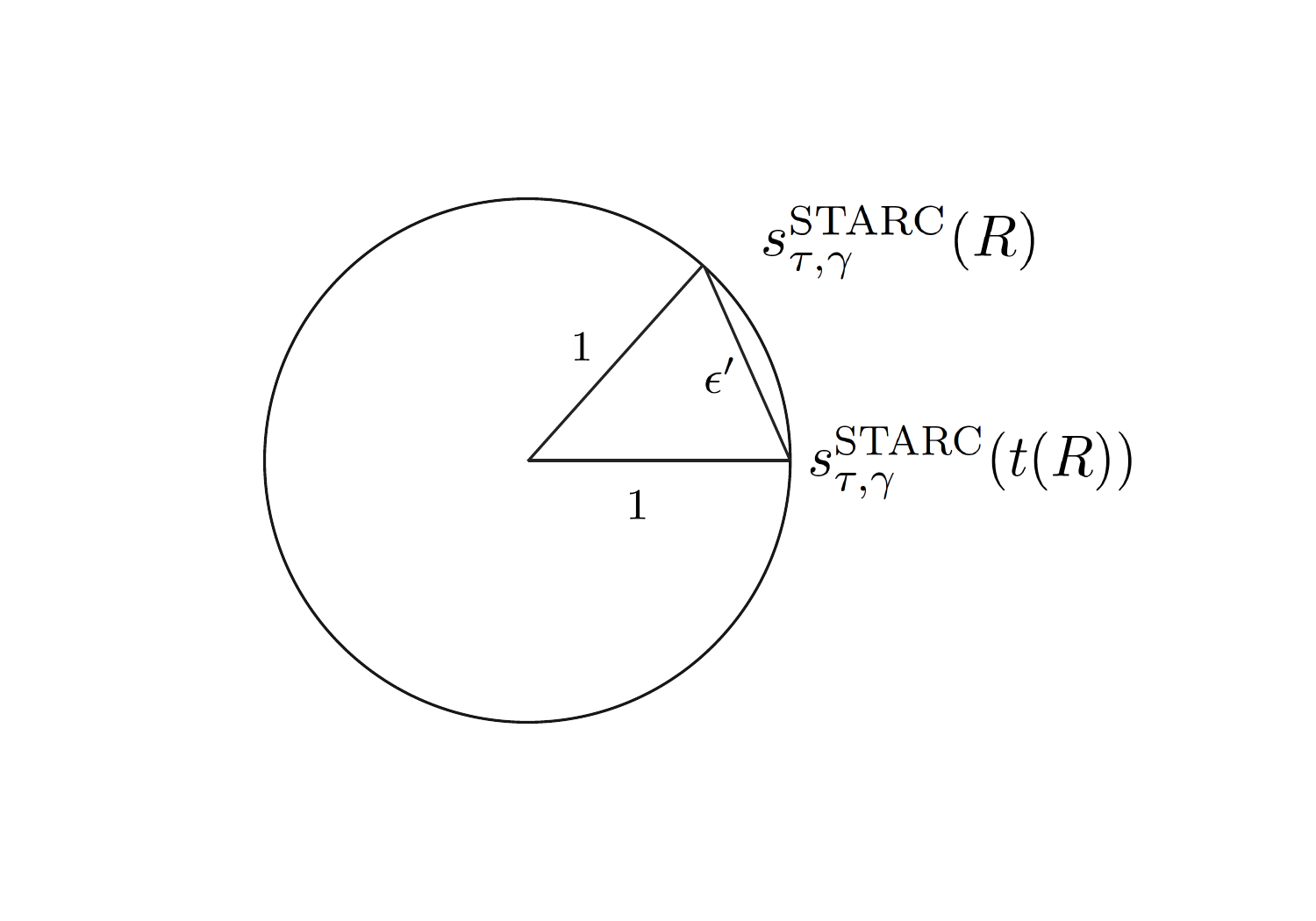}
    \label{fig:fig1}
\end{figure}

Now, elementary trigonometry tells us that $\theta = 2 \arcsin (\epsilon'/2)$. Moreover, suppose we extend $s^\mathrm{STARC}_{\tau,\gamma}(R)$ to make the triangle a right triangle, as follows:

\begin{figure}[H]
    \centering
    \includegraphics[width=2\textwidth/3]{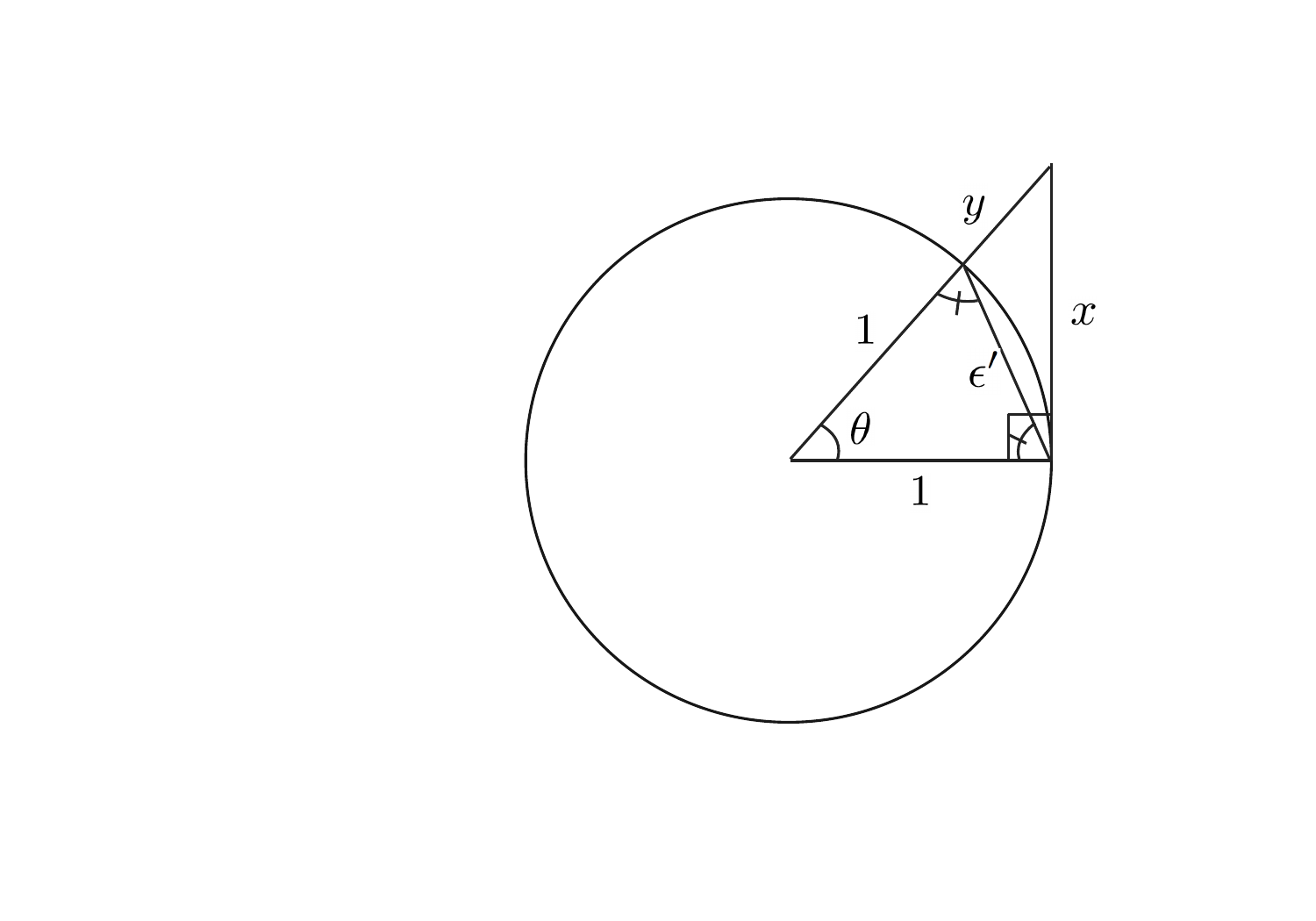}
    \label{fig:fig2}
\end{figure}

Here elementary trigonometry again tells us that $x/(1+y) = \sin(2 \arcsin (\epsilon'/2))$, or that $x = (1+y)\sin(2 \arcsin (\epsilon'/2))$. This means that we can go from $R$ to $t(R)$ as follows:
\begin{enumerate}
    \item Apply $c^\mathrm{STARC}_{\tau,\gamma}$. Since $R$ and $c^\mathrm{STARC}_{\tau,\gamma}(R)$ differ by potential shaping and $S'$-redistribution, this transformation can be expressed as a combination of potential shaping and $S'$-redistribution. Call the resulting vector $R'$.
    \item Normalise $R'$, so that its magnitude is 1. This transformation is an instance of positive linear scaling. Call the resulting vector $R''$.
    \item Scale $R''$ until it forms a right triangle with $s^\mathrm{STARC}_{\tau,\gamma}(t(R))$. This transformation is an instance of positive linear scaling. Call the resulting vector $R'''$.
    \item Move from $R'''$ to $s^\mathrm{STARC}_{\tau,\gamma}(t(R))$. This will move $R'''$ by $(1+y)\sin(2 \arcsin (\epsilon'/2))$, where $(1+y) = ||R'''||_2$. Moreover, since $R'''$ is in the image of $c^\mathrm{STARC}_{\tau,\gamma}$, we have that $R''' = c^\mathrm{STARC}_{\tau,\gamma}(R''')$, and so $||R'''||_2 = ||c^\mathrm{STARC}_{\tau,\gamma}(R''')||_2$. This means that $R'''$ is moved by $||c^\mathrm{STARC}_{\tau,\gamma}(R''')||_2 \cdot \sin(2 \arcsin (\epsilon'/2))$. Since $\epsilon' \leq \epsilon \leq \pi/2$, this means that $||R''' - s^\mathrm{STARC}_{\tau,\gamma}(t(R))||_2 \leq ||c^\mathrm{STARC}_{\tau,\gamma}(R''')||_2 \cdot \sin(2 \arcsin (\epsilon/2))$.
    \item Move from $s^\mathrm{STARC}_{\tau,\gamma}(t(R))$ to $c^\mathrm{STARC}_{\tau,\gamma}(t(R))$. Since $s^\mathrm{STARC}_{\tau,\gamma}(t(R))$ is simply a normalised version of $c^\mathrm{STARC}_{\tau,\gamma}(t(R))$, this is an instance of positive linear scaling.
    \item Move from $c^\mathrm{STARC}_{\tau,\gamma}(t(R))$ to $t(R)$. Since $t(R)$ and $c^\mathrm{STARC}_{\tau,\gamma}(t(R))$ differ by potential shaping and $S'$-redistribution, this transformation can be expressed as a combination of potential shaping and $S'$-redistribution.
\end{enumerate}
Thus, for an arbitrary reward function $R$, we can find a series of transformations that fit the given description. This completes the first direction.

For the other direction, suppose $t$ can be expressed as $t_1 \circ \dots \circ t_{n-1} \circ t_n \circ t_{n+1} \circ \dots \circ t_m$ where $$
||R - t_n(R)||_2 \leq ||c^\mathrm{STARC}_{\tau,\gamma}(R)||_2 \cdot \sin(2 \arcsin (\epsilon/2))
$$
for all $R$, and for all $i \neq n$ and all $R$, we have that $R$ and $t_i(R)$ differ by potential shaping (with $\gamma$), $S'$-redistribution (with $\tau$), or positive linear scaling. 

Recall that $d^\mathrm{STARC}_{\tau,\gamma}$ is invariant to potential shaping (with $\gamma$), $S'$-redistribution (with $\tau$), and positive linear scaling; this means that $d^\mathrm{STARC}_{\tau,\gamma}(R, t_i(R)) = 0$ for $i \neq n$. 

For $t_n$, recall that $c^\mathrm{STARC}_{\tau,\gamma}$ is a linear orthogonal projection; this means that $||c^\mathrm{STARC}_{\tau,\gamma}(R_1) - c^\mathrm{STARC}_{\tau,\gamma}(R_2)||_2 \leq ||R_1 - R_2||_2$. As such, if $||R - t_n(R)||_2 \leq ||c^\mathrm{STARC}_{\tau,\gamma}(R)||_2 \cdot \sin(2 \arcsin (\epsilon/2))$, then $||c^\mathrm{STARC}_{\tau,\gamma}(R) - c^\mathrm{STARC}_{\tau,\gamma}(t_n(R))||_2 \leq ||c^\mathrm{STARC}_{\tau,\gamma}(R)||_2 \cdot \sin(2 \arcsin (\epsilon/2))$ as well. Consider the following diagram:

\begin{figure}[H]
    \centering
    \includegraphics[width=2\textwidth/3]{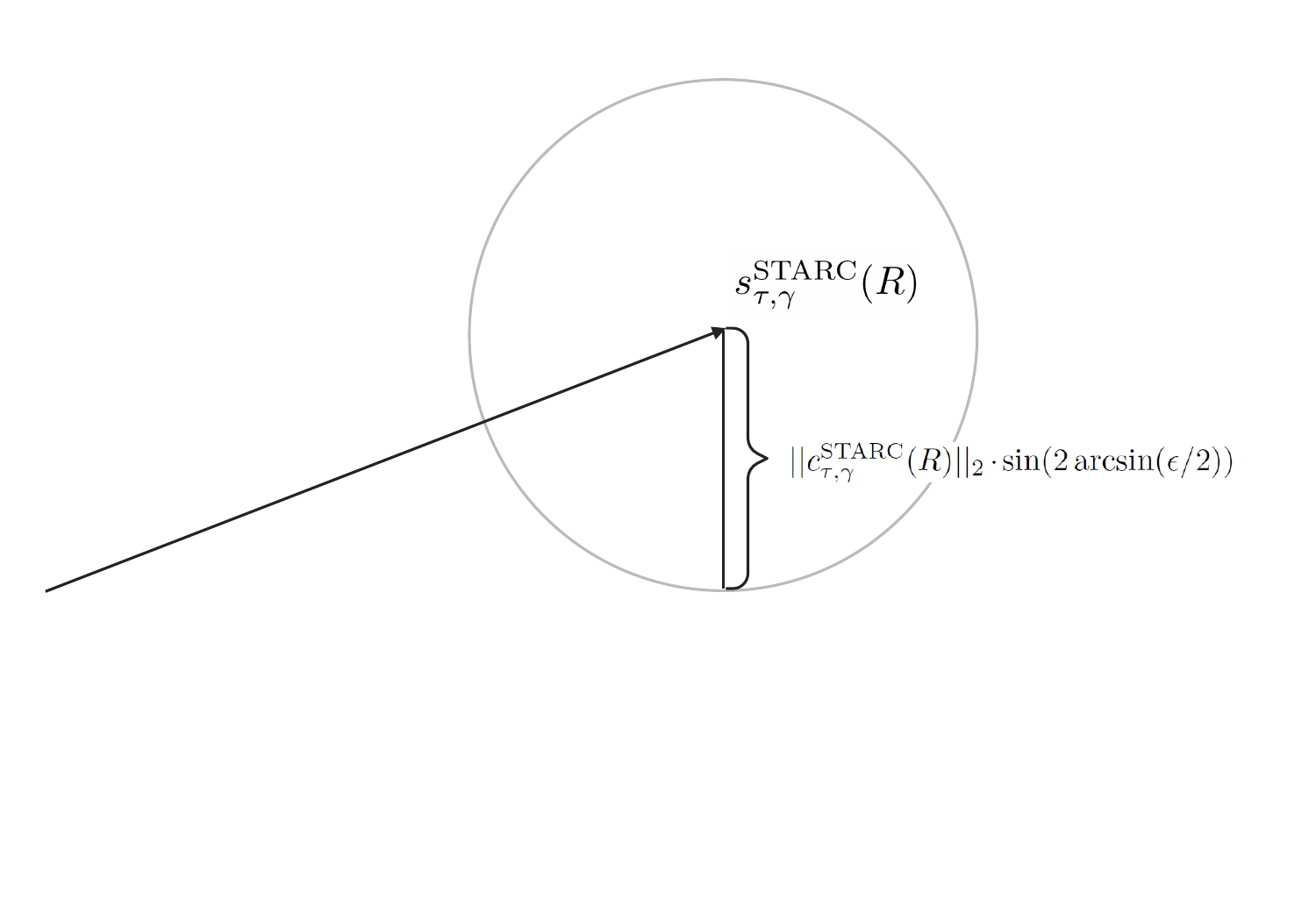}
    \label{fig:fig3}
\end{figure}

Now $c^\mathrm{STARC}_{\tau,\gamma}(t_n(R))$ is located within circle in the diagram above. The vector within this circle that maximises the distance to $c^\mathrm{STARC}_{\tau,\gamma}(R)$ \emph{after normalisation} lies on the tangent of the circle:

\begin{figure}[H]
    \centering
    \includegraphics[width=2\textwidth/3]{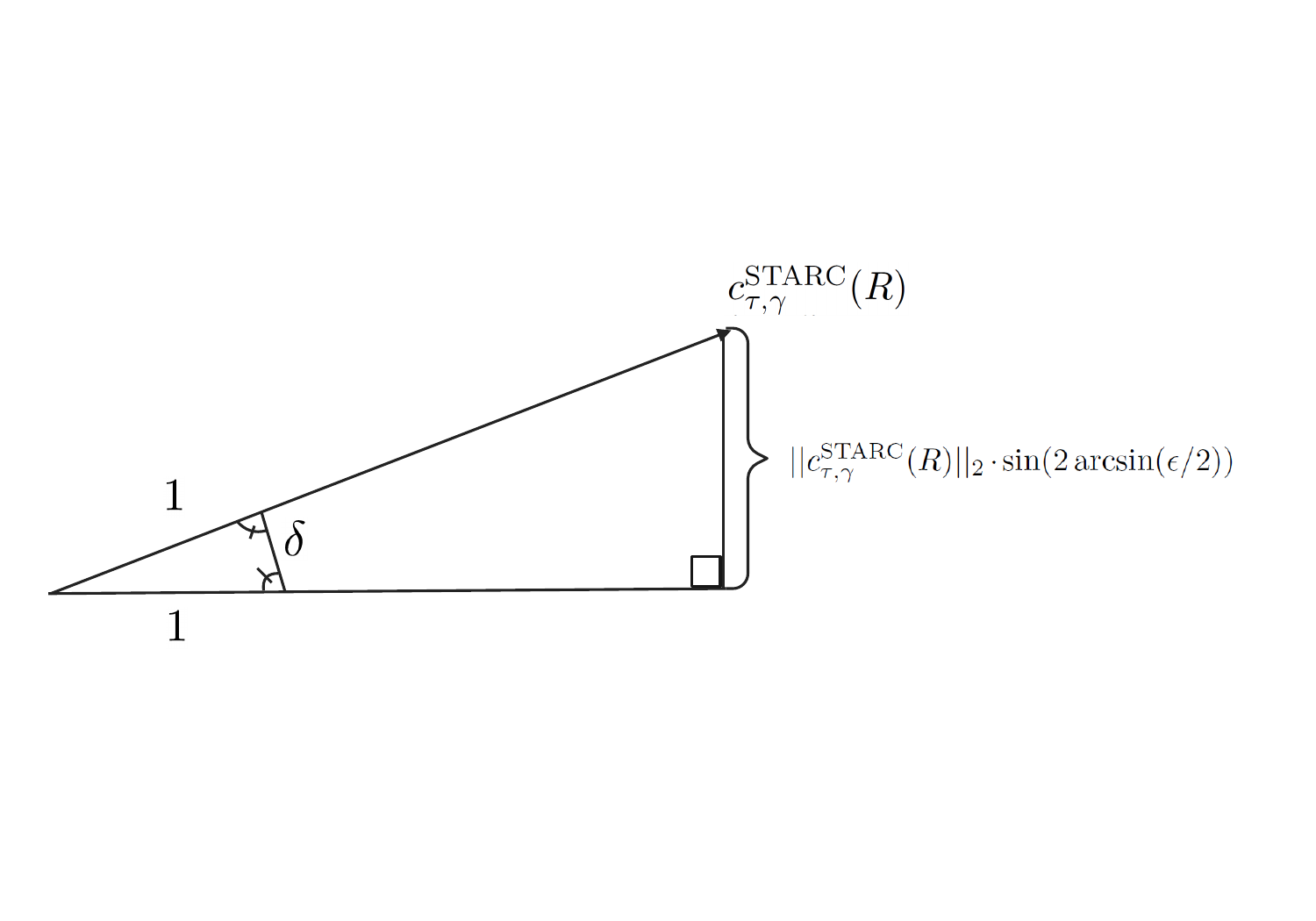}
    \label{fig:fig4}
\end{figure}

Elementary trigonometry now tells us that 
$$
\sin(\theta) = \frac{||c^\mathrm{STARC}_{\tau,\gamma}(R)||_2 \cdot \sin(2 \arcsin (\epsilon/2))}{||c^\mathrm{STARC}_{\tau,\gamma}(R)||_2},
$$
which gives that $\theta = 2 \arcsin (\epsilon/2)$. From this, we have that $\delta = \epsilon$, and so $d^\mathrm{STARC}_{\tau,\gamma}(R,t(R)) \leq \epsilon$. This completes the other direction, and hence the proof.
\end{proof}

\begin{corollary}\label{cor:appendix_nas_for_br_mce}
Let $\hat{\mathcal{R}}$ be a set of reward functions, $\tau$ be a transition function, $\gamma$ a discount factor, $\beta$ a temperature parameter, and $\alpha$ a weight parameter. 
Moreover, let $\hat{T_\epsilon}$ be the set of all functions $t : \mathcal{R} \to \mathcal{R}$ that satisfy Proposition~\ref{prop:nas_for_small_dard}, and additionally satisfy that $t(R) \in \hat{\mathcal{R}}$ for all $R \in \hat{\mathcal{R}}$. 
Then $b_{\tau, \gamma, \beta} : \hat{\mathcal{R}} \to \Pi$ is $\epsilon$-robust to misspecification with $g$ (as defined by $d^\mathrm{STARC}_{\tau,\gamma}$) if and only if $g = b_{\tau, \gamma, \beta} \circ t$ for some $t \in \hat{T_\epsilon}$, and $c_{\tau, \gamma, \alpha} : \hat{\mathcal{R}} \to \Pi$ is $\epsilon$-robust to misspecification with $g$ (as defined by $d^\mathrm{STARC}_{\tau,\gamma}$) if and only if $g = c_{\tau, \gamma, \alpha} \circ t$ for some $t \in \hat{T_\epsilon}$.
\end{corollary}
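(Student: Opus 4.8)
The plan is to derive the Corollary as an immediate consequence of Theorem~\ref{thm:nas_conditions} together with Proposition~\ref{prop:nas_for_small_dard}. First I would check that Theorem~\ref{thm:nas_conditions} is applicable with $f = b_{\tau,\gamma,\beta}$ (and, in a second, identical pass, $f = c_{\tau,\gamma,\alpha}$) and $d^\mathcal{R} = d^\mathrm{STARC}_{\tau,\gamma}$; i.e.\ I must verify that $f(R_1) = f(R_2)$ implies $d^\mathrm{STARC}_{\tau,\gamma}(R_1,R_2) = 0$. By Proposition~\ref{prop:BR_MCE_ambiguity}, both $b_{\tau,\gamma,\beta}$ and $c_{\tau,\gamma,\alpha}$ are invariant to potential shaping (with $\gamma$) and $S'$-redistribution (with $\tau$) and to no other transformations, so $f(R_1) = f(R_2)$ holds exactly when $R_1$ and $R_2$ differ by potential shaping and $S'$-redistribution. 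This is a special case of differing by potential shaping, $S'$-redistribution, and positive linear scaling, so by Proposition~\ref{prop:policy_order} such $R_1, R_2$ have the same ordering of policies, and hence $d^\mathrm{STARC}_{\tau,\gamma}(R_1,R_2) = 0$ (this is precisely the characterisation of $d^\mathrm{STARC}_{\tau,\gamma}$-distance zero recalled in Section~\ref{section:reward_metrics}). So the hypothesis of Theorem~\ref{thm:nas_conditions} is met.

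Applying Theorem~\ref{thm:nas_conditions} then gives: $b_{\tau,\gamma,\beta} : \hat{\mathcal{R}} \to \Pi$ is $\epsilon$-robust to misspecification with $g$ (under $d^\mathrm{STARC}_{\tau,\gamma}$) if and only if $g = b_{\tau,\gamma,\beta} \circ t$ for some $t : \hat{\mathcal{R}} \to \hat{\mathcal{R}}$ with $d^\mathrm{STARC}_{\tau,\gamma}(R,t(R)) \leq \epsilon$ for all $R \in \hat{\mathcal{R}}$ and with $b_{\tau,\gamma,\beta} \neq g$, and likewise for $c_{\tau,\gamma,\alpha}$. So it remains only to show that, as far as the resulting set of $g$'s is concerned, this class of transformations $t$ coincides with $\hat{T_\epsilon}$. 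The forward inclusion is immediate: if $t \in \hat{T_\epsilon}$ then, by Proposition~\ref{prop:nas_for_small_dard}, $d^\mathrm{STARC}_{\tau,\gamma}(R,t(R)) \leq \epsilon$ for all $R \in \mathcal{R}$, hence in particular on $\hat{\mathcal{R}}$, and by the defining property of $\hat{T_\epsilon}$ the restriction $t|_{\hat{\mathcal{R}}}$ maps $\hat{\mathcal{R}}$ into $\hat{\mathcal{R}}$; thus $b_{\tau,\gamma,\beta} \circ t$ on $\hat{\mathcal{R}}$ is of the form produced by Theorem~\ref{thm:nas_conditions}, and the same for $c_{\tau,\gamma,\alpha}$.

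For the reverse inclusion -- which I expect to be the only mildly fiddly point -- I would resolve the mismatch between the domain $\hat{\mathcal{R}}$ appearing in Theorem~\ref{thm:nas_conditions} and the domain $\mathcal{R}$ appearing in Proposition~\ref{prop:nas_for_small_dard} and in the definition of $\hat{T_\epsilon}$ via an extension-by-the-identity argument. Given $t : \hat{\mathcal{R}} \to \hat{\mathcal{R}}$ with $d^\mathrm{STARC}_{\tau,\gamma}(R,t(R)) \leq \epsilon$ for all $R \in \hat{\mathcal{R}}$, define $\bar t : \mathcal{R} \to \mathcal{R}$ by $\bar t(R) = t(R)$ for $R \in \hat{\mathcal{R}}$ and $\bar t(R) = R$ otherwise. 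Then $d^\mathrm{STARC}_{\tau,\gamma}(R,\bar t(R)) \leq \epsilon$ for every $R \in \mathcal{R}$ (it is at most $\epsilon$ on $\hat{\mathcal{R}}$ and equals $0$ off $\hat{\mathcal{R}}$, since $d^\mathrm{STARC}_{\tau,\gamma}(R,R) = 0$), so by Proposition~\ref{prop:nas_for_small_dard} the map $\bar t$ satisfies that proposition; moreover $\bar t(R) = t(R) \in \hat{\mathcal{R}}$ for all $R \in \hat{\mathcal{R}}$, so $\bar t \in \hat{T_\epsilon}$; and $b_{\tau,\gamma,\beta} \circ \bar t$ agrees with $b_{\tau,\gamma,\beta} \circ t = g$ on all of $\hat{\mathcal{R}}$, which is all that is required since these are maps out of $\hat{\mathcal{R}}$. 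The identical argument with $c_{\tau,\gamma,\alpha}$ in place of $b_{\tau,\gamma,\beta}$ settles the MCE case. The side condition $b_{\tau,\gamma,\beta} \neq g$ (resp.\ $c_{\tau,\gamma,\alpha} \neq g$) is carried through verbatim from Theorem~\ref{thm:nas_conditions}, matching the statement of Corollary~\ref{cor:nas_for_br_mce}. The only obstacle worth flagging is recognising that this extension works -- that padding with the identity neither inflates the STARC-distance nor leaves $\hat{\mathcal{R}}$ on the inputs that matter -- after which the remainder is routine bookkeeping.
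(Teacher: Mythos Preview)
Your proposal is correct and follows the same approach as the paper, which states only ``Immediate from Theorem~\ref{thm:appendix_nas_conditions} and Proposition~\ref{prop:appendix_nas_for_small_dard}.'' You in fact go further than the paper does: you explicitly verify the hypothesis of Theorem~\ref{thm:nas_conditions} via Propositions~\ref{prop:policy_order} and~\ref{prop:BR_MCE_ambiguity}, and you identify and resolve the domain mismatch (Theorem~\ref{thm:nas_conditions} yields $t:\hat{\mathcal{R}}\to\hat{\mathcal{R}}$ whereas $\hat{T_\epsilon}$ is defined on all of $\mathcal{R}$) via the extension-by-identity argument, a detail the paper leaves implicit.
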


\begin{proof}
Immediate from Theorem~\ref{thm:appendix_nas_conditions} and Proposition~\ref{prop:appendix_nas_for_small_dard}.
\end{proof}


\begin{proposition}\label{prop:appendix_opt_not_robust}
Unless $|\States| = 1$ and $|\Actions| = 2$, then for any $\tau$ and any $\gamma$ there exists an $E > 0$ such that for all $\epsilon < E$, there is no behavioural model $g$ such that $o_{\tau,\gamma}$ is $\epsilon$-robust to misspecification with $g$ (as defined by $d^\mathrm{STARC}_{\tau,\gamma}$).
\end{proposition}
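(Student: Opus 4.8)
The plan is to deduce everything from condition 2 of Definition~\ref{def:misspecification_robustness}, which makes no reference to $g$. It therefore suffices to exhibit two reward functions $R_1, R_2 \in \mathcal{R}$ with $o_{\tau,\gamma}(R_1) = o_{\tau,\gamma}(R_2)$ but $d^\mathrm{STARC}_{\tau,\gamma}(R_1, R_2) = c > 0$; then, taking $E = c$, for every $\epsilon < E$ the pair $(R_1, R_2)$ witnesses a failure of condition 2 for $f = o_{\tau,\gamma}$, so no behavioural model $g$ can make $o_{\tau,\gamma}$ be $\epsilon$-robust to misspecification with $g$. (If $|\Actions| = 1$ there is only one policy, so condition 4 fails for every $g$ and we may take $E = 1$; so assume $|\Actions| \geq 2$ from now on.)

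To build the pair, fix a deterministic policy $\pi^\star$ and let $U \subseteq \mathcal{R}$ be the set of reward functions $R$ whose set of optimal state--action pairs $\{(s,a) : Q^\star_R(s,a) = V^\star_R(s)\}$ equals $\{(s, \pi^\star(s)) : s \in \States\}$. Since $o_{\tau,\gamma}(R)$ places equal probability on all optimal actions in each state and here there is exactly one, every $R \in U$ satisfies $o_{\tau,\gamma}(R) = \pi^\star$. First, $U$ is nonempty: the reward $R_0$ with $R_0(s,a,s') = 1$ if $a = \pi^\star(s)$ and $0$ otherwise lies in $U$ for every $\tau$ and $\gamma$, since following $\pi^\star$ earns value $1/(1-\gamma)$ from every state and no policy can do better, so $V^\star_{R_0} \equiv 1/(1-\gamma)$ and hence $Q^\star_{R_0}(s,a)$ equals $1/(1-\gamma)$ if $a = \pi^\star(s)$ and $\gamma/(1-\gamma)$ otherwise. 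Second, $U$ is open, because membership is cut out by finitely many strict inequalities among the quantities $Q^\star_R(s,a)$, and $R \mapsto Q^\star_R$ is continuous (indeed piecewise linear, since $V^\star_R = \max_{\pi \text{ deterministic}} V^\pi_R$ is a maximum of finitely many linear functions of $R$). So it only remains to find $R_1, R_2 \in U$ with different policy orderings, since then $d^\mathrm{STARC}_{\tau,\gamma}(R_1, R_2) > 0$.

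Here I would use a dimension count together with Proposition~\ref{prop:policy_order}. By that proposition, any reward sharing the policy ordering of a given $R$ differs from $R$ only by potential shaping, $S'$-redistribution, and positive linear scaling, so the policy-ordering class of $R$ is contained in the subspace $\mathrm{span}(R) + W$, where $W$ is the span of the potential-shaping directions $(s,a,s') \mapsto \gamma\Phi(s') - \Phi(s)$ (of dimension at most $|\States|$, since $\gamma \neq 1$) and the $S'$-redistribution directions $\{\Delta : \sum_{s'} \tau(s' \mid s,a)\Delta(s,a,s') = 0 \text{ for all } s,a\}$ (of dimension $|\States||\Actions|(|\States|-1)$, these being $|\States||\Actions|$ linearly independent constraints). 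Hence every policy-ordering class lies in a subspace of dimension at most $1 + |\States| + |\States||\Actions|(|\States|-1)$, and a short computation shows this is strictly less than $\dim \mathcal{R} = |\States|^2 |\Actions|$ precisely when $|\States|(|\Actions|-1) > 1$ --- that is, for $|\Actions| \geq 2$, precisely when we are not in the excluded case $|\States| = 1$, $|\Actions| = 2$. A nonempty open set like $U$ cannot be contained in a proper subspace, so $U$ meets at least two distinct policy-ordering classes; picking $R_1, R_2 \in U$ from two of them and setting $E = d^\mathrm{STARC}_{\tau,\gamma}(R_1,R_2) > 0$ completes the argument.

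I expect the dimension count to be the only delicate point: one must determine the dimension of a policy-ordering class precisely enough that the comparison with $\dim \mathcal{R}$ flips exactly at $|\States| = 1$, $|\Actions| = 2$. This means checking that the $S'$-redistribution constraints are genuinely independent (they involve pairwise disjoint sets of coordinates) and that potential shaping contributes a full $|\States|$ dimensions (which is exactly where $\gamma \in (0,1)$ enters, via $\gamma - 1 \neq 0$). The reduction to condition 2, the witness $R_0$, and the openness of $U$ are all routine.
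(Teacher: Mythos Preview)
Your proof is correct and follows the same high-level strategy as the paper: both reduce the claim entirely to condition~2 of Definition~\ref{def:misspecification_robustness} by exhibiting $R_1,R_2$ with $o_{\tau,\gamma}(R_1)=o_{\tau,\gamma}(R_2)$ but $d^\mathrm{STARC}_{\tau,\gamma}(R_1,R_2)>0$, and then set $E$ to be that distance.

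The difference is in how the witness pair is produced. The paper uses a short cardinality/pigeonhole argument: when $|\States|\geq 2$ or $|\Actions|\geq 3$ there are uncountably many distinct policy-ordering classes, while $\mathrm{Im}(o_{\tau,\gamma})$ is finite, so some fibre of $o_{\tau,\gamma}$ must contain rewards from two different classes. Your route is more constructive: you build an explicit nonempty open set $U\subseteq\mathcal{R}$ on which $o_{\tau,\gamma}$ is constant, and then use Proposition~\ref{prop:policy_order} to bound the dimension of any single policy-ordering class by $1+|\States|+|\States||\Actions|(|\States|-1)<|\States|^2|\Actions|$ exactly when $|\States|(|\Actions|-1)>1$, so that $U$ cannot sit inside one class. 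In effect, your dimension count is precisely what makes the paper's asserted ``uncountably many orderings'' rigorous; the paper's version is terser but leaves that step to the reader. Your treatment also handles the degenerate case $|\Actions|=1$ explicitly via condition~4, which the paper's pigeonhole argument does not cover (with a single action there is only one policy ordering, so the pigeonhole step fails there even though the proposition still holds).
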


\begin{proof}
We will first show that unless $|\States| = 1$ and $|\Actions| = 2$, there exists reward functions $R_1, R_2$ and an $E > 0$ such that  $o_{\tau,\gamma}(R_1) = o_{\tau,\gamma}(R_2)$, but $d^\mathrm{STARC}_{\tau,\gamma}(R_1, R_2) = E$.
In particular, note that if $|\States| \geq 2$ or $|\Actions| \geq 3$, then there exists uncountably many reward functions that do not have the same ordering of policies. Moreover, also note that $\mathrm{Im}(o_{\tau,\gamma})$ is finite. By the pigeonhole principle, this means that there must exist a policy $\pi \in \mathrm{Im}(o_{\tau,\gamma})$ and reward functions $R_1, R_2$ such that $o_{\tau,\gamma}(R_1) = o_{\tau,\gamma}(R_2) = \pi$, and such that $R_1$ and $R_2$ do not have the same ordering of policies. Moreover, recall that $d^\mathrm{STARC}_{\tau,\gamma}(R_1, R_2) = 0$ if and only if $R_1$ and $R_2$ have the same ordering of policies. Thus, unless $|\States| = 1$ and $|\Actions| = 2$, there exists reward functions $R_1, R_2$ such that  $o_{\tau,\gamma}(R_1) = o_{\tau,\gamma}(R_2)$, but $d^\mathrm{STARC}_{\tau,\gamma}(R_1, R_2) = E > 0$. Thus $o_{\tau,\gamma}$ violates condition 2 of Definition~\ref{def:misspecification_robustness} for all $\epsilon < E$.
\end{proof}

\subsection{Perturbation Robustness}

\begin{theorem}\label{thm:appendix_perturbation_robustness_nas}
Let $\hat{\mathcal{R}}$ be a set of reward functions, let $f : \hat{\mathcal{R}} \to \Pi$ be a behavioural model, let $d^\mathcal{R}$ be a pseudometric on $\hat{\mathcal{R}}$, and let $d^\Pi$ be a pseudometric on $\Pi$.
Then $f$ is $\epsilon$-robust to $\delta$-perturbation (as defined by $d^\mathcal{R}$ and $d^\Pi$) if and only if $f$ is $\epsilon/\delta$-separating (as defined by $d^\mathcal{R}$ and $d^\Pi$).
\end{theorem}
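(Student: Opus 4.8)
The plan is to prove the two implications of the biconditional separately, in each case simply unfolding Definitions~\ref{def:perturbation_robustness}, \ref{def:separating}, and~\ref{def:misspecification_robustness} and checking the clauses by hand; no auxiliary lemmas are needed.

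For the direction ``$\epsilon/\delta$-separating $\implies$ $\epsilon$-robust to $\delta$-perturbation'', suppose $f$ is $\epsilon/\delta$-separating and let $g : \hat{\mathcal{R}} \to \Pi$ be an arbitrary $\delta$-perturbation of $f$ with $\mathrm{Im}(g) \subseteq \mathrm{Im}(f)$; I will check the four conditions of Definition~\ref{def:misspecification_robustness} for the pair $(f,g)$. Condition~3 is the hypothesis $\mathrm{Im}(g) \subseteq \mathrm{Im}(f)$ verbatim, and condition~4 holds because $g \neq f$ (being a $\delta$-perturbation), so choosing any $R$ with $f(R) \neq g(R)$ and taking $R_1 = R_2 = R$ suffices. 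For condition~1, if $f(R_1) = g(R_2)$ then, since $g$ is a $\delta$-perturbation, $d^\Pi(f(R_1), f(R_2)) = d^\Pi(g(R_2), f(R_2)) \leq \delta$; taking the contrapositive of the $\epsilon/\delta$-separating property then gives $d^\mathcal{R}(R_1, R_2) \leq \epsilon$. Condition~2 is the $g = f$ instance of the same argument: $f(R_1) = f(R_2)$ gives $d^\Pi(f(R_1), f(R_2)) = 0 \leq \delta$, so separating again yields $d^\mathcal{R}(R_1, R_2) \leq \epsilon$. As $g$ was arbitrary, this establishes the implication.

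For the converse I will argue by contraposition. Assume $f$ is not $\epsilon/\delta$-separating, so there are $R_1, R_2 \in \hat{\mathcal{R}}$ with $d^\mathcal{R}(R_1, R_2) > \epsilon$ while $d^\Pi(f(R_1), f(R_2)) \leq \delta$; the goal is to produce a $\delta$-perturbation $g$ of $f$ with $\mathrm{Im}(g) \subseteq \mathrm{Im}(f)$ for which $f$ fails to be $\epsilon$-robust to misspecification with $g$. The candidate is the one-point edit $g$ given by $g(R_2) := f(R_1)$ and $g(R) := f(R)$ for every $R \neq R_2$: by construction $\mathrm{Im}(g) \subseteq \mathrm{Im}(f)$, and $d^\Pi(f(R), g(R))$ equals $0$ for $R \neq R_2$ and is at most $\delta$ at $R = R_2$, so $g$ is a genuine $\delta$-perturbation provided $g \neq f$, i.e.\ provided $f(R_1) \neq f(R_2)$. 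In that case $f(R_1) = g(R_2)$ together with $d^\mathcal{R}(R_1, R_2) > \epsilon$ violates clause~1 of Definition~\ref{def:misspecification_robustness}, so $f$ is not $\epsilon$-robust to misspecification with $g$, hence not $\epsilon$-robust to $\delta$-perturbation. If instead $f(R_1) = f(R_2)$, then clause~2 of Definition~\ref{def:misspecification_robustness} already fails for $f$ against \emph{every} behavioural model (the witnessing pair $R_1, R_2$ has $d^\mathcal{R}(R_1, R_2) > \epsilon$), so it suffices to note that $f$ admits \emph{some} admissible $\delta$-perturbation --- e.g.\ the one-point edit sending $R_2$ to any element of $\mathrm{Im}(f)$ lying within $d^\Pi$-distance $\delta$ of $f(R_2)$ --- and $f$ then fails to be $\epsilon$-robust to misspecification with that perturbation too.

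The forward direction is pure bookkeeping; the step I expect to require the most care is the sub-case $f(R_1) = f(R_2)$ in the converse, where the one-point edit degenerates to $f$ itself and one must instead fall back on clause~2 of Definition~\ref{def:misspecification_robustness} together with a separate remark about the existence of an admissible perturbation. The remainder is a direct translation between the defining inequality of $\epsilon/\delta$-separating and clauses~1 and~2 of misspecification robustness.
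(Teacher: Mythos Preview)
Your argument follows the paper's proof closely: the forward direction checks the four clauses of Definition~\ref{def:misspecification_robustness} directly from the separating hypothesis, and the converse constructs an explicit perturbation from a witnessing pair $R_1,R_2$. The only cosmetic difference is that the paper swaps $f$ at \emph{both} $R_1$ and $R_2$ (setting $g(R_1)=f(R_2)$ and $g(R_2)=f(R_1)$) rather than editing at a single point; either construction gives a $\delta$-perturbation with image contained in $\mathrm{Im}(f)$ that violates clause~1.

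You are in fact more careful than the paper in isolating the sub-case $f(R_1)=f(R_2)$, where both the swap and the one-point edit collapse to $g=f$ and one must fall back on clause~2 instead; the paper's proof silently ignores this case. Your fallback --- pick any other element of $\mathrm{Im}(f)$ within $d^\Pi$-distance $\delta$ of $f(R_2)$ --- is the right idea but not quite airtight: if no such element exists (for instance when $\mathrm{Im}(f)$ is a singleton, or more generally when the points of $\mathrm{Im}(f)$ are pairwise $\delta$-separated under $d^\Pi$), then there are \emph{no} admissible $\delta$-perturbations at all, and $f$ is vacuously $\epsilon$-robust to $\delta$-perturbation while still failing to be $\epsilon/\delta$-separating. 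This is a corner-case wrinkle in the theorem as stated rather than a defect in your strategy, and the paper's proof shares the same gap.
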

\begin{proof}
For the first direction, suppose $f$ is $\epsilon/\delta$-separating, and let $g$ be a $\delta$-perturbation of $f$ with $\mathrm{Im}(g) \subseteq \mathrm{Im}(f)$. We will show that $f$ and $g$ satisfy the conditions of Definition~\ref{def:misspecification_robustness}.
For the first condition, let $R_1, R_2$ be two arbitrary reward functions in $\hat{\mathcal{R}}$ such that $f(R_1) = g(R_2)$. Since $g$ is a $\delta$-perturbation of $f$, we have that $d^\Pi(g(R_2), f(R_2)) \leq \delta$. Since $f(R_1) = g(R_2)$, straightforward substitution thus gives us that $d^\Pi(f(R_1), f(R_2)) \leq \delta$. Since $f$ is $\epsilon/\delta$-separating, this means that $d^\mathcal{R}(R_1, R_2) \leq \epsilon$. Since $R_1$ and $R_2$ were chosen arbitrarily, this means that if $f(R_1) = g(R_2)$ then $d^\mathcal{R}(R_1, R_2) \leq \epsilon$. Thus, the first condition of Definition~\ref{def:misspecification_robustness} holds. For the second condition, note that if $f(R_1) = f(R_2)$, then $d^\Pi(f(R_1), f(R_2)) = 0 \leq \delta$. Since $f$ is $\epsilon/\delta$-separating, this means that $d^\mathcal{R}(R_1, R_2) \leq \epsilon$, which means that the second condition is satisfied as well.
The third condition is satisfied, since we assume that $\mathrm{Im}(g) \subseteq \mathrm{Im}(f)$, and the fourth condition is satisfied by the definition of $\delta$-perturbations. This means that $f$ and $g$ satisfy all the conditions of Definition~\ref{def:misspecification_robustness}, and thus $f$ is $\epsilon$-robust to misspecification with $g$. Since $g$ was chosen arbitrarily, this means that $f$ is $\epsilon$-robust to misspecification with any $\delta$-perturbation $g$ such that $\mathrm{Im}(g) \subseteq \mathrm{Im}(f)$. Thus $f$ is $\epsilon$-robust to $\delta$-perturbation.

For the second direction, suppose $f$ is \emph{not} $\epsilon/\delta$-separating. This means that there exist $R_1, R_2 \in \hat{\mathcal{R}}$ such that $d^\mathcal{R}(R_1, R_2) > \epsilon$ and $d^\Pi(f(R_1), f(R_2)) \leq \delta$. Now let $g : \hat{\mathcal{R}} \to \hat{\mathcal{R}}$ be the behavioural model where $g(R_1) = f(R_2)$,  $g(R_2) = f(R_1)$, and $g(R) = f(R)$ for all $R \not\in \{R_1, R_2\}$. Now $g$ is a $\delta$-perturbation of $f$. However, $f$ is not $\epsilon$-robust to misspecification with $g$, since $g(R_1) = f(R_2)$, but $d^\mathcal{R}(R_1, R_2) > \epsilon$. Thus, if $f$ is not $\epsilon/\delta$-separating then $f$ is not $\epsilon$-robust to $\delta$-perturbation, which in turn means that if $f$ is $\epsilon$-robust to $\delta$-perturbation, then $f$ is must be $\epsilon/\delta$-separating.
\end{proof}

\begin{theorem}\label{thm:appendix_not_separating}
Let $d^\mathcal{R}$ be $d^\mathrm{STARC}_{\tau,\gamma}$, and let $d^\Pi$ be a pseudometric on $\Pi$ which satisfies the condition that for all $\delta_1$ there exists a $\delta_2$ such that if $||\pi_1-\pi_2||_2 < \delta_2$ then $d^\Pi(\pi_1, \pi_2) < \delta_1$.
Let $c$ be any positive constant, and let $\hat{\mathcal{R}}$ be a set of reward functions such that if $||R||_2 = c$ then $R \in \hat{\mathcal{R}}$.
Let $f : \hat{\mathcal{R}} \to \Pi$ be continuous. Then $f$ is not $\epsilon/\delta$-separating for any $\epsilon < 1$ or $\delta > 0$. 
\end{theorem}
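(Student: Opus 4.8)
The plan is to show that $f$ is not $\epsilon/\delta$-separating by exhibiting, for every $\epsilon<1$ and every $\delta>0$, two reward functions $R_1,R_2\in\hat{\mathcal{R}}$ with $d^\mathrm{STARC}_{\tau,\gamma}(R_1,R_2)=1>\epsilon$ but $d^\Pi(f(R_1),f(R_2))\le\delta$. The driving observation is that the standardisation map $s^\mathrm{STARC}_{\tau,\gamma}$ is violently discontinuous at every trivial reward: an arbitrarily small perturbation of a trivial reward can be standardised to a prescribed unit vector, and a second, equally small perturbation to its antipode, so two $\ell^2$-close rewards can have $d^\mathrm{STARC}_{\tau,\gamma}$-distance exactly $1$ -- whereas a continuous $f$ must send them to nearby policies. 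Recall (cf.\ Definition~\ref{def:STARC} and Appendix~\ref{appendix:starc}) that $c^\mathrm{STARC}_{\tau,\gamma}$ is the orthogonal projection of $\mathcal{R}$ onto $V:=\mathrm{Im}(c^\mathrm{STARC}_{\tau,\gamma})$; in particular it fixes $V$ pointwise, and its kernel $V^\perp$ contains every trivial reward (a reward is trivial iff it differs from the $0$ reward by potential shaping and $S'$-redistribution, i.e.\ iff $c^\mathrm{STARC}_{\tau,\gamma}$ sends it to $0$), hence every constant reward; so $V^\perp\neq\{0\}$.

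Concretely, fix a trivial reward $R_0$ with $||R_0||_2=c$ (rescale a nonzero constant reward) and a unit vector $u\in V$. Such a $u$ exists whenever $V\neq\{0\}$, i.e.\ whenever $|\Actions|\ge 2$; if $V=\{0\}$ then $|\Actions|=1$, there is a single policy, and $d^\mathrm{STARC}_{\tau,\gamma}\equiv 0$, a degenerate case I would simply exclude. For $\phi\in(0,\pi/2]$ put
\[
R_1(\phi)=\cos\phi\,R_0+c\sin\phi\,u,\qquad R_2(\phi)=\cos\phi\,R_0-c\sin\phi\,u .
\]
Since $R_0\perp u$ (orthogonality of $\ker$ and $\mathrm{Im}$ for an orthogonal projection), $||R_0||_2=c$ and $||u||_2=1$, both $R_i(\phi)$ have $\ell^2$-norm $c$ and therefore lie in $\hat{\mathcal{R}}$. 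As $c^\mathrm{STARC}_{\tau,\gamma}$ annihilates $R_0$ and fixes $u$, we get $c^\mathrm{STARC}_{\tau,\gamma}(R_1(\phi))=c\sin\phi\,u$ and $c^\mathrm{STARC}_{\tau,\gamma}(R_2(\phi))=-c\sin\phi\,u$, which for $\phi>0$ normalise to $s^\mathrm{STARC}_{\tau,\gamma}(R_1(\phi))=u$ and $s^\mathrm{STARC}_{\tau,\gamma}(R_2(\phi))=-u$; hence $d^\mathrm{STARC}_{\tau,\gamma}(R_1(\phi),R_2(\phi))=\tfrac12||u-(-u)||_2=1$ for every such $\phi$.

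It remains to drive the two policies together. Since $||R_1(\phi)-R_2(\phi)||_2=2c\sin\phi\to 0$ as $\phi\to 0^+$, and $R_1(\phi),R_2(\phi),R_0$ all lie in $\hat{\mathcal{R}}$ with $R_i(\phi)\to R_0$, continuity of $f$ gives $f(R_1(\phi))\to f(R_0)$ and $f(R_2(\phi))\to f(R_0)$, so $||f(R_1(\phi))-f(R_2(\phi))||_2\to 0$. Given $\delta>0$, choose $\delta'$ from the hypothesis on $d^\Pi$ so that $||\pi_1-\pi_2||_2<\delta'$ implies $d^\Pi(\pi_1,\pi_2)<\delta$, then choose $\phi$ small enough that $||f(R_1(\phi))-f(R_2(\phi))||_2<\delta'$. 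Setting $R_1:=R_1(\phi)$, $R_2:=R_2(\phi)$ yields $d^\mathrm{STARC}_{\tau,\gamma}(R_1,R_2)=1>\epsilon$ and $d^\Pi(f(R_1),f(R_2))<\delta$, so $f$ is not $\epsilon/\delta$-separating; since $\epsilon<1$ and $\delta>0$ were arbitrary, we are done.

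The one genuine subtlety is the hypothesis $\hat{\mathcal{R}}\supseteq\{R:||R||_2=c\}$, whose worst case is $\hat{\mathcal{R}}$ equal to this sphere: then the naive ``shrink a reward towards $0$'' picture of Appendix~\ref{appendix:explain_perturbation} is unavailable. The circular-arc construction above is exactly what repairs this -- it stays on the sphere of radius $c$ while still passing through a trivial reward -- and the triviality of $R_0$ is essential, since it is what makes the two $\ell^2$-close perturbations project onto opposite rays of $V$ under $c^\mathrm{STARC}_{\tau,\gamma}$ and hence standardise to antipodal points. Everything else (idempotence and orthogonality of $c^\mathrm{STARC}_{\tau,\gamma}$, the norm computations, and chaining the two continuity statements) is routine.
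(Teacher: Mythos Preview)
Your proof is correct and takes essentially the same approach as the paper: both construct two rewards on the sphere of radius $c$ of the form (trivial reward) $\pm$ (small vector in $\mathrm{Im}(c^\mathrm{STARC}_{\tau,\gamma})$), so that the STARC standardisations are antipodal while the $\ell^2$-distance is arbitrarily small, and then invoke continuity of $f$ together with the hypothesis on $d^\Pi$. The paper writes these as $\pm\epsilon R + R_\Phi$ with $R$ orthogonal to all trivial rewards and $R_\Phi$ a potential-shaping reward chosen so the norm equals $c$; your trigonometric parametrisation $\cos\phi\,R_0\pm c\sin\phi\,u$ is the same construction, arguably packaged a little more cleanly since the norm-$c$ constraint is built in.
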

\begin{proof}
Let $R$ be a non-trivial reward function that is orthogonal to all trivial reward functions. Since the set of all trivial reward functions form a linear subspace, such a reward function $R$ exists. Note that $R$ must not necessarily be contained in $\hat{\mathcal{R}}$.

We now have that for any positive constant $\epsilon$, it is the case that $\epsilon R$ and $-\epsilon R$ have the opposite ordering of policies, and thus $d^\mathrm{STARC}_{\tau,\gamma}(\epsilon R, -\epsilon R) = 1$. Next, let $R_\Phi$ be some potential-shaping reward function such that $||\epsilon R + R_\Phi||_2 = c$. Since potential shaping does not change the ordering of policies, we have that $\epsilon R + R_\Phi$ and $\epsilon R + R_\Phi$ must have the opposite ordering of policies as well, and so $d^\mathrm{STARC}_{\tau,\gamma}(\epsilon R + R_\Phi, -\epsilon R + R_\Phi) = 1$.
Moreover, since $R_\Phi$ is trivial, we have that both $\epsilon R$ and $-\epsilon R$ are orthogonal to $R_\Phi$, and so $||-\epsilon R + R_\Phi||_2 = c$ as well. Since $||\epsilon R + R_\Phi||_2 = ||\epsilon R + R_\Phi||_2 = c$, we have that both $\epsilon R + R_\Phi$ and $-\epsilon R + R_\Phi$ are contained in $\hat{\mathcal{R}}$.

Let $\delta_1$ be any positive constant. By assumption, there exists a $\delta_2$ such that if $||\pi_1 - \pi_2||_2 < \delta_2$ then $d^\Pi(\pi_1, \pi_2) < \delta_1$. Moreover, since $f$ is continuous, there exists an $\epsilon_1$ such that if $||R_1 - R_2||_2 < \epsilon_1$, then $||f(R_1)- f(R_2)||_2 < \delta_2$. Next, note that by making $\epsilon$ sufficiently small, we can ensure that the $\ell^2$-distance between $\epsilon R + R_\Phi$ and $\epsilon R + R_\Phi$ is arbitrarily small (and, in particular, less than $\epsilon_1$). 

Thus, for any positive $\delta$ there exist reward functions $\epsilon R + R_\Phi$ and $-\epsilon R + R_\Phi$ that are both contained in $\hat{\mathcal{R}}$, such that $d^\Pi(f(\epsilon R + R_\Phi), f(-\epsilon R + R_\Phi)) < \delta$, and such that $d^\mathrm{STARC}_{\tau,\gamma}(\epsilon R + R_\Phi, -\epsilon R + R_\Phi) = 1$. Thus $f$ is not $\epsilon/\delta$-separating for any $\delta > 0$ and any $\epsilon < 1$.
\end{proof}


\subsection{Misspecified Parameters}

\begin{lemma}\label{lemma:misspecificaton_triangle_inequality}
Let $\hat{\mathcal{R}}$ be a set of reward functions, let $f, g : \hat{\mathcal{R}} \to \Pi$ be two behavioural models, and let $d^\mathcal{R}$ be a pseudometric on $\hat{\mathcal{R}}$. Suppose $f$ is $\epsilon$-robust to misspecification with $g$ (as defined by $d^\mathcal{R}$). Then if $g(R_1) = g(R_2)$, we have that $d^\mathcal{R}(R_1, R_2) \leq 2\epsilon$.
\end{lemma}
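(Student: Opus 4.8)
The plan is to exploit condition 3 of Definition~\ref{def:misspecification_robustness} to find a single reward function whose $f$-image equals the common policy $g(R_1) = g(R_2)$, and then apply condition 1 twice together with the triangle inequality for the pseudometric $d^\mathcal{R}$.

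Concretely, suppose $g(R_1) = g(R_2)$, and write $\pi$ for this common policy. Since $f$ is $\epsilon$-robust to misspecification with $g$, condition 3 gives $\mathrm{Im}(g) \subseteq \mathrm{Im}(f)$ on $\hat{\mathcal{R}}$, so there exists $R_3 \in \hat{\mathcal{R}}$ with $f(R_3) = \pi$. Now $f(R_3) = \pi = g(R_1)$, so condition 1 yields $d^\mathcal{R}(R_3, R_1) \leq \epsilon$; likewise $f(R_3) = \pi = g(R_2)$ gives $d^\mathcal{R}(R_3, R_2) \leq \epsilon$. The triangle inequality for $d^\mathcal{R}$ then gives
$$
d^\mathcal{R}(R_1, R_2) \leq d^\mathcal{R}(R_1, R_3) + d^\mathcal{R}(R_3, R_2) \leq \epsilon + \epsilon = 2\epsilon,
$$
which is the claim.

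There is essentially no obstacle here: the only subtlety is making sure condition 3 is invoked to guarantee that the intermediate reward $R_3$ genuinely lies in $\hat{\mathcal{R}}$ (so that condition 1 — which is quantified over $R_1, R_2 \in \hat{\mathcal{R}}$ — actually applies to the pairs $(R_3, R_1)$ and $(R_3, R_2)$), and that $d^\mathcal{R}$ being a pseudometric is all that the triangle step needs. No use of condition 2 or condition 4 is required.
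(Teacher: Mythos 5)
Your proof is correct and matches the paper's own argument exactly: invoke condition 3 to obtain $R_3 \in \hat{\mathcal{R}}$ with $f(R_3) = g(R_1) = g(R_2)$, apply condition 1 twice, and conclude by the triangle inequality. No issues.
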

\begin{proof}
Let $f, g : \hat{\mathcal{R}} \to \hat{\mathcal{R}}$ be two behavioural models, and suppose $f$ is $\epsilon$-robust to misspecification with $g$. Let $R_1, R_2 \in \hat{\mathcal{R}}$ be any two reward functions such that $g(R_1) = g(R_2)$. From condition 3 in Definition~\ref{def:misspecification_robustness}, we have that there must be a reward function $R_3$ such that $f(R_3) = g(R_1) = g(R_2)$. From condition 1 in Definition~\ref{def:misspecification_robustness}, we have that $d^\mathcal{R}(R_3, R_1) \leq \epsilon$ and $d^\mathcal{R}(R_3, R_2) \leq \epsilon$. The triangle inequality then implies that $d^\mathcal{R}(R_1, R_2) \leq 2\epsilon$. 
\end{proof}


\begin{lemma}\label{lemma:PS_additive_invariance}
If $f_\gamma : \mathcal{R} \to \Pi$ is invariant to potential shaping with $\gamma$, then for all $\tau$ and all $\gamma_1, \gamma_2$ such that $\gamma_1 \neq \gamma_2$ and $\tau$ is non-trivial, then there exists a reward function $R^\dagger$ such that $f_{\gamma_1}(R) = f_{\gamma_1}(R + \alpha R^\dagger)$ for all $R \in \mathcal{R}$ and all $\alpha \in \mathbb{R}$.
\end{lemma}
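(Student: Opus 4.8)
The plan is to take $R^\dagger$ to be a potential-shaping reward for $\gamma_1$. I would fix a potential $\Phi : \States \to \mathbb{R}$ (to be specified) and set $R^\dagger(s,a,s') = \gamma_1 \Phi(s') - \Phi(s)$. The displayed conclusion is then immediate: for every $R \in \mathcal{R}$ and every $\alpha \in \mathbb{R}$, the reward $R + \alpha R^\dagger$ differs from $R$ by potential shaping with $\gamma_1$ (with potential $\alpha\Phi$), so the hypothesis that $f_{\gamma_1}$ is invariant to potential shaping with $\gamma_1$ yields $f_{\gamma_1}(R) = f_{\gamma_1}(R + \alpha R^\dagger)$. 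What the remaining hypotheses (``$\gamma_1 \neq \gamma_2$'' and ``$\tau$ non-trivial'') buy is the freedom to also choose $\Phi$ so that $R^\dagger$ is \emph{not} trivial once the return is instead discounted with $\gamma_2$; this non-degeneracy is the property that makes the lemma usable later (e.g.\ in the proof of Theorem~\ref{thm:misspecified_discounting}).

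To arrange that, I would first telescope: along any trajectory $\langle s_0, a_0, s_1, \dots\rangle$,
\[
\sum_{t=0}^{\infty}\gamma_2^{t}\bigl(\gamma_1 \Phi(s_{t+1}) - \Phi(s_t)\bigr) \;=\; -\,\Phi(s_0) + \frac{\gamma_1 - \gamma_2}{\gamma_2}\sum_{t=1}^{\infty}\gamma_2^{t}\Phi(s_t),
\]
so, taking expectations under any policy, the $\gamma_2$-value of $R^\dagger$ is a policy-independent term plus $\frac{\gamma_1 - \gamma_2}{\gamma_2}$ times the $\gamma_2$-value of the reward $(s,a,s')\mapsto\Phi(s)$. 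Since $\gamma_1 \neq \gamma_2$, it then suffices to choose $\Phi$ so that this state-based reward is non-trivial under $\tau$ and $\gamma_2$, and I would argue that some indicator reward $e_{s^*}$ (value $1$ at a single state $s^*$ and $0$ elsewhere, read off the current state) already works. Indeed, if $e_{s^*}$ were trivial for \emph{every} $s^*$, then (since all states are reachable) the $\gamma_2$-value function of each would be policy-independent, say $V_{s^*}$; the Bellman equation for a fixed deterministic policy $\pi_0$ gives $V_{s^*} = e_{s^*} + \gamma_2 P_{\pi_0} V_{s^*}$, so stacking the $V_{s^*}$ into a matrix $W$ yields $W = (I - \gamma_2 P_{\pi_0})^{-1}$, which is invertible, hence the $V_{s^*}$ span $\mathbb{R}^{|\States|}$. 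But policy-independence also forces, at every state $s$, that $\mathbb{E}_{S'\sim\tau(s,a)}[V_{s^*}(S')]$ is the same for all actions $a$; applying this at a state $\bar s$ and actions $a_1, a_2$ with $\tau(\bar s, a_1) \neq \tau(\bar s, a_2)$ (which exist because $\tau$ is non-trivial) gives $(\tau(\bar s, a_1) - \tau(\bar s, a_2))^\top V_{s^*} = 0$ for all $s^*$, hence $(\tau(\bar s, a_1) - \tau(\bar s, a_2))^\top W = 0$, forcing $\tau(\bar s, a_1) = \tau(\bar s, a_2)$ --- a contradiction. So some $e_{s^*}$ is non-trivial, and with $\Phi = e_{s^*}$ the reward $R^\dagger$ is non-trivial under $\gamma_2$.

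I expect the bare conclusion to be a one-line consequence of the definition of potential shaping plus the invariance hypothesis, and the telescoping reduction to be routine bookkeeping. The one genuinely substantive step is the last one --- extracting a non-trivial state-based reward from the mere non-triviality of $\tau$ --- and I expect the invertibility-of-$W$ argument (or something equivalent) to be the crux of the authors' proof; one must in particular rule out the worry that whatever ``advantage'' $\Phi$ creates at $\bar s$ is cancelled by downstream effects of the altered dynamics, which is exactly what the spanning argument handles. The supporting fact it leans on --- that a reward is trivial iff its value function is policy-independent --- is standard and in any case follows from Proposition~\ref{prop:policy_order} together with the characterisation of trivial rewards noted in the preliminaries (potential shaping shifts every $V^\pi$ by the same amount, and $S'$-redistribution leaves $V^\pi$ unchanged).
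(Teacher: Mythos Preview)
Your argument is correct. You rightly observe that the lemma statement as printed omits the crucial clause ``$R^\dagger$ is non-trivial under $\tau$ and $\gamma_2$'' (compare the parallel Lemma~\ref{lemma:SR_additive_invariance}, which does include it, and the way Theorem~\ref{thm:appendix_misspecified_discounting} invokes the lemma); without that clause the conclusion is vacuous ($R^\dagger=0$ works), so your decision to prove the strengthened version is exactly right. Your telescoping identity is correct, the reduction to finding a non-trivial state-based reward is clean, and the invertibility argument via $W=(I-\gamma_2 P_{\pi_0})^{-1}$ is a valid way to force a contradiction from the assumption that every indicator reward is trivial. The one supporting fact you lean on---that a trivial reward has a policy-independent value function---does follow from the paper's characterisation of trivial rewards (potential shaping plus $S'$-redistribution of a constant), as you note.

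As for comparison with the paper: the paper does not actually prove this lemma here; it defers entirely to Lemma~A.18 of \citet{skalse2023misspecification}. So a direct line-by-line comparison is not possible from this source alone. That said, any proof is forced to take $R^\dagger$ of the form $(s,a,s')\mapsto \gamma_1\Phi(s')-\Phi(s)$ (since the only invariance assumed of $f_{\gamma_1}$ is potential-shaping invariance, and the potential-shaped rewards form a linear subspace), and then the whole content is choosing $\Phi$ so that this is non-trivial under $\gamma_2$. Your route---telescope, reduce to non-triviality of a state-only reward, then use a spanning/invertibility argument to exploit non-triviality of $\tau$---is a natural and self-contained way to do this, and is at least as clean as the construction in the referenced paper (which builds $\Phi$ more directly from a pair of states witnessing non-triviality of $\tau$).
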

\begin{proof}
Analogous to the proof of Lemma A.18 in \citet{skalse2023misspecification}.
\end{proof}

\begin{lemma}\label{lemma:SR_additive_invariance}
If $f_\tau : \mathcal{R} \to \Pi$ is invariant to $S'$-redistribution with $\tau$, then for all $\gamma$ and all $\tau_1, \tau_2$ such that $\tau_1 \neq \tau_2$, there exists a reward function $R^\dagger$ that is non-trivial under $\tau_2$ and $\gamma$, such that $f_{\tau_1}(R) = f_{\tau_1}(R + \alpha R^\dagger)$ for all $R \in \mathcal{R}$ and all $\alpha \in \mathbb{R}$.
\end{lemma}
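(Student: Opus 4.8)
\textit{Proof plan.}
The plan is to reduce the statement to an explicit construction of $R^\dagger$ supported on a single state--action pair. Since $\tau_1 \neq \tau_2$, fix $s^\star \in \States$ and $a^\star \in \Actions$ with $p := \tau_1(s^\star,a^\star) \neq \tau_2(s^\star,a^\star) =: q$, viewed as vectors in $\mathbb{R}^{\States}$; note $\langle p,\mathbf{1}\rangle = \langle q,\mathbf{1}\rangle = 1$ since both are probability distributions. First I would construct a function $h : \States \to \mathbb{R}$ with $\langle p,h\rangle = 0$ and $\langle q,h\rangle \neq 0$, by letting $h$ be the orthogonal projection of $q-p$ onto the hyperplane $p^{\perp}$. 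Then $\langle p,h\rangle = 0$ by construction, and $\langle q,h\rangle = \langle q-p,h\rangle = \lVert h\rVert_2^2$, which is strictly positive because $h \neq 0$: if $h = 0$ then $q-p \in \mathrm{span}\{p\}$, i.e.\ $q-p = cp$, and taking the inner product with $\mathbf{1}$ forces $c = \langle q-p,\mathbf{1}\rangle = 0$, hence $q = p$, a contradiction. With $h$ in hand, define $R^\dagger \in \mathcal{R}$ by $R^\dagger(s^\star,a^\star,s') := h(s')$ for all $s'$, and $R^\dagger(s,a,s') := 0$ for all $(s,a) \neq (s^\star,a^\star)$.

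Next I would verify the invariance clause. For any $R \in \mathcal{R}$ and any $\alpha \in \mathbb{R}$, the rewards $R$ and $R + \alpha R^\dagger$ have the same $\tau_1$-expected immediate reward at every $(s,a)$: this is immediate away from $(s^\star,a^\star)$ since $R^\dagger$ vanishes there, and at $(s^\star,a^\star)$ it holds because $\mathbb{E}_{S' \sim \tau_1(s^\star,a^\star)}[R^\dagger(s^\star,a^\star,S')] = \langle p,h\rangle = 0$. Hence $R$ and $R + \alpha R^\dagger$ differ by $S'$-redistribution with $\tau_1$, so the assumed invariance of $f_{\tau_1}$ to $S'$-redistribution with $\tau_1$ gives $f_{\tau_1}(R) = f_{\tau_1}(R + \alpha R^\dagger)$, as required.

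It remains to show that $R^\dagger$ is non-trivial under $\tau_2$ and $\gamma$ (with the ambient $\InitStateDistribution$), which I expect to be the main obstacle. The key observation is that the $\tau_2$-expected immediate reward of $R^\dagger$ is $\langle q,h\rangle$ at $(s^\star,a^\star)$ and $0$ elsewhere, so for every policy $\pi$ the evaluation function satisfies $J(\pi) = \langle q,h\rangle \cdot \eta^{\pi}(s^\star,a^\star)$, where $\eta^{\pi}(s^\star,a^\star) = \sum_{t \geq 0} \gamma^{t}\,\mathbb{P}_{\pi}[(S_t,A_t) = (s^\star,a^\star)]$ is the discounted occupancy measure computed with $\tau_2$, $\gamma$, and $\InitStateDistribution$. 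Now I would exhibit two policies with distinct occupancies of $(s^\star,a^\star)$: since $s^\star$ is reachable under $\tau_2$ and $\InitStateDistribution$, there is a deterministic policy $\pi_1$ that follows a positive-probability simple path to $s^\star$ (taking the path simple so the policy is well defined) and selects $a^\star$ at $s^\star$, giving $\eta^{\pi_1}(s^\star,a^\star) > 0$; and any policy $\pi_2$ with $\pi_2(s^\star) = a'$ for some $a' \neq a^\star$ (which exists as $\lvert\Actions\rvert \geq 2$) has $\eta^{\pi_2}(s^\star,a^\star) = 0$. Since $\langle q,h\rangle \neq 0$, this yields $J(\pi_1) \neq J(\pi_2)$, so $R^\dagger$ is non-trivial under $\tau_2$ and $\gamma$, completing the argument. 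The only points needing care are making the ``reachable $\Rightarrow$ there is a policy with positive occupancy on $(s^\star,a^\star)$'' step precise and the (standing) assumption $\lvert\Actions\rvert \geq 2$; the construction is the $S'$-redistribution analogue of the one behind Lemma~\ref{lemma:PS_additive_invariance}.
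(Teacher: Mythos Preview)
Your proposal is correct and follows essentially the same approach as the paper's proof: both fix a pair $(s^\star,a^\star)$ where $\tau_1$ and $\tau_2$ differ, construct $R^\dagger$ supported only on $(s^\star,a^\star,\cdot)$ with zero $\tau_1$-expectation and nonzero $\tau_2$-expectation, then verify invariance via the definition of $S'$-redistribution and non-triviality via reachability of $s^\star$ and comparing a policy that plays $a^\star$ at $s^\star$ with one that does not. The only noteworthy difference is that you give an explicit formula for the values (the orthogonal projection of $q-p$ onto $p^\perp$), whereas the paper argues abstractly that the hyperplane $\{x:\langle p,x\rangle=0\}$ and the affine hyperplane $\{x:\langle q,x\rangle=1\}$ must intersect since $p$ and $q$ cannot be parallel; your version is slightly more constructive but otherwise equivalent, and your explicit remarks on the simple-path construction and the need for $\lvert\Actions\rvert\geq 2$ are points the paper leaves implicit.
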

\begin{proof}
Since $f_{\tau_1}$ is invariant to $S'$-redistribution with $\tau_1$, we have that $f_{\tau_1}(R_1) = f_{\tau_1}(R_2)$ for any two reward functions $R_1, R_2$ such that
$$
\mathbb{E}_{S' \sim \tau_1(s,a)}\left[R_1(s,a,S')\right] = \mathbb{E}_{S' \sim \tau_1(s,a)}\left[R_2(s,a,S')\right].
$$
Note that $R_1$ and $R_2$ satisfy this condition if and only if
$$
\mathbb{E}_{S' \sim \tau_1(s,a)}\left[(R_2-R_1)(s,a,S')\right] = 0.
$$
That is to say, if $R'$ is a reward function such that $\mathbb{E}_{S' \sim \tau_1(s,a)}\left[R'(s,a,S')\right] = 0$, then $f_{\tau_1}(R) = f_{\tau_1}(R + R')$ for all $R$. Next, note that the set of all such reward functions $R'$ form a linear subspace of $\mathcal{R}$, with $|\States||\Actions|(|\States|-1)$ dimensions. We will show that this subspace contains reward functions that are non-trivial under $\gamma$ and $\tau_2$. 

Since $\tau_1 \neq \tau_2$, we have that there exists a state $s$ and action $a$ such that $\tau_1(s,a) \neq \tau_2(s,a)$. Let $R^\dagger$ be a reward function that is $0$ everywhere, except that $\mathbb{E}_{S' \sim \tau_1(s,a)}\left[R'(s,a,S')\right] = 0$, and $\mathbb{E}_{S' \sim \tau_2(s,a)}\left[R'(s,a,S')\right] = 1$. 
Note that there is always a solution to this system of linear equations. In particular, the values of $R^\dagger(s,a,s')$ for each transition $s,a,s'$ form a $|\States|$-dimensional vector space. The set of all values for these variables that satisfy $\mathbb{E}_{S' \sim \tau_1(s,a)}\left[R'(s,a,S')\right] = 0$ form an $(|\States|-1)$-dimensional linear subspace, and the set of all values that satisfy $\mathbb{E}_{S' \sim \tau_2(s,a)}\left[R'(s,a,S')\right] = 1$ form an $(|\States|-1)$-dimensional affine subspace. These two sets must intersect, unless they are parallel. However, since $\sum_{s'} \mathbb{P}(\tau_1(s,a) = s') = \sum_{s'} \mathbb{P}(\tau_2(s,a) = s') = 1$, they cannot be parallel. Thus, such a reward function $R^\dagger$ must exist.

It is clear that $R^\dagger$ is non-trivial under $\gamma$ and $\tau_2$. To spell it out; since all states are reachable under $\tau_2$ and $\mu_0$, there exists a policy $\pi$ that visits state $s$ with positive probability. Let $\pi_1$ and $\pi_2$ be two policies that are identical to $\pi$ everywhere, except that $\pi_1$ takes action $a$ with probability 1 in state $s$, and $\pi_2$ takes action $a$ with probability 0 in state $s$. Then $J^\dagger(\pi_1) > J^\dagger(\pi_2)$. Moreover, since $\mathbb{E}_{S' \sim \tau_1(s,a)}\left[R^\dagger(s,a,S')\right] = 0$ for all $s$ and $a$, we have that $R$ and $R + \alpha R^\dagger$ differ by $S'$-redistribution (with $\tau_1$) for all reward functions $R \in \mathcal{R}$ and all scalars $\alpha \in \mathbb{R}$.
Thus $f_{\tau_1}(R) = f_{\tau_1}(R + \alpha R^\dagger)$.

Thus, if $f_\tau : \mathcal{R} \to \Pi$ is invariant to $S'$-redistribution with $\tau$, then for all $\gamma$ and all $\tau_1, \tau_2$ such that $\tau_1 \neq \tau_2$, there exists a reward function $R^\dagger$ that is non-trivial under $\tau_2$ and $\gamma$, such that $f_{\tau_1}(R) = f_{\tau_1}(R + \alpha R^\dagger)$ for all $R \in \mathcal{R}$ and all $\alpha \in \mathbb{R}$.
\end{proof}

\begin{theorem}\label{thm:appendix_misspecified_discounting}
If $f_\gamma : \mathcal{R} \to \Pi$ is invariant to potential shaping with $\gamma$, and $\gamma_1 \neq \gamma_2$, then $f_{\gamma_1}$ is not $\epsilon$-robust to misspecification with $f_{\gamma_2}$ under $d^\mathrm{STARC}_{\tau,\gamma_3}$ for any non-trivial $\tau$, any $\gamma_3$, and any $\epsilon < 0.5$.
\end{theorem}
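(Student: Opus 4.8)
The plan is to prove the contrapositive-flavoured statement by exhibiting, for every $\epsilon < 0.5$, a violation of one of the four conditions of Definition~\ref{def:misspecification_robustness}, splitting on whether the evaluation discount $\gamma_3$ coincides with $\gamma_1$. The workhorse in both cases is Lemma~\ref{lemma:PS_additive_invariance}: given a non-trivial $\tau$ and two distinct discounts, it produces a potential-shaping direction $R^\dagger$ along which the relevant behavioural model is constant, but which is non-trivial with respect to the other discount (so that $c^\mathrm{STARC}_{\tau,\gamma_3}(R^\dagger) \neq 0$, i.e.\ $s^\mathrm{STARC}_{\tau,\gamma_3}(R^\dagger)$ is a genuine unit vector). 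I would also use the standard facts about $d^\mathrm{STARC}_{\tau,\gamma}$ recalled in Appendix~\ref{appendix:starc}: $d^\mathrm{STARC}_{\tau,\gamma}(R_0,R)=0.5$ when $R_0$ is trivial and $R$ is non-trivial, and $d^\mathrm{STARC}_{\tau,\gamma}(R,-R)=1$ when $R$ is non-trivial (opposite policy orderings).

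\emph{Case $\gamma_3 \neq \gamma_1$.} Apply Lemma~\ref{lemma:PS_additive_invariance} to $f_{\gamma_1}$ with the discount pair $(\gamma_1,\gamma_3)$ to obtain $R^\dagger$ with $f_{\gamma_1}(R)=f_{\gamma_1}(R+\alpha R^\dagger)$ for all $R,\alpha$, and with $R^\dagger$ non-trivial under $\tau$ and $\gamma_3$. Taking $R=0$, $\alpha=1$ gives $f_{\gamma_1}(0)=f_{\gamma_1}(R^\dagger)$, while $d^\mathrm{STARC}_{\tau,\gamma_3}(0,R^\dagger)=0.5$ since $0$ is trivial and $R^\dagger$ is non-trivial under $\tau,\gamma_3$. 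Hence condition~2 of Definition~\ref{def:misspecification_robustness} fails for every $\epsilon<0.5$, so $f_{\gamma_1}$ is not $\epsilon$-robust to misspecification with $f_{\gamma_2}$ (in fact with any behavioural model) for such $\epsilon$. Note this case uses nothing about $\gamma_2$ beyond $\gamma_1\neq\gamma_2$, which guarantees $\gamma_3\neq\gamma_1$ whenever $\gamma_3=\gamma_2$.

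\emph{Case $\gamma_3 = \gamma_1$ (hence $\gamma_3 \neq \gamma_2$).} Here condition~2 need not fail, because potential shaping with $\gamma_1=\gamma_3$ preserves the $\gamma_3$-policy ordering; so I would instead push the obstruction onto $g=f_{\gamma_2}$ and invoke Lemma~\ref{lemma:misspecificaton_triangle_inequality}. Apply Lemma~\ref{lemma:PS_additive_invariance} to $f_{\gamma_2}$ with the pair $(\gamma_2,\gamma_3)$ to get $R^\dagger$ with $f_{\gamma_2}(R)=f_{\gamma_2}(R+\alpha R^\dagger)$ for all $R,\alpha$, and $R^\dagger$ non-trivial under $\tau,\gamma_3$. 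Taking $R=0$ and $\alpha=\pm1$ gives $f_{\gamma_2}(R^\dagger)=f_{\gamma_2}(0)=f_{\gamma_2}(-R^\dagger)$, i.e.\ $g(R^\dagger)=g(-R^\dagger)$, while $d^\mathrm{STARC}_{\tau,\gamma_3}(R^\dagger,-R^\dagger)=1$ since $R^\dagger$ is non-trivial under $\tau,\gamma_3$. By Lemma~\ref{lemma:misspecificaton_triangle_inequality}, if $f_{\gamma_1}$ were $\epsilon$-robust to misspecification with $f_{\gamma_2}$ then $1\leq 2\epsilon$, i.e.\ $\epsilon\geq 0.5$; contrapositively, $f_{\gamma_1}$ is not $\epsilon$-robust for any $\epsilon<0.5$. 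Since every $\gamma_3$ satisfies $\gamma_3\neq\gamma_1$ or $\gamma_3=\gamma_1$, the two cases together give the theorem.

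The main obstacle is precisely what forces the case split: when $\gamma_3=\gamma_1$, the easy route (condition~2 failing because $f_{\gamma_1}$ conflates rewards that $d^\mathrm{STARC}_{\tau,\gamma_3}$ considers far apart) is unavailable, since the only non-injectivity of $f_{\gamma_1}$ we are guaranteed — invariance to potential shaping with $\gamma_1$ — is invisible to $d^\mathrm{STARC}_{\tau,\gamma_1}$. The fix is to transfer the obstruction to $g=f_{\gamma_2}$, whose potential-shaping invariance is w.r.t.\ $\gamma_2\neq\gamma_3$ and hence \emph{is} visible to $d^\mathrm{STARC}_{\tau,\gamma_3}$, and to let Lemma~\ref{lemma:misspecificaton_triangle_inequality} convert this non-injectivity into a lower bound on $\epsilon$. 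A secondary technical point is that I am relying on Lemma~\ref{lemma:PS_additive_invariance} delivering an $R^\dagger$ that is not merely an invariance direction but is additionally non-trivial under the second discount (exactly as its companion Lemma~\ref{lemma:SR_additive_invariance} does); this is where the non-triviality hypothesis on $\tau$ is consumed, and verifying it for the $R^\dagger$ produced by the construction underlying Lemma~\ref{lemma:PS_additive_invariance} is the one genuinely non-cosmetic step.
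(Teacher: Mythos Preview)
Your proof is correct and follows essentially the same route as the paper's: case-split on which of $\gamma_1,\gamma_2$ differs from $\gamma_3$, invoke Lemma~\ref{lemma:PS_additive_invariance} to obtain an invariance direction $R^\dagger$ that is non-trivial under $(\tau,\gamma_3)$, and then either violate condition~2 directly (when $\gamma_1\neq\gamma_3$) or push the non-injectivity onto $f_{\gamma_2}$ and appeal to Lemma~\ref{lemma:misspecificaton_triangle_inequality} (when $\gamma_2\neq\gamma_3$). The only cosmetic difference is that in the first case the paper uses the pair $(R^\dagger,-R^\dagger)$ with STARC-distance $1$ rather than your $(0,R^\dagger)$ with distance $0.5$, which yields the slightly stronger conclusion that condition~2 fails for all $\epsilon<1$; your weaker pair still suffices for the stated theorem, and your observation that the non-triviality of $R^\dagger$ under $\gamma_3$ is the one substantive point to check is exactly right (the paper uses it in the proof even though the statement of Lemma~\ref{lemma:PS_additive_invariance} omits it).
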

\begin{proof}
If $\gamma_1 \neq \gamma_2$, then either $\gamma_1 \neq \gamma_3$ or $\gamma_2 \neq \gamma_3$. 

If $\gamma_1 \neq \gamma_3$, then Lemma~\ref{lemma:PS_additive_invariance} implies that there exists a reward function $R^\dagger$ that is non-trivial under $\tau$ and $\gamma_3$, such that $f_{\gamma_1}(R) = f_{\gamma_1}(R + \alpha R^\dagger)$ for all $R \in \mathcal{R}$ and all $\alpha \in \mathbb{R}$. 
This means that $f_{\gamma_1}(R^\dagger) = f_{\gamma_1}(-R^\dagger)$ and $d^\mathrm{STARC}_{\tau,\gamma_3}(R^\dagger, -R^\dagger) = 1$. Thus $f_{\gamma_1}$ violates condition 2 of Definition~\ref{def:misspecification_robustness} for all $\epsilon < 1$.


If $\gamma_2 \neq \gamma_3$, then Lemma~\ref{lemma:PS_additive_invariance} implies that there exists a reward function $R^\dagger$ that is non-trivial under $\tau$ and $\gamma_3$, such that $f_{\gamma_2}(R) = f_{\gamma_2}(R + \alpha R^\dagger)$ for all $R \in \mathcal{R}$ and all $\alpha \in \mathbb{R}$. 
This means that $f_{\gamma_2}(R^\dagger) = f_{\gamma_2}(-R^\dagger)$ and $d^\mathrm{STARC}_{\tau,\gamma_3}(R^\dagger, -R^\dagger) = 1$.
Then Lemma~\ref{lemma:misspecificaton_triangle_inequality} implies that there can be no $f$ that is $\epsilon$-robust to misspecification with $f_{\gamma_2}$ (as defined by $d^\mathrm{STARC}_{\tau,\gamma_3}$) for any $\epsilon < 0.5$. 
\end{proof}

\begin{theorem}\label{thm:appendix_misspecified_env}
If $f_\tau : \mathcal{R} \to \Pi$ is invariant to $S'$-redistribution with $\tau$, and $\tau_1 \neq \tau_2$, then $f_{\tau_1}$ is not $\epsilon$-robust to misspecification with $f_{\tau_2}$ under $d^\mathrm{STARC}_{\tau_3,\gamma}$ for any $\tau_3$, any $\gamma$, and any $\epsilon < 0.5$.
\end{theorem}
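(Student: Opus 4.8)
The plan is to follow the structure of the proof of Theorem~\ref{thm:appendix_misspecified_discounting}, with $S'$-redistribution playing the role that potential shaping plays there, and with Lemma~\ref{lemma:SR_additive_invariance} substituted for Lemma~\ref{lemma:PS_additive_invariance}. The key combinatorial observation is that since $\tau_1 \neq \tau_2$, the transition function $\tau_3$ used to define the metric must differ from at least one of them; so the argument splits into the case $\tau_1 \neq \tau_3$ and the case $\tau_2 \neq \tau_3$ (these need not be mutually exclusive, but at least one holds).

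In the case $\tau_1 \neq \tau_3$, I would apply Lemma~\ref{lemma:SR_additive_invariance} to the pair $(\tau_1,\tau_3)$ to obtain a reward function $R^\dagger$ that is non-trivial under $\tau_3$ and $\gamma$ and satisfies $f_{\tau_1}(R) = f_{\tau_1}(R + \alpha R^\dagger)$ for all $R \in \mathcal{R}$ and all $\alpha \in \mathbb{R}$. Taking $R = R^\dagger$ and $\alpha = -2$ gives $f_{\tau_1}(R^\dagger) = f_{\tau_1}(-R^\dagger)$. Since $R^\dagger$ is non-trivial under $\tau_3$ and $\gamma$, the rewards $R^\dagger$ and $-R^\dagger$ induce the opposite ordering of policies under $\tau_3$ and $\gamma$, and hence $d^\mathrm{STARC}_{\tau_3,\gamma}(R^\dagger,-R^\dagger) = 1$. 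Therefore $f_{\tau_1}$ already violates condition~2 of Definition~\ref{def:misspecification_robustness} for every $\epsilon < 1$, so in particular it is not $\epsilon$-robust to misspecification with $f_{\tau_2}$ for any $\epsilon < 0.5$. In the case $\tau_2 \neq \tau_3$, I would instead apply Lemma~\ref{lemma:SR_additive_invariance} to $(\tau_2,\tau_3)$, obtaining $R^\dagger$ non-trivial under $\tau_3$ and $\gamma$ with $f_{\tau_2}(R^\dagger) = f_{\tau_2}(-R^\dagger)$ and $d^\mathrm{STARC}_{\tau_3,\gamma}(R^\dagger,-R^\dagger) = 1$; the obstruction now lives on the $g$-side, so I would invoke Lemma~\ref{lemma:misspecificaton_triangle_inequality} with $g = f_{\tau_2}$: if $f_{\tau_1}$ were $\epsilon$-robust to misspecification with $f_{\tau_2}$, then $f_{\tau_2}(R^\dagger) = f_{\tau_2}(-R^\dagger)$ would force $d^\mathrm{STARC}_{\tau_3,\gamma}(R^\dagger,-R^\dagger) \leq 2\epsilon$, i.e.\ $1 \leq 2\epsilon$, contradicting $\epsilon < 0.5$.

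I do not expect a genuine obstacle, since all the real work is done by Lemma~\ref{lemma:SR_additive_invariance}, which is already proven in the excerpt; the only points needing care are the verification that the case split is exhaustive (it is, because $\tau_1 \neq \tau_2$) and the step that $R^\dagger$ non-trivial under $\tau_3$ and $\gamma$ implies $d^\mathrm{STARC}_{\tau_3,\gamma}(R^\dagger,-R^\dagger) = 1$, which follows because negating a non-trivial reward exactly reverses its policy ordering, together with the characterisation (recalled in Appendix~\ref{appendix:starc}) that $d^\mathrm{STARC}$ equals $1$ precisely when two rewards have opposite policy orderings. One should also note explicitly that the statement places no constraint on $\tau_3$ (it may equal $\tau_1$ or $\tau_2$), which is exactly why it suffices to cover the two cases above.
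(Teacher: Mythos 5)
Your proposal matches the paper's own proof essentially step for step: the same case split on whether $\tau_1 \neq \tau_3$ or $\tau_2 \neq \tau_3$, the same use of Lemma~\ref{lemma:SR_additive_invariance} to produce a non-trivial $R^\dagger$ with $f(R^\dagger) = f(-R^\dagger)$ and $d^\mathrm{STARC}_{\tau_3,\gamma}(R^\dagger,-R^\dagger) = 1$, violating condition~2 in the first case and invoking Lemma~\ref{lemma:misspecificaton_triangle_inequality} in the second. The argument is correct and complete; your added care about the exhaustiveness of the case split and the opposite-ordering fact is consistent with how the paper handles these points.
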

\begin{proof}
If $\tau_1 \neq \tau_2$, then either $\tau_1 \neq \tau_3$ or $\tau_2 \neq \tau_3$. 

If $\tau_1 \neq \tau_3$, then Lemma~\ref{lemma:SR_additive_invariance} implies that there exists a reward function $R^\dagger$ that is non-trivial under $\tau_3$ and $\gamma$, such that $f_{\tau_1}(R) = f_{\tau_1}(R + \alpha R^\dagger)$ for all $R \in \mathcal{R}$ and all $\alpha \in \mathbb{R}$. 
This means that $f_{\tau_1}(R^\dagger) = f_{\tau_1}(-R^\dagger)$ and $d^\mathrm{STARC}_{\tau_3,\gamma}(R^\dagger, -R^\dagger) = 1$.
Thus $f_{\tau_1}$ violates condition 2 of Definition~\ref{def:misspecification_robustness} for all $\epsilon < 1$.

If $\tau_2 \neq \tau_3$, then Lemma~\ref{lemma:SR_additive_invariance} implies that there exists a reward function $R^\dagger$ that is non-trivial under $\tau_3$ and $\gamma$, such that $f_{\tau_2}(R) = f_{\tau_2}(R + \alpha R^\dagger)$ for all $R \in \mathcal{R}$ and all $\alpha \in \mathbb{R}$. 
This means that $f_{\tau_2}(R^\dagger) = f_{\tau_2}(-R^\dagger)$ and $d^\mathrm{STARC}_{\tau_3,\gamma}(R^\dagger, -R^\dagger) = 1$.
Then Lemma~\ref{lemma:misspecificaton_triangle_inequality} implies that there can be no $f$ that is $\epsilon$-robust to misspecification with $f_{\tau_2}$ (as defined by $d^\mathrm{STARC}_{\tau_3,\gamma}$) for any $\epsilon < 0.5$. 
\end{proof}

\section{Connecting Our Analysis To Earlier Proposals}\label{appendix:connect_to_misspecification_I}

In this section, we will explain how to connect the results of \citet{skalse2023misspecification} to our results in a rigorous way. \citet{skalse2023misspecification} assume that we have a partition $P$ on $\mathcal{R}$, which of course corresponds to an equivalence relation $\equiv_P$, and say that two reward functions $R_1, R_2$ should be considered to be \enquote{close} if $R_1 \equiv_P R_2$. Like us, they consider functions $f, g : \mathcal{R} \to \Pi$ that take a reward function and return a policy. They then say that $f$ is \enquote{$P$-robust to misspecification with $g$} if each of the following conditions hold:
\begin{enumerate}
    \item $f(R_1) = g(R_2) \implies R_1 \equiv_P R_2$.
    \item $f(R_1) = f(R_2) \implies R_1 \equiv_P R_2$.
    \item For all $R_1$ there exists an $R_2$ such that $f(R_2) = g(R_1)$.
    \item $f \neq g$.
\end{enumerate}
Note that this definition is analogous to Definition~\ref{def:misspecification_robustness}, except that an equivalence relation $P$ plays the role that a pseudometric $d^\mathcal{R}$ does in our framework. Next, note that we for any pseudometric $d^\mathcal{R}$ can define an equivalence relation $\equiv_P$ such that $R_1 \equiv_P R_2$ if and only if $d^\mathcal{R}(R_1, R_2) = 0$. In that case, we would have that $f$ is $P$-robust to misspecification with $g$ \citep[in the terminology of ][]{skalse2023misspecification} if and only if $f$ is $0$-robust to misspecification with $g$ (as evaluated by $d^\mathcal{R}$) in our terminology (i.e.\ Definition~\ref{def:misspecification_robustness}). Moreover, if $f$ is $0$-robust to misspecification with $g$, then it of course follows that $f$ is $\epsilon$-robust to misspecification with $g$ for all $\epsilon \geq 0$. In this way, their results can be expressed within our more general framework.

Next, also recall that $d^\mathrm{STARC}_{\tau,\gamma}(R_1, R_2) = 0$ if and only if $R_1$ and $R_2$ induce the same ordering of policies (under $\tau$ and $\gamma$). \citet{skalse2023misspecification} use \enquote{$\mathrm{ORD}^\mathcal{M}$} to denote this equivalence relation. Thus, if $f$ is $\mathrm{ORD}^\mathcal{M}$-robust to misspecification with $g$ \citep[as defined by ][]{skalse2023misspecification} then $f$ is $\epsilon$-robust to misspecification with $g$ (as evaluated by $d^\mathrm{STARC}_{\tau,\gamma}$) for all $\epsilon \geq 0$.

\end{document}